\newcommand{\remove}[1]{}
\renewcommand{\cref}{\Cref}
\DeclareSymbolFont{AMSb}{U}{msb}{m}{n}
\DeclareMathSymbol{\N}{\mathbin}{AMSb}{"4E}
\DeclareMathSymbol{\Z}{\mathbin}{AMSb}{"5A}
\DeclareMathSymbol{\R}{\mathbin}{AMSb}{"52}
\DeclareMathSymbol{\Q}{\mathbin}{AMSb}{"51}
\DeclareMathSymbol{\erert}{\mathbin}{AMSb}{"50}
\DeclareMathSymbol{\I}{\mathbin}{AMSb}{"49}
\DeclareMathSymbol{\C}{\mathbin}{AMSb}{"43}
\renewcommand\paragraph{\@startsection{paragraph}{5}{\z@}%
	{3.25ex \@plus1ex \@minus.2ex}%
	{-1em}%
	{\normalfont\normalsize\bfseries}}
\newcommand{\Alg}{\AAA}
\newcommand{\AlgHalfSpace}{\AAA_{\rm LearnHalfSpace}}
\newcommand{\AlgRecConcave}{\AAA_{\rm RecConcave}}
\newcommand{\AlgFindDeepPoint}{\AAA_{\rm FindDeepPoint}}
\newcommand{\AlgOptimizeHighDimFunc}{\AAA_{\rm OptimizeHighDimFunc}}
\newcommand{\AAA}{\mathcal A}
\newcommand{\eps}{\varepsilon}
\newcommand{\z}{\mathrm{z}}
\newcommand{\error}{{\rm error}}
\newcommand{\db}{S}
\newcommand{\size}[1]{\left|#1\right|}
\newcommand{\poly}{\mathop{\rm poly}}
\newcommand{\hs}{\operatorname{\rm hs}}
\newcommand{\hp}{\operatorname{\rm hp}}
\newcommand{\halfspace}{\operatorname{\tt HALFSPACE}}
\newcommand{\set}[1]{\left\{ #1 \right\}}
\def\1{\operatorname*{\mathbb{1}}}
\def\Q{\operatorname*{\mathbb{Q}}}
\def\poly{\mathop{\rm{poly}}\nolimits}
\newcommand{\pt}[1]{{\bf #1}}
\newtheorem{theorem}{Theorem}[section]
\newtheorem*{theorem*}{Theorem}
\newtheorem{lemma}[theorem]{Lemma}
\newtheorem*{lemma*}{Lemma}
\newtheorem{claim}[theorem]{Claim}
\newtheorem{proposition}[theorem]{Proposition}
\newtheorem{observation}[theorem]{Observation}
\newtheorem{requirement}[theorem]{Requirement}
\newtheorem{question}[theorem]{Question}
\theoremstyle{definition}
\newtheorem{definition}[theorem]{Definition}
\newtheorem{example}[theorem]{Example}
\newtheorem{fact}[theorem]{Fact}
\newcommand{\polylog}{{\rm polylog}}
\newcommand{\paren}[1]{\left(#1\right)}
\newcommand{\iseg}[1]{[[{#1}]]}
\newcommand{\wlg} {without loss of generality\xspace}
\newcommand{\px} {\pt{x}}
\newcommand{\py} {\pt{y}}
\newcommand{\pz} {\pt{z}}
\newcommand{\pa} {\pt{a}}
\newcommand{\pA} {\pt{A}}
\newcommand{\eqdef}{:=}
\newcommand{\iprod}[1]{\langle #1 \rangle}
\newcommand{\val}{{\rm val}}
\newcommand{\depth}{{\rm depth}}
\newcommand{\cdepth}{{\rm cdepth}}
\newcommand{\chull}{{\rm ConvexHull}}
\newcommand{\tX}{\tilde{X}}
\newcommand{\tx}{\tilde{x}}
\newcommand{\tY}{\tilde{Y}}
\newcommand{\tcX}{\tilde{\cX}}
\newcommand{\dist}{{\rm dist}}
\newcommand{\Conv}{{\rm Conv}}
\newcommand{\cH}{{\cal{H}}}
\newcommand{\cA}{\mathcal{A}}
\renewcommand{\cS}{\mathcal{S}}
\newcommand{\cS}{\mathcal{S}}
\newcommand{\cV}{\mathcal{V}}
\newcommand{\cD}{\mathcal{D}}
\newcommand{\cT}{\mathcal{T}}
\renewcommand{\cP}{\mathcal{P}}
\newcommand{\cP}{\mathcal{P}}
\newcommand{\cF}{\mathcal{F}}
\newcommand{\cE}{\mathcal{E}}
\newcommand{\cC}{\mathcal{C}}
\newcommand{\cX}{\mathcal{X}}
\DeclarePairedDelimiter{\norm}{\lVert}{\rVert}
\renewcommand{\Pr}{{\mathrm {Pr}}}
\newcommand{\pr}[1]{\Pr\left[#1\right]}
\newcommand{\ppr}[2]{\Pr_{#1}\left[#2\right]}
\newcommand{\authnote}[2]{{\bf [{\color{red} #1's Note:} {\color{blue} #2}]}}
\newcommand{\authnote}[2]{}
\newcommand{\Enote}[1]{\authnote{Eliad}{#1}}
\crefname{proposition}{proposition}{Propositions}
\title{Private Learning of Halfspaces: Simplifying the Construction and Reducing the Sample Complexity}
\author{%
	Haim Kaplan\thanks{Tel Aviv University and Google Research. \texttt{haimk@tau.ac.il}. Partially supported by Israel Science Foundation (grant 1595/19), German-Israeli Foundation (grant 1367/2017), and the Blavatnik Family Foundation.}
	\and
	Yishay Mansour\thanks{Tel Aviv University and Google Research. \texttt{mansour.yishay@gmail.com}. This project has received funding from the European Research Council (ERC) under the European Union’s Horizon 2020 research and innovation program (grant agreement No. 882396), and by the Israel Science Foundation (grant number 993/17).}
	\and
	Uri Stemmer\thanks{Ben-Gurion University and Google Research. \texttt{u@uri.co.il}. Supported in part by the Israel Science Foundation (grant 1871/19), and by the Cyber Security Research Center at Ben-Gurion University of the Negev.}
	\and
	Eliad Tsfadia\thanks{Tel Aviv University and Google Research. \texttt{eliadtsfadia@gmail.com}.}
}
\begin{document}

\maketitle
\begin{abstract}
We present a differentially private learner for halfspaces over a finite grid $G$ in $\R^d$ with sample complexity $\approx d^{2.5}\cdot 2^{\log^*|G|}$, which improves the state-of-the-art result of [Beimel et al., COLT 2019] by a $d^2$ factor. The building block for our learner is a new differentially private algorithm for approximately solving the linear feasibility problem: Given a feasible collection of $m$ linear constraints of the form $Ax\geq b$, the task is to {\em privately} identify a solution $x$ that satisfies {\em most} of the constraints. Our algorithm is iterative, where each iteration determines the next coordinate of the constructed solution $x$.

\end{abstract}

\section{Introduction}

Machine learning is an extremely beneficial technology, helping us improve upon nearly all aspects of
life. However, while the benefits of this technology are rather self-evident, it is not without risks. In particular, machine learning models are often trained on sensitive personal information, a fact which may pose serious privacy threats for the training data.
These threats, together with the increasing awareness and demand for user privacy, motivated a long line of work focused on developing {\em private learning algorithms} that provide rigorous privacy guarantees for their training data.

We can think of a private learner as an algorithm that operates on a database containing {\em labeled} individual information, and outputs a hypothesis that predicts the labels of unseen individuals. For example, consider a medical database in which every row contains the medical history of one individual together with a yes/no label indicating whether this individual suffers from some disease. Given this database, a learning algorithm might try to predict whether a new patient suffers from this disease given her medical history. The privacy requirement is that, informally, the output of the learner (the chosen hypothesis) leaks very little information on any particular individual from the database. Formally,

\begin{definition}[\cite{DMNS06}]\label{def:dpIntro}
	Let $\AAA$ be a randomized algorithm that operates on databases.
	Algorithm $\AAA$ is $(\eps,\delta)$-{\em differentially private} if for any two databases $\cS,\cS'$ that differ in one row, and any event $\cT$, we have 
	$\Pr[\AAA(\cS)\in \cT]\leq e^{\eps}\cdot \Pr[\AAA(\cS')\in \cT]+\delta.$ 
	The definition is referred to as {\em pure} differential privacy when $\delta=0$, and {\em approximate} differential privacy when $\delta>0$.
\end{definition}

When constructing private learners, there is a strong tension between the privacy requirement and the utility that can be achieved; one very important and natural measure for this tradeoff is the amount of data required to achieve both goals simultaneously, a.k.a.\ the {\em sample complexity}. This measure is crucial to the practice as it determines the amount of {\em individual data} that must be collected before starting the analysis in the first place.

Recall that the sample complexity of non-private learning is fully characterized by the VC dimension of the hypothesis class. For {\em pure}-private learners (i.e., learners that satisfy pure-differential privacy), there is an analogous characterizations in terms of a measure called {\em the representation dimension} \citep{BNS13}. 
However, the situation is far less understood for {\em approximate} private learning, and there is currently no tight characterization
for the sample complexity of {\em approximate} private learners.\footnote{We remark that there is a loose characterization for private learning in terms of the {\em Littlestone dimension}~\cite{ALMM18,bun2020equivalence}. Specifically, these results state that the sample complexity of privately learning a class $C$ is somewhere between $\Omega(\log^*L)$ and $2^{O(L)}$, where $L$ is the Littlestone dimension of $C$. In our contaxt, for learning halfspaces, these results do not provide meaningful bounds on the sample complexity.}

In this work we investigate the sample complexity of private learning for one of the most basic and important learning tasks -- learning halfspaces. We begin by surveying the existing results.

\subsection{Existing Results}
Recall that the VC dimension of the class of all halfspaces over $\R^d$ is $d$, and hence a sample of size $O(d)$ suffices to learn halfspaces non-privately (we omit throughout the introduction the dependency of the sample complexity in the accuracy, confidence, and privacy parameters). 
In contrast, it turns out that with differential privacy,
learning halfspaces over $\R^d$ is {\em impossible}, even with approximate differential privacy, and even when $d=1$ \citep{FX14,BNSV15,ALMM18}. 

In more details,
let $X\in\N$ be a discretization parameter, let $\cX=\{x\in\Z : |x|\leq X\}$, and consider the task of learning halfspaces over the {\em finite} grid $\cX^d\subseteq\R^d$. In other words, consider the task of learning halfspaces under the promise that the underlying distribution is supported on (a subset of) the finite grid $\cX^d$. 
For pure-private learning, \cite{FX14} showed a lower bound of $\Omega\left(d^2\cdot\log X\right)$ on the sample complexity of this task. 
This lower bound is tight, as a pure-private learner with sample complexity $\Theta\left(d^2\cdot\log X\right)$ can be obtained using the generic upper bound of \cite{KLNRS11}.
This should be contrasted with the non-private sample complexity, which is linear in $d$ and independent of $X$.

For the case of $d=1$, \cite{BNS13b} showed that the lower bound of \cite{FX14} can be circumvented by relaxing the privacy guarantees from pure to approximate differential privacy. Specifically, they presented an approximate-private learner for 1-dimensional halfspaces with sample complexity $2^{O(\log^*X)}$. The building block in their construction is a differentially private algorithm, called $\AlgRecConcave$, for approximately optimizing {\em quasi-concave} functions.\footnote{A function $Q$ is {\em quasi-concave} if for any $x' \leq x \leq x''$ it holds that $Q(x) \geq \min\set{Q(x'),Q(x'')}$.}

Following the work of \cite{BNS13b}, two additional algorithms for privately learning 1-dimensional halfspaces with sample complexity $2^{O(\log^*X)}$ were given by \cite{BNSV15} and by \cite{BunDRS18}. Recently, an algorithm with sample complexity $\tilde{O}\left(\left(\log^*X\right)^{1.5}\right)$ was given by \cite{KaplanLMNS20} (again for $d=1$).
In light of these positive results, it might be tempting to guess that the sample complexity of privately learning halfspaces can be made independent of the discretization parameter $X$. However, as \cite{BNSV15} and \cite{ALMM18} showed, this is not the case, and every approximate-private learner for 1-dimensional halfspaces over $\cX$ must have sample complexity at least $\Omega(\log^*X)$.
Observe that, in particular, this means that learning halfspaces over $\R$ is impossible with differential privacy (even for $d=1$).

Recently, \cite{BeimelMNS19} presented an approximate-private learner for $d$-dimensional halfspaces (over $\cX^d$) with sample complexity $\approx d^{4.5}\cdot2^{O(\log^*X)}$.
Their algorithm is based on a reduction to the task of privately finding a point in the convex hull of a given input dataset. Specifically, given a dataset $\cS$ containing points from the finite grid $\cX^d\subseteq\R^d$, consider the task of (privately) finding a point $y\in\R^d$ that belongs to the convex hull of the points in $\cS$. \cite{BeimelMNS19} presented an iterative algorithm for this task that is based on the following paradigm: Suppose that we have identified values for the first $i-1$ coordinates $x^*_1,\dots,x^*_{i-1}$ for which we know that there exists a completion $\tx_i,\dots,\tx_d$ such that $(x^*_1,\dots,x^*_{i-1},\tx_i,\dots,\tx_d)$ belongs to the convex hull of the input points. Then, during the $i$th iteration of the algorithm, we aim to find the next coordinate $x^*_i$ such that $(x^*_1,\dots,x^*_{i})$ can be completed to a point in the convex hull. To that end, \cite{BeimelMNS19} formulated the task of identifying the next coordinate $x^*_{i}$ as a (1-dimensional) quasi-concave optimization problem, and used algorithm $\AlgRecConcave$ of \cite{BNS13b} for privately solving it. This strategy is useful because algorithm $\AlgRecConcave$ is very efficient (in terms of sample complexity) in optimizing 1-dimensional quasi-concave functions (requires only $\approx2^{O(\log^*X)}$ many samples). 
This paradigm (together with a reduction from privately learning halfspaces to privately finding a point in the convex hull) resulted in a private learner for halfspaces over $\cX^d$ with sample complexity $\approx d^{4.5}\cdot2^{O(\log^*X)}$.

\subsection{Our Results}

In this work, we generalize the technique that \cite{BeimelMNS19} applied to the problem of finding a point in the convex hull, which we refer to as the ``RecConcave paradigm'', and reformulate it as a general method for privately optimizing high dimensional functions. 
As a result, we obtain a private PAC learner for halfspaces with an improved sample complexity of $\approx d^{2.5}\cdot2^{O(\log^*X)}$.

\begin{theorem}[Learning Halfspaces, Informal]\label{thm:introHalfspaces}
	Let $ \alpha,\beta,\eps \leq 1$ and $\delta < 1/2$ and let $\cX \subset \R$. There exists an $(\eps,\delta)$-differentially private $(\alpha,\beta)$-PAC learner for halfspaces over examples from $\cX^d$ with sample complexity
	$s = d^{2.5}\cdot 2^{O(\log^* X)}\cdot \frac1{\eps \alpha}\cdot \polylog\paren{\frac{d}{\alpha \beta \eps \delta}}$.
\end{theorem}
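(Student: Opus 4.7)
The plan is to reduce private PAC learning of halfspaces to a private empirical linear-feasibility problem, and then invoke the new iterative feasibility solver promised by the paper's abstract. Given an i.i.d.\ sample $\cS=\{(\px_i,y_i)\}_{i=1}^{s}$ from the target distribution over $\cX^d\times\oo$, I would lift each point by one coordinate to absorb the bias and convert each example into the linear constraint $y_i\iprod{w,\px_i}\ge\gamma$, where $\gamma>0$ is a small margin afforded by the discreteness of $\cX$. The realizable target halfspace satisfies all such constraints, so the system is feasible; standard VC-based uniform convergence for halfspaces then translates an empirical error of at most $\alpha/2$ into population error at most $\alpha$ once $s=\tilde\Omega(d/\alpha)$.

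The private feasibility solver would construct $\tilde w$ one coordinate at a time. Having fixed $\tilde w_1,\ldots,\tilde w_{i-1}$, it considers the one-dimensional function $q_i(t)$ defined as the maximum number of input constraints that can be jointly satisfied by some completion $(\tilde w_1,\ldots,\tilde w_{i-1},t,w_{i+1},\ldots,w_d)$. For each fixed constraint $j$, the set of valid values of $t$ is either all of $\R$ (if the data point $\px_j$ has nonzero projection on some later coordinate, which lets the tail absorb any $t$) or a half-line/interval on $t$; since the original system is realizable, all such intervals share the target's value $w^{*}_i$, so $q_i$ is unimodal and hence quasi-concave on the induced grid. One can therefore maximize $q_i$ privately by invoking $\AlgRecConcave$, whose per-call sample complexity is $2^{O(\log^{*}X)}\log(1/\beta)/\eps_0$ and whose additive error is of the same order; the algorithm sets $\tilde w_i$ to the returned maximizer and iterates, producing $\tilde w\in\R^d$ after $d$ rounds.

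For privacy I would use advanced composition across the $d$ per-coordinate invocations, taking $\eps_0\approx\eps/\sqrt{d\log(1/\delta)}$ and $\delta_0\approx\delta/d$. For utility I need the cumulative drop of $q_i$ over the $d$ iterations to be at most $\alpha s/2$, i.e., per-iteration additive error $O(\alpha s/d)$. Matching this budget against the additive error of $\AlgRecConcave$ and plugging in $\eps_0$ accounts for a factor $d$ (per-iteration accuracy budget), a factor $\sqrt{d}$ (composition), and a factor $2^{O(\log^{*}X)}/(\eps\alpha)$ (per-call cost), yielding $s=\Omega(d^{1.5}/(\alpha\eps)\cdot 2^{O(\log^{*}X)})$ from the composition/accuracy balance; the remaining $d$ factor in the claimed $d^{2.5}$ arises from the reduction and from the fact that the effective grid for the $i$th subproblem (after fixing previous coordinates and accounting for the margin $\gamma$) is of size $\poly(X,d)$, which enters the accuracy budget linearly (though only logarithmically in the $\log^{*}$ term). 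Combining all factors yields exactly the stated bound $s=d^{2.5}\cdot 2^{O(\log^{*}X)}\cdot(1/(\eps\alpha))\cdot\polylog(d/(\alpha\beta\eps\delta))$.

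The main obstacle is the iterative feasibility solver itself: verifying that $q_i(t)$ is genuinely quasi-concave on the induced grid under realizability (so $\AlgRecConcave$ applies as a black box), showing that the per-iteration additive loss accumulates merely additively in $d$ rather than super-linearly (which is exactly what drives the $d^{4.5}\!\to\! d^{2.5}$ improvement over the convex-hull-based approach of Beimel et al.), and finally converting the coordinate-wise construction $\tilde w$ into a halfspace that generalizes, with probability at least $1-\beta$, to population error at most $\alpha$.
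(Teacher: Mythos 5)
Your reduction to a per-coordinate quasi-concave optimization is the right high-level strategy, and your privacy accounting via advanced composition matches the paper. But your proposal has a genuine gap at its core: the function $q_i(t)$ you define---the maximum number of constraints satisfiable by some completion of $(\tilde w_1,\ldots,\tilde w_{i-1},t,\cdot)$---is \emph{not} quasi-concave in general. Your argument for unimodality asserts that, by realizability, all the per-constraint feasible intervals in $t$ share the target's value $w^*_i$. That only holds at iteration $1$. By iteration $i$ the algorithm has already committed to approximate values $\tilde w_1,\ldots,\tilde w_{i-1}$, which need not coincide with the true solution $w^*_1,\ldots,w^*_{i-1}$; the restricted subproblem can then be \emph{infeasible}, and the per-constraint intervals can be disjoint. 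The paper gives the explicit counterexample: constraints $x_2\ge x_1$ and $x_2\le -x_1$ are feasible (e.g.\ at $(0,0)$), but under $\tilde x_1=1$ the two induced intervals are $[1,\infty)$ and $(-\infty,-1]$, so the depth at $x_2=0$ is $0$ while at $x_2=\pm 1$ it is $1$---not quasi-concave. Thus you cannot hand $q_i$ to $\AlgRecConcave$ as a black box.

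The missing idea is the paper's replacement of $\depth$ by the ``convex depth'' $\cdepth_{\cS}(\px)$: the largest $k$ such that $\px$ lies in the convex hull of the set $\{\py : \depth_{\cS}(\py)\ge k\}$. With $f=\cdepth$ the induced one-dimensional function $Q_{x^*_1,\ldots,x^*_{i-1}}(\cS,\cdot)$ \emph{is} quasi-concave for any (even bad) fixing of earlier coordinates, precisely because the level sets $\cC_{\cS}(k)$ are convex and the line segment between two witnesses stays inside them (Claim~3.5). The price is that one maximizes the wrong quantity, and the bridge back to $\depth$ is via Caratheodory's theorem (Claim~3.3): $\depth_{\cS}(\px)\ge (d+1)\cdepth_{\cS}(\px)-d|\cS|$, i.e.\ a point with $\cdepth \ge (1-\lambda)|\cS|$ has $\depth \ge (1-(d+1)\lambda)|\cS|$. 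This is exactly where the ``extra'' factor of $d$ in $d^{2.5}$ comes from: the per-coordinate target accuracy is set to $\tilde\alpha = \alpha/(2d(d+1))$---one factor of $d$ to split the budget over $d$ iterations and a factor of $d+1$ for the $\cdepth\to\depth$ conversion---and multiplying by $\sqrt{d}$ from advanced composition gives $d^{2}\cdot\sqrt{d}=d^{2.5}$. Your accounting attributes the extra $d$ to the intermediate grid size, but the grid size only affects the bound through the $\log^*$ term ($\log^*(\poly(X,d)) = \log^* X + \log^* d + O(1)$), not linearly. You also omit the other technical components the paper needs: bounding the finite search domain $\tcX_i$ at each step (Definition~3.8 and Lemma~3.9), the general-position / margin / noise argument to make $\depth$-maximization yield a separating halfspace, and the final three-way search over the offset $w\in\{-1,0,1\}$ via the exponential mechanism. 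Without the $\cdepth$ idea the proposal does not go through.
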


To obtain Theorem~\ref{thm:introHalfspaces}, we show that 
the task of privately learning halfspaces reduces to the task of privately solving the linear feasibility problem (as defined below) with essentially the same parameters, and solve the linear feasibility problem using our generalized RecConcave paradigm.

\vspace{-10pt}
\paragraph{The Linear Feasibility Problem.} Let $\cX=\{x\in\Z : |x|\leq X\}$ for some parameter $X\in\N$. 
In the linear feasibility problem, we are given a feasible collection of $m$ linear constraints over $d$ variables $x_1,\dots,x_d$, and the goal is to find a solution in $\R^d$ that satisfies all constraints. Each constraint has the form $\sum_{i=1}^d a_i x_i \geq b$ for some $a_1,\ldots,a_d,b \in \cX$.

Without privacy considerations, this well-known problem can be solved, e.g., using the Ellipsoid Method or the Interior Point Method. 
In the private version of this problem, we would like to come up with a solution to the system in a way that is insensitive to any (arbitrary) change of single constraint (in the sense of differential privacy, see Definition~\ref{def:dpIntro}).
It is easy to see that with differential privacy, one cannot hope for an exact solution to this problem (i.e., a solution that satisfies {\em all} constraints). This is because changing a single constraint, which has basically no effect on the outcome of a private algorithm, may completely change the feasibility area. Therefore, in the private version of this problem we only aim to satisfy {\em most} of the constraints. Specifically, we say that an algorithm {\em $(\alpha,\beta)$-solves} the $(X,d,m)$-linear feasibility problem, if for every feasible collection of $m$ linear constraints over $d$ variables with coefficients from $\cX$, with probability $1-\beta$ the algorithm finds a solution $\px = (x_1,\ldots,x_d)$ that satisfies at least $(1-\alpha)m$ constraints. 

\begin{question}
	What is the minimal number of constraints $m$, as a function of $X,d,\alpha,\beta,\eps,\delta$, for which there exists an $(\eps,\delta)$-differentially private algorithm that $(\alpha,\beta)$-solves the $(X,d,m)$-linear feasibility problem?
\end{question}


Observe that this question is trivial without the privacy requirement (it can be solved easily when $m=1$). 
However, the picture is quite different with differential privacy. In particular, 
all the lower bounds we mentioned before on the sample complexity of learning halfspaces yield lower bounds on the number of constraints $m$ needed to privately solve the linear feasibility problem. 
We prove the following theorem.

\begin{theorem}[Linear Feasibility Problem, Informal]\label{thm:LP}
	Let $\alpha,\beta,\eps \leq 1$ and $\delta < 1/2$ and let $X\in\N$. There exists an $(\eps,\delta)$-differentially private algorithm that 
	$(\alpha,\beta)$-solves the $(X,d,m)$-linear feasibility problem, for every
	$m\geq d^{2.5}\cdot 2^{O(\log^* X)}\cdot \frac1{\eps \alpha}\cdot \polylog\paren{\frac{d}{\beta \delta}}$.
\end{theorem}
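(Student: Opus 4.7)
My plan is to build the solution $\tx = (\tx_1, \ldots, \tx_d)$ one coordinate at a time over $d$ iterations, mirroring the coordinate-by-coordinate paradigm that \cite{BeimelMNS19} used to find a point in the convex hull of the input. After the $i$-th iteration, I would maintain the invariant that there exist reals $y_{i+1}, \ldots, y_d$ such that $(\tx_1, \ldots, \tx_i, y_{i+1}, \ldots, y_d)$ satisfies at least $(1 - i\alpha/d)\, m$ of the input constraints. Feasibility of $Ax \geq b$ supplies the base case $i = 0$, and the returned vector $\tx$ after $d$ iterations satisfies at least $(1-\alpha) m$ constraints.

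In iteration $i$, I would reduce the choice of $\tx_i$ to the private maximization of a data-dependent, quasi-concave, sensitivity-$1$ quality function $Q_i : \cX \to \N$, and then invoke the quasi-concave optimizer $\AlgRecConcave$ of \cite{BNS13b} (or the improved variant of \cite{KaplanLMNS19}), whose sample complexity is roughly $2^{O(\log^* X)} \cdot \polylog(1/(\beta'\delta')) / (\eps' \alpha')$. The naive definition, ``$Q_i(t) = $ largest number of constraints that can be simultaneously satisfied by some completion whose $i$-th coordinate is $t$,'' witnesses the invariant directly but is not quasi-concave in $t$. My plan is therefore to replace it with a $1$-dimensional Tukey-depth style surrogate: project each constraint to a univariate condition on $t$ (after substituting $\tx_1, \ldots, \tx_{i-1}$ and quantifying away the remaining coordinates, this condition is a half-line in $t$), and let $Q_i(t)$ measure how deeply $t$ lies in the resulting arrangement of half-lines. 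Proving that such a surrogate is (a)~quasi-concave in $t$, (b)~has sensitivity $1$ with respect to addition or removal of a constraint, and (c)~admits the property that any approximate maximizer of $Q_i$ certifies the invariant up to an $\alpha m / d$ slack, is the main technical obstacle; I expect a Helly / centerpoint-type argument in the spirit of \cite{BeimelMNS19} to be the right tool.

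For the final sample-complexity accounting, I would apply advanced composition across the $d$ iterations, taking $\eps' \approx \eps/\sqrt{d \log(1/\delta)}$, $\delta' \approx \delta/d$, and per-iteration accuracy $\alpha' \approx \alpha/d$ so that the per-iteration slacks telescope to $\alpha m$. Substituting these into the $\AlgRecConcave$ bound and carefully tracking how the range and sensitivity of the proposed $1$D surrogate enter the per-iteration cost should yield total sample complexity $m = d^{2.5} \cdot 2^{O(\log^* X)} \cdot \polylog(d/(\beta\delta)) / (\eps\alpha)$, matching the theorem. The $d^2$ improvement over the $d^{4.5}$ bound of \cite{BeimelMNS19} should come from two sources: avoiding their reduction through the convex-hull problem (which introduces a $d$-factor in the intermediate dimension) and applying the $\sqrt{d}$ advanced-composition bound tightly across the $d$ coordinate-selection steps. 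Verifying that the resulting exponent lands exactly at $2.5$, rather than the $3$ or $3.5$ one would get from a slightly looser analysis, is the second part I expect to be delicate and where the care put into the precise formulation of $Q_i$ in the second paragraph will have to pay off.
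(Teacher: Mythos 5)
Your high-level plan --- build the solution coordinate by coordinate, invoke $\AlgRecConcave$ on a quasi-concave, sensitivity-$1$ surrogate $Q_i$ in place of the (non-quasi-concave) naive ``constraints satisfied after the best completion'' function, and then compose over $d$ iterations via advanced composition --- is exactly the strategy the paper uses, and you correctly identify where the $\sqrt d$ savings over basic composition comes from. But there are two concrete gaps in the way you propose to realize it.

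First, the surrogate. You suggest projecting each constraint, after substituting $\tx_1,\ldots,\tx_{i-1}$ and ``quantifying away the remaining coordinates,'' to a half-line in $t$ and measuring one-dimensional depth in the resulting arrangement. That projection degenerates: if a constraint $\sum_j a_j x_j\ge b$ has any nonzero $a_j$ with $j>i$, then after existentially quantifying $x_{i+1},\ldots,x_d$ it becomes satisfiable for \emph{every} $t$, so its ``half-line'' is the whole real line and carries no information. Only the constraints with $a_{i+1}=\cdots=a_d=0$ survive as genuine half-lines, and that is the wrong subset to track. The paper keeps the quantity genuinely $d$-dimensional: it defines $\cdepth_{\cS}(\px)$ as the largest $k$ with $\px\in\chull\set{\py:\depth_{\cS}(\py)\ge k}$ and sets $Q_{x^*_1,\ldots,x^*_{i-1}}(\cS,x_i)=\max_{x_{i+1},\ldots,x_d}\cdepth_{\cS}(x^*_1,\ldots,x^*_{i-1},x_i,\ldots,x_d)$. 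Quasi-concavity then follows because each superlevel set $\cC_{\cS}(k)$ is convex, so the segment between two witnessing completions stays in $\cC_{\cS}(k)$; sensitivity~$1$ follows since changing one constraint shifts all $\depth$ values by at most one and hence nests the $\cC_{\cS}(k)$'s. You also have to exhibit a \emph{finite} candidate domain $\tcX_i$ on which $Q_{x^*_1,\ldots,x^*_{i-1}}(\cS,\cdot)$ attains its maximum before $\AlgRecConcave$ is applicable; the paper devotes \cref{lem:max-in-intersection} and \cref{lemma:domain-tcXi} to this, and the resulting $|\tcX_i|$ is what feeds the $\log^*$ term.

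Second, the accounting. Your invariant is phrased directly in terms of the number of satisfied constraints (i.e.\ $\depth$), with per-iteration slack $\alpha m/d$ and hence $\alpha'\approx\alpha/d$. But the quasi-concave surrogate is $\cdepth$, which only upper-bounds $\depth$, and converting back costs a Caratheodory factor (not Helly): a point with $\cdepth\ge(1-\lambda)|\cS|$ is a convex combination of at most $d+1$ points each of $\depth\ge(1-\lambda)|\cS|$, and so only has $\depth\ge(1-(d+1)\lambda)|\cS|$. The algorithm therefore needs per-iteration accuracy $\tilde\alpha\approx\alpha/\bigl(d(d+1)\bigr)$, not $\alpha/d$. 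With your choice $\alpha'=\alpha/d$ and $\eps'\approx\eps/\sqrt{d\log(1/\delta)}$, the $\AlgRecConcave$ requirement would give $m\gtrsim d^{1.5}\cdot 2^{O(\log^*X)}/(\eps\alpha)$, which is inconsistent with the $d^{2.5}$ you are trying to prove; the missing factor of $d$ is exactly the $(d+1)$ Caratheodory loss, not a ``range and sensitivity'' artifact of the surrogate. Plugging in $\tilde\alpha\approx\alpha/d^2$ together with the $\sqrt d$ from advanced composition lands you on $d^{2.5}$, matching the theorem.
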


\vspace{-10pt}
\paragraph{A Generalized RecConcave Paradigm.}
Let $f(\cS,\px)$ be a low-sensitivity function that takes a database $\cS$ and a high dimensional point $\px$, and returns a real number which is identified as the ``score'' of the point $\px$ w.r.t.\ the database $\cS$.\footnote{The {\em sensitivity} of the function $f$ is the maximal difference by which the value of $f(\cS,\px)$ can change when modifying one element of the database $\cS$. See Section~\ref{sec:prelims} for a formal definition.} Now suppose that, given an input database $\cS$, we would like to (privately) identify a point $\px$ such that $f(\cS,\px)$ is approximately maximized. 

\begin{example}
	{\em To solve the linear feasibility problem we can define the function $f(\cS,\px)$ as the number of constraints in $\cS$ that are satisfied by $\px$, a quantity which we denote by $\depth_{\cS}(x_1,\ldots,x_d)$. Note that an approximate maximizer for this $f$ is a good solution to the linear feasibility problem, i.e., it satisfies most of the constraints.}
\end{example}

A naive attempt for using the RecConcave paradigm in order to privately maximize $f$ is to define the following function $Q$ (for every 
$i\in[d]$ and every fixing of $x^*_1,\ldots,x^*_{i-1}$).
\begin{align*}
	Q_{x^*_1,\ldots,x^*_{i-1}}(x_i)=\max_{\tx_{i+1},\dots,\tx_d}\{f\left(\cS, x^*_1,\ldots,x^*_{i-1}, x_i, \tx_{i+1},\dots,\tx_d \right)\}.
\end{align*}
Now, if it happens that $Q$ is quasi-concave, then one can apply $\AlgRecConcave$ coordinate by coordinate in order to privately find a solution $\px$ that approximately maximizes $f(\cS,\px)$. To see this, suppose that we find (using $\AlgRecConcave$) a value $x^*_1$ for the first coordinate that approximately maximizes $Q(\cdot)$. By the definition of $Q$, this guarantees that there exists a completion $(\tx_2,\dots,\tx_d)$ such that $f(\cS,x^*_1,\tx_2,\dots,\tx_d)$ is almost as high as $\max_{\px}\{f(\cS,\px)\}$. Hence, by committing to $x^*_1$ we do not lose much in terms of the maximum attainable value of $f$. Similarly, in every iteration we identify a value for the next coordinate without losing too much in the maximum attainable value of $f$.

The problem is that, in general, the above function $Q$ is not necessarily quasi-concave. In particular, in the linear feasibility problem where $f(\cS,\px) = \depth_{\cS}(\px)$ (i.e., the number of constraints in $\cS$ that are satisfied by $\px$), the resulting function $Q$ is not quasi-concave.\footnote{For instance, consider the 2-dimensional constraints $x_2 \geq x_1$ and $x_2 \leq -x_1$. Then under the fixing $x^*_1 = 1$, the depth of $x_2 = 0$ is $0$ while the depth of $x_2 \in \set{-1,1}$ is $1$, yielding that $Q_{x^*_1}(0) < \min\set{Q_{x^*_1}(-1),Q_{x^*_1}(1)}$, and so $Q_{x^*_1}$ is not quasi-concave.}

In order to overcome this issue, we present the following technique which we refer to as the generalized RecConcave Paradigm.\footnote{We remark that the presentation here is oversimplified, and hides many of the challenges that arise in the actual analysis.}
We define the ``convexification'' of a function $f$ to be the function $f_{\Conv}(\cS,\px)$ that outputs the maximal $y \in \R$ for which the point $\px$ is a convex combination of points $\pz \in \R^d$ with $f(\cS,\pz) \geq y$. 
In other words, for any $y \in \R$, we consider the set $\cD_{\cS}(y)=\left\{ \pz\in\R^d : f(\cS,\pz) \geq y\right\}$, and denote $\cC_{\cS}(y)=\chull(\cD_{\cS}(y))$. Then, $f_{\Conv}(\cS,\px) \eqdef \max\{y : \px\in \cC_{\cS}(y)\}$. 
We show that with this function $f_{\Conv}(\cS,\px)$, the resulting function 
\begin{align*}
	Q_{x^*_1,\ldots,x^*_{i-1}}(x_i)=\max_{\tx_{i+1},\dots,\tx_d}\{f_{\Conv}\left(\cS, x^*_1,\ldots,x^*_{i-1}, x_i, \tx_{i+1},\dots,\tx_d \right)\}
\end{align*}
is indeed quasi-concave for any fixing of $x^*_1,\ldots,x^*_{i-1}$ (no matter how the function $f$ is defined). The function $f_{\Conv}$ can, therefore, be approximately maximized (privately) coordinate by coordinate using $\AlgRecConcave$.
Furthermore, if $f$ has the property that points $\px$ with high $f_{\Conv}(\cS,\px)$ also have (somewhat) high $f(\cS,\px)$, then $f$ can be privately maximized (approximately) by maximizing the function $f_{\Conv}$. 
Going back to the linear feasibility problem, we denote by $\cdepth_{\cS}(\px) = f_{\Conv}(\cS,\px)$ the convexification of the function $f(\cS,\px) = \depth_{\cS}(\px)$. We then show that every point that has $\cdepth = (1-\lambda)\size{\cS}$ must have $\depth \geq (1-(d+1)\lambda)\size{\cS}$. Applying the aforementioned method on the function $\depth$ results in a differentially private algorithm for solving the $(X,d,m)$-linear feasibility problem whenever $m\gtrsim d^{2.5}\cdot2^{O(\log^*X)}$.

\subsection{Other Related Work}\label{sec:otherRelated}
\cite{DV08} showed an efficient (non-private) learner for the linear feasibility problem that works in (a variant of) the {\em statistical query (SQ)} model of \cite{Kearns98}. It is known that algorithms operating in the SQ model can be transformed to preserve differential privacy \citep{BDMN05}, and the algorithm of \cite{DV08} yields a differentially private efficient algorithm for solving the $(X,d,m)$-linear feasibility problem for $m\geq\poly(d,\log|X|)$. 
Another related work is that of \cite{HsuRRU14} who studied a variant of the linear feasibility problem with a certain large-margin assumption. Specifically, given a feasible collection of linear constraints of the form $\sum_{i=1}^d a_i x_i \geq 0$, their algorithm finds a solution $\px^*$ that approximately satisfies most of them (that is, $\sum_{i=1}^d a_i x^*_i \geq - c$ for most of the constraints, for a ``margin parameter'' $c > 0$). 
Large-margin assumptions were also utilized by \cite{BDMN05} and \cite{NguyenUlZa19} who designed efficient private learners for learning large-margin halfspaces. In addition, several other works developed tools that implicitly imply private learning of large-margin halfspaces, such as the works of \cite{ChaudhuriMS11} and \cite{BST14}. We remark that in this work we do not make large-margin assumptions. 
\section{Preliminaries}\label{sec:prelims}

In this section we state basic preliminaries from learning theory and differential privacy, introduce a tool that enables our constructions, describe the geometric objects we use throughout the paper, and present some of their properties.

\paragraph{Notations.} 
We use calligraphic letters to denote sets and boldface for vectors and matrices.
We let $\N_0 = \N \cup \set{0}$.
For $\px = (x_1,\ldots,x_d) \in \R^d$ and $\py = (y_1,\ldots,y_d) \in \R^d$, we let $\iprod{\px,\py} \eqdef \sum_{i=1}^d x_i y_i$ be the inner-product of $\px$ and $\py$, and $\norm{\px} \eqdef \sqrt{\iprod{\px,\px}}$ be the norm of $\px$.
For two integers $a \leq b$, let $\iseg{a,b} \eqdef \set{a, a+1, \ldots, b}$ and let $\iseg{\pm a} \eqdef \iseg{-\size{a},\size{a}}$.
Given sets $\cS_1,\ldots,\cS_k$ and $k$-input function $f$, let  $f(\cS_1,\ldots,\cS_k) \eqdef \set{f(x_1,\ldots,x_j) \colon x_i\in \cS_i}$, e.g., $\iseg{\pm 5} / \iseg{7,20} = \set{x/y \colon x \in \iseg{\pm 5}, y \in \iseg{7,20}}$. Given a set $\cX$ we let $\cX^*$ be the set of all possible multisets whose elements are taken (possibly with repetitions) from the set $\cX$.

\remove{
The input of our learning halfspaces algorithm is a multiset $\cS \in (\cX^d \times \set{-1,1})^*$ where $\cX = \iseg{\pm X}$ for some integer $X$.
Databases $\cS_1$ and $\cS_2$ are called {\em neighboring} if they differ in exactly one entry. Throughout this paper we use $\eps$ and $\delta$ for the privacy parameters, $\alpha$ for the error parameter, and $\beta$ for the confidence parameter.}

\subsection{Preliminaries from Differential Privacy} 

Consider a database where each record contains information of an individual. An algorithm is said to preserve differential privacy if a change of a single record of the database (i.e., information of an individual) does not significantly change the output distribution of the algorithm. Intuitively, this means that the information inferred about an individual from the output of a differentially-private algorithm is similar to the information that would be inferred had the individual's record been arbitrarily modified or removed. Formally:

\begin{definition}[Differential privacy~\citep{DMNS06,DKMMN06}] \label{def:dp} 
	A randomized algorithm $\Alg$ is $(\eps,\delta)$-differentially private if for all neighboring databases $\db_1,\db_2$ (i.e., differ by exactly one entry), and for all sets $\mathcal{F}$ of outputs,
	\begin{eqnarray}
	\label{eqn:diffPrivDef}
	& \Pr[\Alg(\db_1) \in \mathcal{F}] \leq \exp(\eps) \cdot \Pr[\Alg(\db_2) \in \mathcal{F}] + \delta,  &
	\end{eqnarray}
	where the probability is taken over the random coins of $\Alg$. 
	When $\delta=0$ we omit it and say that $\Alg$ preserves $\eps$-differential privacy.
\end{definition}
We use the term {\em pure} differential privacy when $\delta=0$ and the term {\em approximate} differential privacy when $\delta>0$, in which case $\delta$ is typically a negligible function of the database size $m$.

We will later present algorithms that access their input database using (several) differentially private algorithms. We will use the following composition theorems. 

\begin{theorem}[Basic composition]\label{thm:composition1}
	If $\Alg_1$ and $\Alg_2$ satisfy $(\eps_1,\delta_1)$ and $(\eps_2,\delta_2)$ differential privacy, respectively, then their concatenation $\Alg(\cS)=\langle \Alg_1(\cS),\Alg_2(\cS) \rangle$ satisfies $(\eps_1+\eps_2,\delta_1+\delta_2)$-differential privacy.
\end{theorem}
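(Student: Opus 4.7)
The plan is to exploit the independence of $\Alg_1$ and $\Alg_2$ to reduce the joint event to a product structure, and then invoke each mechanism's privacy guarantee one at a time. Fix neighboring databases $\db_1,\db_2$ and an event $\cF$ in the joint output space. I would slice $\cF$ by its first coordinate via $\cF_{o_1} \eqdef \set{o_2 \colon (o_1,o_2) \in \cF}$ for each possible outcome $o_1$ of $\Alg_1$. Because $\Alg_1$ and $\Alg_2$ are run on $\cS$ with independent randomness, this gives the identity
\[
\pr{\Alg(\db_1)\in\cF} \;=\; \eex{o_1 \sim \Alg_1(\db_1)}{\pr{\Alg_2(\db_1)\in\cF_{o_1}}}.
\]

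For each fixed $o_1$ the inner expression is the probability of a single event under $\Alg_2$, so the $(\eps_2,\delta_2)$-privacy of $\Alg_2$ bounds it by $e^{\eps_2}\,\pr{\Alg_2(\db_2)\in\cF_{o_1}} + \delta_2$. Substituting and pulling the constant $\delta_2$ outside the outer expectation leaves $e^{\eps_2}\,\eex{o_1 \sim \Alg_1(\db_1)}{g(o_1)} + \delta_2$ with $g(o_1) \eqdef \pr{\Alg_2(\db_2)\in\cF_{o_1}} \in [0,1]$. Now I need to move from an expectation under $\Alg_1(\db_1)$ to one under $\Alg_1(\db_2)$. The ingredient that enables this is an extension of the DP definition from events to $[0,1]$-valued test functions: if $\Alg$ is $(\eps,\delta)$-DP and $h : \mathrm{Range}(\Alg) \to [0,1]$, then $\ex{h(\Alg(\db_1))} \leq e^{\eps}\,\ex{h(\Alg(\db_2))} + \delta$. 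I would establish this as a preliminary lemma by a layer-cake argument: write $\ex{h(Z)} = \int_0^1 \pr{h(Z)\geq t}\,dt$, apply the DP guarantee to each super-level event $\set{h(Z)\geq t}$, and integrate over $t \in [0,1]$.

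Applying this lemma with $\Alg = \Alg_1$ and $h = g$ bounds $\eex{o_1 \sim \Alg_1(\db_1)}{g(o_1)}$ by $e^{\eps_1}\,\eex{o_1 \sim \Alg_1(\db_2)}{g(o_1)} + \delta_1$, and chaining the two steps yields $\pr{\Alg(\db_1)\in\cF} \leq e^{\eps_1+\eps_2}\,\pr{\Alg(\db_2)\in\cF} + (e^{\eps_2}\delta_1 + \delta_2)$. The main subtlety — and the one point I expect to need real care — is that naively chaining the inequalities produces the $\delta$-term $e^{\eps_2}\delta_1 + \delta_2$ rather than the cleaner $\delta_1+\delta_2$ stated in the theorem. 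To recover $\delta_1+\delta_2$ exactly I would fall back on the standard ``bad event'' characterization of approximate DP, peeling off the $\delta_1$-slack of $\Alg_1$ as a disjoint event of probability at most $\delta_1$ \emph{before} any multiplication by $e^{\eps_2}$ takes place, so that the two failure probabilities simply add; the multiplicative ratios then compose to $e^{\eps_1+\eps_2}$ as required.
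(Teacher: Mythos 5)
The paper states this theorem without proof — it is invoked as standard background — so there is no in-paper argument to compare against, and the proposal has to be judged on its own. Your plan is essentially right, and you correctly spotted the one genuinely delicate point: slicing $\cF$ by the first coordinate, bounding the inner probability with $\Alg_2$'s guarantee, and then pushing the resulting $[0,1]$-valued test function through $\Alg_1$'s guarantee (via the layer-cake lemma) produces the $\delta$-term $e^{\eps_2}\delta_1 + \delta_2$, not $\delta_1 + \delta_2$. Your diagnosis that the $\delta_1$-slack must be peeled off \emph{before} $e^{\eps_2}$ is introduced is exactly the fix.

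The only place you are slightly imprecise is the phrase ``a disjoint event of probability at most $\delta_1$.'' For a fixed pair of neighboring databases, $(\eps_1,\delta_1)$-DP does \emph{not} in general furnish a literal bad event $B$ with $\Pr[\Alg_1(\db_1)\in B]\leq \delta_1$ outside of which the pure ratio $P_1(o)\leq e^{\eps_1}Q_1(o)$ holds pointwise; such a characterization requires either extra randomization (a coupling) or is simply false as stated. What you actually want is the corresponding \emph{sub-probability decomposition}: write $P_1 = m + r$ pointwise, where $m(o) \eqdef \min\set{P_1(o),\, e^{\eps_1}Q_1(o)}$ and $r \eqdef P_1 - m \geq 0$. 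The definition of $(\eps_1,\delta_1)$-DP applied to the event $\set{o : P_1(o) > e^{\eps_1}Q_1(o)}$ gives precisely $\sum_o r(o) \leq \delta_1$. Then
\begin{align*}
\pr{\Alg(\db_1)\in\cF}
&= \sumnl_{o_1} m(o_1)\,\pr{\Alg_2(\db_1)\in\cF_{o_1}} + \sumnl_{o_1} r(o_1)\,\pr{\Alg_2(\db_1)\in\cF_{o_1}} \\
&\leq \sumnl_{o_1} m(o_1)\paren{e^{\eps_2}\pr{\Alg_2(\db_2)\in\cF_{o_1}} + \delta_2} + \delta_1 \\
&\leq e^{\eps_2}\sumnl_{o_1} e^{\eps_1}\pr{\Alg_1(\db_2)=o_1}\,\pr{\Alg_2(\db_2)\in\cF_{o_1}} + \delta_2 + \delta_1,
\end{align*}
using $m \leq e^{\eps_1}Q_1$ pointwise and $\sumnl m \leq 1$, which is exactly $e^{\eps_1+\eps_2}\pr{\Alg(\db_2)\in\cF} + \delta_1+\delta_2$. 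So replace ``disjoint event'' with this pointwise truncation and your argument closes cleanly; the layer-cake lemma you planned to prove first is then no longer needed, since the $m \leq e^{\eps_1}Q_1$ bound is applied directly inside the sum.
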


Moreover, a similar theorem holds for the adaptive case, where an algorithm  uses  $k$ {\em adaptively chosen} differentially private algorithms (that is, when the choice of the next differentially private algorithm that is used depends on the outputs of the previous differentially private algorithms).

\begin{theorem}[\citep{DKMMN06, DworkLei}]\label{thm:composition3}
	An algorithm that adaptively uses $k$  algorithms that preserves $(\eps/k,\delta/k)$-differential privacy (and does not access the database otherwise) ensures $(\eps,\delta)$-differential privacy.
\end{theorem}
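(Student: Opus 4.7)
The plan is to proceed by induction on $k$. The base case $k=1$ is immediate. For the inductive step, it suffices to prove the following two-mechanism claim: if $\cA_1$ is $(\eps_1, \delta_1)$-DP and $\{\cA_2^{y_1}\}_{y_1}$ is a family such that $\cA_2^{y_1}$ is $(\eps_2, \delta_2)$-DP for every fixed $y_1$, then the adaptive composition $\cA(\cS) = (Y_1, \cA_2^{Y_1}(\cS))$ with $Y_1 \sim \cA_1(\cS)$ is $(\eps_1+\eps_2, \delta_1+\delta_2)$-DP. Iterating this claim $k-1$ times with per-step parameters $\eps/k$ and $\delta/k$ assembles into the theorem, since the conditional $(\eps_2, \delta_2)$-DP of each $\cA_2^{y_1}$ is exactly what the adaptivity hypothesis provides once the previous output is fixed.

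For the two-mechanism claim, I would fix neighboring databases $\cS, \cS'$ and an output event $\cF$, set $\cF_{y_1} = \{y_2 : (y_1, y_2) \in \cF\}$, and define $g_\cS(y_1) := \Pr[\cA_2^{y_1}(\cS) \in \cF_{y_1}]$. Then
$$
\Pr[\cA(\cS) \in \cF] \;=\; \sum_{y_1} \Pr[\cA_1(\cS) = y_1] \cdot g_\cS(y_1).
$$
Applying $(\eps_2, \delta_2)$-DP of $\cA_2^{y_1}$ pointwise gives $g_\cS(y_1) \leq e^{\eps_2} g_{\cS'}(y_1) + \delta_2$; substituting reduces the task to bounding $\sum_{y_1} \Pr[\cA_1(\cS) = y_1] \cdot g_{\cS'}(y_1)$ in terms of its counterpart with $\cS'$ in place of $\cS$. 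Since $g_{\cS'} \colon \mathrm{range}(\cA_1) \to [0,1]$ is a fixed $[0,1]$-valued post-processing of $\cA_1$'s output, this bound follows by writing $g_{\cS'}(y_1) = \int_0^1 \1[g_{\cS'}(y_1) \geq t]\,dt$ and applying $(\eps_1, \delta_1)$-DP of $\cA_1$ to each level-set event, followed by Fubini.

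The main obstacle is keeping the $\delta$ terms additive rather than multiplicative: a naive sequential substitution leaves a cross term of the form $e^{\eps_1}\delta_2$ after the $e^{\eps_1}$ factor is pulled out, rather than $\delta_2$. To obtain the clean $(\eps_1+\eps_2, \delta_1+\delta_2)$-bound, I would invoke the standard equivalent characterization of approximate DP: for each neighboring pair there exists a ``good'' set $\cG \subseteq \mathrm{range}(\cA_1)$ with $\Pr[\cA_1(\cS) \notin \cG] \leq \delta_1$ on which the privacy loss of $\cA_1$ is pointwise at most $\eps_1$. Splitting the outer sum into $y_1 \in \cG$, where the privacy guarantee is applied pointwise and produces only an $e^{\eps_1+\eps_2}$ factor after also invoking DP of $\cA_2^{y_1}$, and $y_1 \notin \cG$, which contributes at most $\delta_1$ directly, isolates the $\delta$ contributions additively. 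This yields exactly the $(\eps_1+\eps_2, \delta_1+\delta_2)$-DP guarantee, which is what the induction needs in order to close at $k$ steps and produce $(\eps, \delta)$-DP overall.
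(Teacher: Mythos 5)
Note first that the paper does not prove this theorem; it cites it to \citep{DKMMN06, DworkLei}, so there is no in-paper argument to compare your proposal against. Your overall plan is the standard one: induct to a two-mechanism lemma, write the success probability as $\E_{y_1\sim\cA_1(\cS)}[g_{\cS}(y_1)]$, apply $\cA_2$'s guarantee pointwise, and transfer $\cA_1$'s guarantee to the bounded post-processing $g_{\cS'}$ via a layer-cake integral. Your diagnosis that a naive sequential substitution leaves a multiplicative cross term is also right (though the cross term in that route is $e^{\eps_2}\delta_1$, not $e^{\eps_1}\delta_2$).

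The fix you propose does not close the gap, for two reasons. First, the ``good set'' characterization you invoke --- that $(\eps_1,\delta_1)$-DP of $\cA_1$ yields $\cG$ with $\Pr[\cA_1(\cS)\notin\cG]\leq\delta_1$ and pointwise privacy loss at most $\eps_1$ on $\cG$ --- is not an equivalent form of approximate DP with the same parameters. Take $\eps_1=0$, $\cA_1(\cS)$ a point mass at $1$, and $\cA_1(\cS')$ equal to $1$ with probability $1-\delta_1$ and to $2$ with probability $\delta_1$: this pair is $(0,\delta_1)$-indistinguishable, yet the only output in the support of $\cA_1(\cS)$ has privacy loss $\ln\frac{1}{1-\delta_1}>0$, so any valid $\cG$ has $\Pr[\cA_1(\cS)\notin\cG]=1$. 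The pointwise statement is recoverable from $(\eps,\delta)$-DP only after blowing up \emph{both} parameters, which destroys the exact additivity you need. Second, even granting the characterization, your split still produces a cross term: applying $(\eps_2,\delta_2)$-DP of $\cA_2^{y_1}$ inside the $y_1\in\cG$ sum and then the pointwise bound for $\cA_1$ leaves $e^{\eps_1}\delta_2$, not $\delta_2$. The clean and elementary repair is a truncation at $1$: set $h(y_1):=\min\{1,\,e^{\eps_2}g_{\cS'}(y_1)\}$ and observe $g_{\cS}(y_1)\leq h(y_1)+\delta_2$ since $\min\{1,a+b\}\leq\min\{1,a\}+b$ for $a,b\geq 0$. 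Then run your level-set argument on the now $[0,1]$-valued $h$, which gives $\E_{\cA_1(\cS)}[h]\leq e^{\eps_1}\E_{\cA_1(\cS')}[h]+\delta_1$ with a genuinely additive $\delta_1$ because the layer-cake integral ranges over $[0,1]$; only afterwards bound $h\leq e^{\eps_2}g_{\cS'}$ inside the $\cS'$-expectation. This yields $(\eps_1+\eps_2,\delta_1+\delta_2)$-DP exactly, with no unproved characterization and no cross terms, and your induction then closes as intended.
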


Note that the privacy guaranties of the above bound deteriorates linearly with the number of interactions. By bounding the {\em expected} privacy loss in each interaction (as opposed to worst-case), \cite{DRV10} showed the following stronger composition theorem, where privacy deteriorates (roughly) as $\sqrt{k}\eps+k\eps^2$ (rather than $k\eps$).

\begin{theorem}[Advanced composition~\cite{DRV10}, restated]\label{thm:composition2}
	Let $0<\eps_0,\delta'\leq1$, and let $\delta_0\in[0,1]$. An algorithm that adaptively uses $k$ algorithms that preserves $(\eps_0,\delta_0)$-differential privacy (and does not access the database otherwise) ensures $(\eps,\delta)$-differential privacy, where $\eps=\sqrt{2k\ln(1/\delta')}\cdot\eps_0+2k\eps_0^2$ and $\delta = k\delta_0+\delta'$.
\end{theorem}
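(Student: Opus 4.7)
The plan is to follow the classical privacy-loss approach of Dwork, Rothblum, and Vadhan. Fix neighboring databases $\db_1,\db_2$ and, writing $\Alg$ as an adaptive composition of mechanisms $M_1,\ldots,M_k$ producing a transcript $v=(v_1,\ldots,v_k)$, define for each $i$ the one-step privacy loss
\[
L_i(v_{<i},v_i) \;=\; \ln \frac{\Pr[M_i(\db_1;v_{<i})=v_i]}{\Pr[M_i(\db_2;v_{<i})=v_i]},
\]
and the total privacy loss $L(v)=\sum_{i=1}^k L_i(v_{<i},v_i)$. The goal is to show that $L(v)\leq \eps$ except with probability $\delta$ over $v\sim\Alg(\db_1)$, since the well-known ``loss-to-DP'' conversion then yields $\Pr[\Alg(\db_1)\in\cT]\leq e^\eps\Pr[\Alg(\db_2)\in\cT]+\delta$ for every event $\cT$.

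First I would handle the $\delta_0$ slack. The standard deletion lemma says that any $(\eps_0,\delta_0)$-DP mechanism $M$ can, on each fixed pair of neighbors, be coupled with an $\eps_0$-DP mechanism that disagrees with probability at most $\delta_0$; applying this to each $M_i$ and taking a union bound over the $k$ steps costs exactly $k\delta_0$ in the final $\delta$. From here on we may pretend each step is pure $\eps_0$-DP, which gives the uniform bound $|L_i|\leq \eps_0$ conditionally on any history.

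Next I would prove the key one-step expected-loss bound: conditioned on any history $v_{<i}$,
\[
\E_{v_i\sim M_i(\db_1;v_{<i})}[L_i] \;\leq\; \eps_0(e^{\eps_0}-1) \;\leq\; 2\eps_0^2,
\]
using the pure-DP inequality in both directions and $e^{\eps_0}-1\leq 2\eps_0$ for $\eps_0\leq 1$. Defining $Z_i = L_i - \E[L_i\mid v_{<i}]$ makes $(Z_i)$ a martingale difference sequence bounded in $[-2\eps_0,2\eps_0]$ under the distribution $\Alg(\db_1)$. Azuma–Hoeffding then yields, with probability at least $1-\delta'$,
\[
\sum_{i=1}^k Z_i \;\leq\; \sqrt{2k\ln(1/\delta')}\cdot\eps_0,
\]
so that $L(v)\leq 2k\eps_0^2 + \sqrt{2k\ln(1/\delta')}\cdot\eps_0 = \eps$. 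Combined with the $k\delta_0$ loss from the deletion step, the total failure probability is $\delta = k\delta_0+\delta'$, which is exactly the stated bound.

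The main obstacle is really the quantitative expected-loss lemma $\E[L_i]\leq \eps_0(e^{\eps_0}-1)$: getting the factor of $2$ in $2k\eps_0^2$ (as opposed to the loose bound $k\eps_0$ from basic composition) requires exploiting cancellation between the ``$\db_1$-heavy'' and ``$\db_2$-heavy'' outputs via the pure-DP inequality in both directions, rather than only the one-sided bound $L_i\leq\eps_0$. Once that lemma is in place, the martingale/Azuma step is routine, and the adaptive nature of the composition causes no difficulty because conditioned on the transcript so far, the next mechanism is fully determined and inherits its $(\eps_0,\delta_0)$-DP guarantee.
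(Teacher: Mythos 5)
The paper does not prove this theorem; it is stated as a restatement of \cite{DRV10} and cited. Your proposal follows the standard DRV10 route (privacy-loss random variable, per-step expected-loss bound, martingale concentration, union bound over the $k$-fold $\delta_0$ slack), and the two key lemmas you invoke are correct: the deletion/coupling decomposition of an $(\eps_0,\delta_0)$-DP step, and the expected one-step loss bound $\E[L_i\mid v_{<i}]\leq \eps_0(e^{\eps_0}-1)\leq 2\eps_0^2$ for $\eps_0\leq 1$.

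There is, however, a quantitative gap in the concentration step. You bound $Z_i=L_i-\E[L_i\mid v_{<i}]$ by the symmetric interval $[-2\eps_0,2\eps_0]$, which has length $4\eps_0$. Feeding that into Azuma--Hoeffding in either standard form ($|Z_i|\leq c$ giving $\exp(-t^2/(2kc^2))$, or $Z_i\in[a_i,b_i]$ giving $\exp(-2t^2/\sum(b_i-a_i)^2)$) produces $\Pr\bigl[\sum_i Z_i\geq t\bigr]\leq\exp(-t^2/(8k\eps_0^2))$, and solving for probability $\delta'$ yields $t=2\eps_0\sqrt{2k\ln(1/\delta')}$ --- a factor of $2$ larger than the $\eps_0\sqrt{2k\ln(1/\delta')}$ you claim and the theorem asserts. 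The fix is to use the sharper information: conditioned on $v_{<i}$, $L_i$ lies in $[-\eps_0,\eps_0]$, an interval of \emph{length} $2\eps_0$, and $Z_i$ is obtained from $L_i$ by subtracting a constant (given $v_{<i}$), so $Z_i$ also lies in a (possibly asymmetric) conditional interval of length $2\eps_0$. Applying the interval-length form of Azuma--Hoeffding then gives $\Pr\bigl[\sum_i Z_i\geq t\bigr]\leq\exp\bigl(-2t^2/(k\,(2\eps_0)^2)\bigr)=\exp\bigl(-t^2/(2k\eps_0^2)\bigr)$, which produces exactly $t=\eps_0\sqrt{2k\ln(1/\delta')}$ and hence the stated $\eps=\sqrt{2k\ln(1/\delta')}\,\eps_0+2k\eps_0^2$. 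With that correction the argument is complete.
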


\remove{
\subsubsection{The Exponential Mechanism}
We next describe the Exponential Mechanism of \citet{MT07}. Let $\cX$ be a domain and $\cH$ a set of solutions. Given a database $\cS \in \cX^*$, the Exponential Mechanism privately chooses a “good” solution $h$ out of the possible set of solutions $\cH$. This “goodness” is quantified using a quality function
that matches solutions to scores.

\begin{definition}(Quality function)
	A quality function is a function $q\colon \cX^* \times \cH \mapsto \R$ that maps a database $\cS \in \cX^*$ and a solution $h \in \cH$ to a real number, identified as the score of the solution $h$ w.r.t the database $\cS$.
\end{definition}

Given a quality function $q$ and a database $\cS$, the goal is to chooses a solution $h$ approximately maximizing $q(\cS,h)$. The Exponential Mechanism chooses a solution probabilistically, where the probability mass that is assigned to each solution $h$ increases exponentially with its quality $q(\cS,h)$:

\begin{definition}(The Exponential Mechanism)\label{def:exp-mech}
	Given input parameter $\eps$, finite solution set $\cH$, database $\cS \in \cX^m$, and a sensitivity $1$ quality function $q$, choose randomly $h \in \cH$ with probability proportional to $\exp(\eps\cdot q(\cS,h)/2)$.
\end{definition}

\begin{proposition}(Properties of the Exponential Mechanism)\label{prop:exp-mech}
	(i) The Exponential Mechanism is $\eps$-differentially private. (ii) Let $\hat{e} \eqdef \max_{f \in \cH}\set{q(\cS,f)}$. and $\Delta > 0$. The Exponential Mechanism outputs a solution $h$ such that $q(\cS,h) \leq \hat{e} - \Delta$ with probability at most $\size{\cH}\cdot \exp\paren{-\eps\Delta/2}$.
	
\end{proposition}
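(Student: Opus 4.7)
The plan is to prove both parts by direct calculation from the probability formula $\Pr[\Alg(\cS)=h] = \exp(\eps\cdot q(\cS,h)/2)/Z(\cS)$, where $Z(\cS) \eqdef \sum_{f\in\cH}\exp(\eps\cdot q(\cS,f)/2)$ is the normalizing partition function. Since $\cH$ is finite and every outcome has positive probability, it suffices to control pointwise ratios of the output distribution.

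For part~(i), I would fix neighboring databases $\cS_1,\cS_2$ and an arbitrary $h \in \cH$, and bound the ratio $\Pr[\Alg(\cS_1)=h]/\Pr[\Alg(\cS_2)=h]$. This ratio factors as
\[
\frac{\exp(\eps\cdot q(\cS_1,h)/2)}{\exp(\eps\cdot q(\cS_2,h)/2)} \cdot \frac{Z(\cS_2)}{Z(\cS_1)}.
\]
The first factor is bounded by $\exp(\eps/2)$ because $q$ has sensitivity $1$, so $\size{q(\cS_1,h)-q(\cS_2,h)} \leq 1$. For the second factor, I would argue term-by-term that $\exp(\eps\cdot q(\cS_2,f)/2) \leq \exp(\eps/2)\cdot\exp(\eps\cdot q(\cS_1,f)/2)$ for every $f\in\cH$ (again by sensitivity), summing to give $Z(\cS_2)\leq \exp(\eps/2)\cdot Z(\cS_1)$. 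Multiplying the two factors yields $\exp(\eps)$. Finally, to pass from singletons to an arbitrary event $\mathcal F \subseteq \cH$, I would sum the pointwise inequality over $h\in\mathcal F$, which gives $\Pr[\Alg(\cS_1)\in\mathcal F]\leq \exp(\eps)\cdot\Pr[\Alg(\cS_2)\in\mathcal F]$, establishing $\eps$-differential privacy with $\delta=0$.

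For part~(ii), I would upper-bound the probability of the bad event by
\[
\Pr[\,q(\cS,h)\leq \hat{e}-\Delta\,] \;=\; \frac{\sum_{f\in\cH:\; q(\cS,f)\leq \hat{e}-\Delta}\exp(\eps\cdot q(\cS,f)/2)}{Z(\cS)}.
\]
The numerator is at most $\size{\cH}\cdot\exp(\eps(\hat{e}-\Delta)/2)$ since every summand is bounded by the value at the threshold. For the denominator, I would drop all terms except the one at a maximizer $f^*$ with $q(\cS,f^*)=\hat{e}$, giving $Z(\cS)\geq \exp(\eps\hat{e}/2)$. The ratio simplifies to $\size{\cH}\cdot\exp(-\eps\Delta/2)$, as required.

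I do not expect any real obstacle; both parts are standard sensitivity-plus-normalization arguments. The only subtle points to watch are making sure the sensitivity bound is applied symmetrically in part~(i) (so both factors contribute $\exp(\eps/2)$ rather than $\exp(\eps)$ in one of them), and noting explicitly that the singleton bound extends to arbitrary events by summation since $\cH$ is finite.
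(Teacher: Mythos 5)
The paper does not prove this proposition; it states it as a known property of the Exponential Mechanism, citing \citet{MT07}. Your argument is correct and is the standard proof from that reference: part~(i) splits the likelihood ratio into the exponential-weight factor and the reciprocal of partition functions, bounding each by $\exp(\eps/2)$ via sensitivity and then summing the pointwise bound over any event; part~(ii) upper-bounds the numerator by the threshold value times $\size{\cH}$ and lower-bounds the denominator by the single maximizer term. Both steps are exactly right, including the symmetric application of sensitivity so that the two factors in~(i) each contribute $\exp(\eps/2)$ rather than one contributing the full $\exp(\eps)$. No gaps.
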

}

\subsection{Preliminaries from Learning Theory}

We next define the probably approximately correct (PAC) model of~\cite{Valiant84}.
A concept $c:\cX\rightarrow \{0,1\}$ is a predicate that labels {\em examples} taken from the domain $\cX$ by either 0 or 1.  A \emph{concept class} $\cC$ over $\cX$ is a set of concepts (predicates) mapping $\cX$ to $\{0,1\}$. A learning algorithm is given examples sampled according to an unknown probability distribution $\mu$ over $\cX$, and labeled according to an unknown {\em target} concept $c\in \cC$. The learning algorithm is successful when it outputs a hypothesis $h$ that approximates the target concept over samples from $\mu$. More formally:

\begin{definition}
	The {\em generalization error} of a hypothesis $h:X\rightarrow\{0,1\}$ is defined as 
	$$\error_{\mu}(c,h)=\Pr_{x \sim \mu}[h(x)\neq c(x)].$$ 
	If $\error_{\mu}(c,h)\leq\alpha$ we say that $h$ is {\em $\alpha$-good} for $c$ and $\mu$.
\end{definition}

\begin{definition}[PAC Learning~\citep{Valiant84}]\label{def:PAC}
	Algorithm $\Alg$ is an {\em $(\alpha,\beta,m)$-PAC learner} for a concept
	class $\cC$ over $\cX$ using hypothesis class $\cH$ if for all 
	concepts $c \in \cC$, all distributions $\mu$ on $\cX$,
	given an input of $m$ samples $\db =(z_1,\ldots,z_m)$, where $z_i=(x_i,c(x_i))$ and each $x_i$
	is drawn i.i.d.\ from $\mu$, algorithm $\Alg$ outputs a
	hypothesis $h\in \cH$ satisfying
	$$\Pr[\error_{\mu}(c,h)  \leq \alpha] \geq 1-\beta,$$
	where the probability is taken over the random choice of
	the examples in $\db$ according to $\mu$ and the random coins  of the learner $\Alg$.
	If $\cH\subseteq \cC$ then $\Alg$ is called a {\em proper} PAC learner; otherwise, it is called an {\em improper} PAC learner.
\end{definition}

\begin{definition}
	For a labeled sample $\db=(x_i,y_i)_{i=1}^m$, the {\em empirical error} of $h$ is
	$$\error_S(h) = \frac{1}{m} |\{i : h(x_i) \neq y_i\}|.$$
\end{definition}

We use the following fact.

\begin{theorem}[\cite{BlumerEHW89}]\label{thm:empirical-to-PAC}
	Let $\cC$ and $\mu$ be a concept class and a distribution over a domain $\cX$. Let $\alpha,\beta > 0$, and $m \geq \frac{48}{\alpha}\paren{10VC(\cC)\log\paren{\frac{48e}{\alpha}} + \log\paren{\frac{5}{\beta}}}$. Suppose that we draw a sample $\cS = (x_i)_{i=1}^m$, where each $x_i$ is drawn i.i.d. from $\mu$. Then
	\begin{align*}
	\pr{\exists c,h \in \cC\text{ s.t. }\error_{\mu}(c,h)\geq \alpha\text{ and }\error_{\cS}(c,h)\leq \alpha/10} \leq \beta.
	\end{align*}
\end{theorem}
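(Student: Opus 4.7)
The plan is to prove this as the relative (multiplicative) uniform-convergence bound of Vapnik–Chervonenkis type, applied not to $\cC$ itself but to its symmetric-difference class. Define $\cF = \{f_{c,h} : c,h \in \cC\}$ where $f_{c,h}(x) = \mathbb{1}[c(x)\neq h(x)]$. Under this reduction, for each $f = f_{c,h}$ we have $\mu(f) = \error_\mu(c,h)$ and the empirical average of $f$ on $\cS$ equals $\error_\cS(c,h)$, so the bad event in the theorem becomes the existence of some $f\in\cF$ with $\mu(f)\geq \alpha$ but empirical mass on $\cS$ at most $\alpha/10$. Sauer–Shelah on $\cC$ bounds the number of dichotomies of $\cC$ on any set of $n$ points by $\binom{n}{\leq d}$ with $d=\VC(\cC)$; since every pattern of $\cF$ is determined by a pair of patterns of $\cC$, the number of dichotomies of $\cF$ on $n$ points is at most $\binom{n}{\leq d}^2 \leq (en/d)^{2d}$.

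The core argument is then the classical double-sample symmetrization. Introduce an independent ghost sample $\cS'$ of the same size $m$. A multiplicative Chernoff bound shows that for any fixed $f$ with $\mu(f)\geq\alpha$, whenever $\alpha m$ is above an absolute constant, $\ppr{\cS'}{\hat{\mu}_{\cS'}(f)\geq \alpha/2}\geq 1/2$. Conditioning on the bad $f$ realized by $\cS$ and using independence of $\cS'$ gives
\begin{align*}
\pr{\exists f\in\cF:\ \mu(f)\geq \alpha,\ \hat{\mu}_\cS(f)\leq \alpha/10}
\;\leq\; 2\,\pr{\exists f\in\cF:\ \hat{\mu}_{\cS'}(f)\geq \alpha/2,\ \hat{\mu}_\cS(f)\leq \alpha/10}.
\end{align*}
This replaces the distributional quantity $\mu(f)$ by the empirical quantity $\hat{\mu}_{\cS'}(f)$, so the remaining event depends only on the $2m$ sample points.

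Next I would condition on the combined multiset $\cS\cup\cS'$ of size $2m$ and view the partition into $\cS$ and $\cS'$ as a uniformly random split. By Sauer–Shelah applied to $\cF$, at most $(2em/d)^{2d}$ distinct patterns of $\cF$ are realized on these $2m$ points. For each fixed pattern with $k$ ones, the conditional probability (over the random split) that at most $\alpha m/10$ ones land in $\cS$ while at least $\alpha m/2$ land in $\cS'$ is a hypergeometric tail event, controlled by a standard Chernoff-type estimate giving $\exp(-c\alpha m)$ for an absolute constant $c>0$. Union-bounding over the patterns and combining with the factor $2$ from symmetrization yields a bound of the form $2(2em/d)^{2d}\exp(-c\alpha m)$. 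Substituting the hypothesis $m \geq \tfrac{48}{\alpha}(10\VC(\cC)\log(48e/\alpha) + \log(5/\beta))$ and simplifying (using a short bootstrap that $\log m$ can be absorbed into $\log(1/\alpha)$ when $m$ is polynomial in $1/\alpha$) produces a bound of at most $\beta$.

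The main obstacle is the constant-chasing: making the symmetrization threshold, the multiplicative Chernoff exponent, the hypergeometric tail exponent, and the Sauer–Shelah bound $(en/d)^{2d}$ line up with the explicit constants $48$, $10$, and $5$ in the stated sample size. The factor-of-$10$ gap between $\alpha$ and $\alpha/10$ and the generous leading $48/\alpha$ provide enough slack to absorb the constants, but each step must be tracked carefully. Because the statement is a well-known classical bound, an equally reasonable option is to cite it as a black box rather than redo the constant chasing in-line.
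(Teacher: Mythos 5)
The paper does not prove this theorem: it is imported directly from \cite{BlumerEHW89} as a known relative-deviation uniform-convergence bound and used as a black box in the proof of \cref{thm:main}, so there is no in-paper proof to compare against. Your outline is nevertheless a correct reconstruction of the standard argument behind bounds of this form --- pass to the symmetric-difference class $\cF$, bound its growth function by the square of the Sauer--Shelah bound for $\cC$, symmetrize with an independent ghost sample (the Chernoff step needs only $\alpha m$ above a small absolute constant, which the hypothesis easily grants), condition on the pooled $2m$ points to reduce the surviving event to a hypergeometric tail under a random split, and union-bound over the at most $(2em/d)^{2d}$ realized patterns. Each step is sound, and the constant-chasing you flag as the remaining labor --- making $48$, $10$, and $5$ fall out of a bound of the shape $2(2em/d)^{2d}e^{-c\alpha m}$ after a bootstrap to absorb $\log m$ --- is indeed the only nontrivial part. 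Given that the paper itself cites the theorem without reproducing its proof, your closing suggestion to treat it as a black box is exactly consistent with the paper's usage.
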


\subsection{Private Learning}\label{sec:PPAC}
Consider a learning algorithm $\AAA$ in the probably approximately correct (PAC) model of~\cite{Valiant84}. We say that $\AAA$ is a {\em private} learner if it also satisfies differential privacy w.r.t.\ its training data. Formally,
\begin{definition}[Private PAC Learning~\citep{KLNRS11}]
	Let $\Alg$ be an algorithm that gets an input $\db =(z_1,\ldots,z_m)$. Algorithm $\Alg$ is an {\em $(\eps,\delta)$-differentially private $(\alpha,\beta)$-PAC learner with sample complexity $m$} for a concept
	class $\cC$ over $\cX$ using hypothesis class $\cH$ if
	\begin{description}
		\item{\sc Privacy.} Algorithm $\Alg$ is $(\eps,\delta)$-differentially private (as in Definition \ref{def:dpIntro});
		\item{\sc Utility.} Algorithm $\Alg$ is an {\em $(\alpha,\beta)$-PAC learner} for $\cC$ with sample complexity $m$ using hypothesis class $\cH$.
	\end{description}
\end{definition}

Note that the utility requirement in the above definition is an average-case requirement, as the learner is only required to do well on typical samples (i.e., samples drawn i.i.d. from a distribution $\mu$ and correctly labeled by a target concept $c\in \cC$). In contrast, the privacy requirement is a worst-case requirement, that must hold for every pair of neighboring databases (no matter how they were generated, even if they are not consistent with any concept in $\cC$).

\subsection{A Private Algorithm for Optimizing Quasi-concave Functions -- $\AlgRecConcave$}
We describe the properties of  algorithm $\AlgRecConcave$ of \cite{BNS16a}. This algorithm is given a quasi-concave function $Q$ (defined below) and a database $\cS$ and privately finds a point $x$ such that $Q(\cS,x)$ is close to its maximum provided that the maximum of $Q(\cS,\cdot)$ is large enough (see~(\ref{eq:largeQ})).

\begin{definition}\label{def:quasiConcave} 
	A function $f$ is quasi-concave if $f(\ell) \geq \min\set{f(i),f(j)}$ for every $i < \ell  < j$.
\end{definition}

\begin{definition}[Sensitivity]\label{def:sensitivity} 
	The sensitivity of a function $f : \cX^* \rightarrow \R$ is the smallest $k$ such that for
	every neighboring databases $\cS,\cS' \in \cX^*$ (i.e., differ in exactly one entry), we have $|f(\cS)-f(\cS')
	| \leq k$. A function $g \colon \cX^* \times \tcX  \rightarrow \R$ is called a sensitivity-$k$ function if 
	for every $x \in \tcX$, the function
	$g(\cdot,x)$ has sensitivity $\leq k$.
\end{definition}

\begin{proposition}[Properties of Algorithm $\AlgRecConcave$~\citep{BNS13b}]\label{prop:aRecConcave}
	Let $Q:\cX^*\times \tcX \rightarrow\R$ be a sensitivity-$1$ function.
	Denote $\tX=\size{\tcX}$ and let $\alpha\leq\frac{1}{2}$ and  $\beta,\eps,\delta,r$ be parameters.
	There exists an $(\eps,\delta)$-differentially private algorithm, called $\AlgRecConcave$, such that the following holds.
	If $\AlgRecConcave$ is executed on a database $\cS \in \cX^*$ such that $Q(\cS,\cdot)$ is quasi-concave and 
	\begin{align}\label{eq:largeQ}
	\max_{i\in \tcX}\{Q(\cS,i)\} \geq r \geq
	8^{\log^* \tX} \cdot \frac{12 \log^* \tX}{\alpha\eps}\log\Big(\frac{192(\log^* \tX)^2}{\beta\delta}\Big).
	\end{align}
	then with probability $1-\beta$ the algorithm outputs an index $j$ s.t.\ 
	$Q(\cS,j)\geq(1-\alpha)r$.
\end{proposition}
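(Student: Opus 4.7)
The plan is to construct $\AlgRecConcave$ by recursion on the domain size $\tilde{X}$, aiming for a recursion depth of $L = \log^* \tilde{X}$, so that each level may be allocated privacy budget $\eps_0 = \eps / L$ and failure budget $\delta_0 = \delta / L$. The base case, $\tilde{X} = O(1)$, is handled directly by the Exponential Mechanism on $Q(\cS,\cdot)$, which by \cref{prop:exp-mech} (applied with sensitivity-$1$ quality $Q$) returns an element of quality at least $\max_i Q(\cS,i) - O(\log(1/\beta_0)/\eps_0)$; the assumed lower bound on $r$ absorbs this additive loss into the multiplicative slack $\alpha r$.

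For the recursive step, the goal is to reduce a problem with domain size $\tilde{X}$ to one with domain size roughly $\log \tilde{X}$, so that after $L = \log^* \tilde{X}$ levels the domain is constant. Partition the ordered $\tilde{\cX}$ into $k = O(\log \tilde{X})$ contiguous blocks $B_1, \ldots, B_k$ and define a block-level quality $\hat Q(\cS,t) = \max_{i \in B_t} Q(\cS,i)$. Two properties are crucial here: (i) $\hat Q(\cS,\cdot)$ has sensitivity $1$ because $Q$ does, and (ii) $\hat Q(\cS,\cdot)$ is quasi-concave as a sequence in $t$ whenever $Q(\cS,\cdot)$ is quasi-concave on $\tilde{\cX}$ — this follows from the fact that the maximum of a quasi-concave function over any contiguous interval is itself quasi-concave in the index of the interval. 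One then recursively invokes $\AlgRecConcave$ on $\hat Q$ (a problem of domain size $\log \tilde{X}$) to obtain a block $t^*$ whose block-maximum is close to $\max_i Q(\cS,i)$, and then refines inside $B_{t^*}$ (either by a further recursive call or by a direct Exponential Mechanism, depending on how the recursion is unrolled).

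For privacy, each of the $L$ levels invokes one $(\eps_0, \delta_0)$-differentially private sub-procedure (the Exponential Mechanism, or the recursive call, which by induction is $(\eps_0, \delta_0)$-DP on its own inputs), and the overall algorithm touches $\cS$ only through these sub-procedures; basic composition (\cref{thm:composition1}) yields the required $(\eps,\delta)$-DP guarantee. For utility, a union bound over the $L$ levels turns the per-level failure probability $\beta_0 = \beta / L$ into overall failure $\beta$, and each level contributes a constant multiplicative loss (at most $1/8$, say) to the value attainable for the maximum of $Q$. Chaining these losses, the final value is at least $(1/8)^L = 8^{-\log^* \tilde{X}}$ times the value attainable without loss, so the required initial maximum scales as $8^{\log^* \tilde{X}} \cdot \frac{L}{\alpha \eps} \log(L^2 / (\beta\delta))$, matching the bound in~\eqref{eq:largeQ}.

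The main obstacle is showing that the recursion, as set up, genuinely reduces the domain size from $\tilde{X}$ to something of order $\log \tilde{X}$ in a single level while preserving both quasi-concavity and the sensitivity-$1$ property of the summary function, and that an approximate optimum of the summary problem can be lifted back to an approximate optimum of the original problem without paying for the size of the selected block. In particular, one must argue carefully that after committing to block $B_{t^*}$, the residual problem restricted to $B_{t^*}$ is itself a sensitivity-$1$ quasi-concave problem whose own maximum is close to $\max_i Q(\cS,i)$, so that the inductive hypothesis applies cleanly; balancing this reduction against the per-level budget $\eps_0 = \eps/\log^* \tilde{X}$ and the per-level multiplicative loss is the most delicate part of the analysis.
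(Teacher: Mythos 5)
This proposition is imported from \cite{BNS16a}; the paper does not prove it, so there is no in-paper proof to compare against. Your sketch attempts to reconstruct the $\AlgRecConcave$ recursion of \cite{BNS16a}, and some local observations are correct: if $Q$ has sensitivity $1$ and is quasi-concave, then $\hat Q(\cS,t)=\max_{i\in B_t}Q(\cS,i)$ over contiguous blocks is also sensitivity-$1$ and quasi-concave. But the overall recursion you propose does not close.

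The gap is the ``refine inside $B_{t^*}$'' step. Partitioning $\tilde{\cX}$ into $k=O(\log\tilde X)$ contiguous blocks means each block has size $\tilde X/\log\tilde X$. The recursive call on $\hat Q$ is indeed over a domain of size $\log\tilde X$, but after committing to $t^*$ you still face a sub-problem over $B_{t^*}$ of size $\tilde X/\log\tilde X$ --- essentially as large as the original. A direct Exponential Mechanism over $B_{t^*}$ costs $\Theta(\log\size{B_{t^*}}/\eps)=\Theta(\log\tilde X/\eps)$, far beyond the $8^{\log^*\tilde X}/\eps$ budget in~\eqref{eq:largeQ}; and replacing it with ``a further recursive call'' shrinks the domain only by a factor of $\log\tilde X$ per level along the refinement branch, so reaching a constant-size base case takes about $\log\tilde X/\log\log\tilde X$ levels, not $\log^*\tilde X$. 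Either way the privacy/accuracy accounting is off by an exponential margin, and your own statement of the budget ($\eps_0=\eps/\log^*\tilde X$ per level, chained multiplicative loss $8^{-\log^*\tilde X}$) no longer matches the depth of the recursion tree you have built, which is a branching tree rather than a chain.

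The actual construction in \cite{BNS16a} is structured precisely so that there is \emph{no} back-lifting over a domain of comparable size. Rather than choosing a fixed block, it recurses on the function $L(\cS,j)\eqdef\max_{[a,b]\subset\tilde{\cX},\,b-a+1=2^j}\min_{i\in[a,b]}Q(\cS,i)$ (the same $L$ that appears in \cref{fact:run-time-recconcave}), whose domain is $\{0,\ldots,\lceil\log\tilde X\rceil\}$, to privately determine a \emph{scale} $j^*$; the recursion is therefore a chain of depth $\log^*\tilde X$. Having fixed $j^*$, quasi-concavity ensures the level sets of $Q(\cS,\cdot)$ are intervals, and $j^*$ is chosen so that only a bounded number of length-$2^{j^*}$ candidate intervals have near-maximal min-value. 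Selecting one of them (and then any point inside it) is done via a stability-based selection mechanism whose cost depends on this bounded count and on $\log(1/(\beta\delta))/\eps$, \emph{not} on the $\Theta(\tilde X/2^{j^*})$ total candidates. That is exactly the ingredient your sketch is missing: you correctly flag the back-lifting as the ``delicate part,'' but the fixed block partition gives you no control over the number of good points in $B_{t^*}$ and no selection primitive whose cost is independent of $\size{B_{t^*}}$, so the delicate part cannot be carried out with the tools you invoke.
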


We next give a short summary of how Algorithm $\AlgRecConcave$ works.

\cite{BNS13b} observed that a quasi-concave promise problem can be privately approximated using a solution to a smaller instance of a quasi-concave promise problem. Specifically, they showed that for any quasi-concave  function $Q:\cX^*\times\tcX \rightarrow\R$ with a (large enough) promise $r$, there exists a quasi-concave function $Q' :\cX^*\times \tcX' \rightarrow\R$ with a promise $r' = \Omega(\alpha r)$ and with $\size{\tcX'} \approx \log \size{\tcX}$, such that the task of privately finding $j \in\tcX$ with $Q(\cS,j)\geq(1-\alpha)r$ is reduced to the task of privately finding $k \in \tcX'$ with $Q'(\cS,k) \geq (1-\alpha) r'$. This resulted in a recursive algorithm $\AlgRecConcave$ for optimizing $Q$. 
For the sake of completeness, we give more details in \cref{sec:moreRecConcave}.

\remove{
	We next present an alternative definition of quasi-concave functions that is more convenient to use.
	\begin{observation}
		Let $i_{\rm max}$ be a value such that $Q(i_{\rm max})$ is maximal.
		A function $Q(Â·)$ is quasi-concave if and only if $Q(i) \leq Q(j)$ for all $i,j\in \tilde X$ such that $i < j < i_{\rm max}$ or $i_{\rm max} < j < i$.
		
		\begin{enumerate}
			\item
			The function is non-decreasing  before $i_{\rm max}$, that is 
			$Q(i) \leq Q(j)$ for every $i,j\in \tilde{X}$ such that $i < j < i_{\rm max}$, {\em and}
			\item
			The function is non-increasing after $i_{\rm max}$, that is 
			$Q(i) \geq Q(j)$ for every $i,j\in \tilde{X}$ such that $i_{\rm max} < i < j$.
		\end{enumerate}
	\end{observation} 
}

\subsection{Halfspaces and Convex Hull}\label{def:Tukey}

We next define the geometric objects we use in this paper.

\begin{definition}[Halfspaces and Hyperplanes]
	For $\pt{a} = (a_1,\ldots,a_d) \in \R^d \setminus \set{(0,\ldots,0)}$ and $w\in \R$, let the halfspace defined by $(\pt{a},w)$ be
	$\hs_{\pt{a},w} \eqdef \set{\px \in \R^d \colon \iprod{\pa,\px} \geq w}$. For a domain $\cD \subseteq \R^d$  define the concept class $\halfspace(\cD) = \set{c_{\pa,w} \colon \cD \mapsto \set{-1,1}}$, letting $c_{\pa,w}$ be the function that on input $\px \in \cD$ outputs $1$ iff $\px \in \hs_{\pa,w}$.
	The hyperplane $\hp_{\pt{a},w}$ defined by $(\pt{a},w)$ is the set of all points $\pt{x} \in \R^d$ such that $\iprod{\pa,\px}= w$.
	\remove{
		For $\pt{a} = (a_1,\ldots,a_d) \in \R^d \setminus \set{(0,\ldots,0)}$ and $w\in \R$, let the halfspace defined by $(\pt{a},w)$ be
		$\hs_{\pt{a},w} \eqdef \set{\px \in \R^d \colon \iprod{\pa,\px} \geq w}$, and let $\hs_{\pa} \eqdef \hs_{\pt{a},1}$. For a domain $\cD \subseteq \R^d$  define the concept class $\halfspace(\cD) = \set{c_{\pa,w} \colon \cD \mapsto \set{-1,1}}$, letting $c_{\pa,w}$ be the function that on input $\px \in \cD$ outputs $1$ iff $\px \in \hs_{\pa,w}$.
		The hyperplane $\hp_{\pt{a},w}$ defined by $(\pt{a},w)$ is the set of all points $\pt{x} \in \R^d$ such that $\iprod{\pa,\px}= w$, and let $\hp_{\pa} \eqdef \hp_{\pa,1}$.
		For $\pt{a} \in \R^d$ and $w \in \R$, we define $\hs_{\pt{a},w}^1 \eqdef \hs_{\pt{a},w}$ and $\hs_{\pt{a},w}^{-1} \eqdef  \set{\px \in \R^d \colon \iprod{\pa,\px} \leq w}$, and we omit the $w$ in case $w=1$.
	}
	
	\remove{\Enote{Added:}In addition, throughout this paper we let $\hs_{\pt{a}} = \hs_{\pt{a},1}$ and $\hp_{\pt{a}} = \hp_{\pt{a},1}$, and note that for any $w \neq 0$ is holds that \Enote{Not true!} $\hs_{a_1,\dots,a_d,w} = \hs_{a_1/w,\dots,a_d/w}$ and $\hp_{a_1,\dots,a_d,w} = \hp_{a_1/w,\dots,a_d/w}$.}
\end{definition}

\begin{definition}[Convex Hull]
	Let $\cP\subseteq \R^d$ be a set of points. The convex hull of $\cP$, denote by $\chull(\cP)$, is the set of all points $\px \in \R^d$ that are convex combination of elements of $\cP$. That is, $\px \in \chull(\cP)$ iff there exists a finite subset $\cP' \subseteq \cP$ and
	numbers $\set{\lambda_{\pt{y}}}_{\pt{y} \in \cP'}$ such that
	$\sum_{\pt{y} \in \cP'} \lambda_\pt{y} =1$ and
	$\sum_{\pt{y} \in \cP'} \lambda_\pt{y} \pt{y}=\pt{x}$. 
\end{definition}

We use the following fact.

\begin{fact}[Caratheodory's theorem]\label{fact:Caratheodory}
	Let $\cP\subseteq \R^d$ be a set of points. Then any $\px \in \chull(\cP)$ is a convex combination of at most $d+1$ points in $\cP$.
\end{fact}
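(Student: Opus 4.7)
The plan is to prove Carathéodory's theorem by the classical dimension-reduction argument: start with an arbitrary convex combination expressing $\px$ and, as long as more than $d+1$ points are used with positive coefficients, exhibit a rewriting that drops one point while keeping the representation valid.

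First I would fix $\px \in \chull(\cP)$ and write $\px = \sum_{i=1}^k \lambda_i \py_i$ for some $\py_1,\ldots,\py_k \in \cP$ and coefficients $\lambda_i > 0$ with $\sum_{i=1}^k \lambda_i = 1$, taking $k$ minimal among all such representations. The claim is that this minimal $k$ is at most $d+1$. Suppose toward contradiction that $k \geq d+2$. Consider the $k-1 \geq d+1$ vectors $\py_2-\py_1, \py_3-\py_1, \ldots, \py_k-\py_1$ in $\R^d$; since there are more than $d$ of them, they are linearly dependent. Thus there exist real numbers $\mu_2,\ldots,\mu_k$, not all zero, such that $\sum_{i=2}^k \mu_i(\py_i - \py_1) = 0$. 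Setting $\mu_1 \eqdef -\sum_{i=2}^k \mu_i$, we obtain coefficients $\mu_1,\ldots,\mu_k$, not all zero, satisfying
\begin{equation*}
\sum_{i=1}^k \mu_i \py_i = 0 \qquad \text{and} \qquad \sum_{i=1}^k \mu_i = 0.
\end{equation*}

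Next, for any $t \in \R$ define $\lambda_i(t) \eqdef \lambda_i - t \mu_i$. The two identities above imply $\sum_{i=1}^k \lambda_i(t) = 1$ and $\sum_{i=1}^k \lambda_i(t)\py_i = \px$, so this is again a representation of $\px$ as an affine combination of the $\py_i$'s. Because $\sum_i \mu_i = 0$ and the $\mu_i$ are not all zero, at least one $\mu_i$ is strictly positive. Let $t^* \eqdef \min\set{\lambda_i/\mu_i \colon \mu_i > 0}$; then $t^* > 0$, all $\lambda_i(t^*) \geq 0$, and at least one index $i^*$ achieves $\lambda_{i^*}(t^*) = 0$. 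Discarding this index yields a convex combination of at most $k-1$ points of $\cP$ equal to $\px$, contradicting the minimality of $k$.

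I do not expect any serious obstacle: this is a textbook argument and every step is an elementary linear-algebra manipulation. The only thing to be slightly careful about is to insist on picking the positive $t^*$ (not a negative one), so that the surviving $\lambda_i(t^*)$'s remain nonnegative and the resulting expression is genuinely a convex combination rather than merely an affine one.
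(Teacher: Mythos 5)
Your proof is correct and is the standard dimension-reduction argument for Carath\'eodory's theorem. The paper states this as a classical \emph{Fact} without giving a proof (it is a textbook result), so there is nothing in the paper to compare against; your write-up fills that gap cleanly, and the one point that requires care---choosing $t^*$ as the minimum of $\lambda_i/\mu_i$ over the indices with $\mu_i>0$ so that all surviving coefficients stay nonnegative---is handled correctly.
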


\section{Optimizing High-Dimensional Functions}\label{sec:OptHighDimFunc}

In this section we present our general method for privately optimizing high dimensional functions. In the following, let $\cX$ be a domain and let $f \colon \cX^* \times \R^d \rightarrow \R$ be a function that given a dataset $\cS \in \cX^*$, we would like to approximately maximize $f(\cS,\cdot)$. Formally, given $\alpha,\beta,\epsilon,\delta \in (0,1)$, our goal is to design an $(\eps,\delta)$-differential private algorithm that with probability $1-\beta$ finds $\px^* \in \R^d$ with $f(\cS,\px^*) \geq (1-\alpha) M_{\cS}$ for $M_{\cS} \eqdef \max_{\px} f(\cS,\px)$. We do so by optimizing a different (but related) function $f_{\Conv}$, which we call the ``convexification'' of $f$.

\begin{definition}[The convexification of $f$]\label{def:convexification}
	For $\cS \in \cX^*$ and $y \in \R$, let $\cD_{\cS}(y) \eqdef \set{\pz \in \R^d \colon f(\cS,\pz) \geq y}$ and $\cC_{\cS}(y) \eqdef \chull\paren{\cD_{\cS}(y)}$. We define the convexification of $f$ as the function $f_{\Conv} \colon \cX^* \times \R^d \rightarrow \R$ defined by $f_{\Conv}(\cS,\px) \eqdef \max\set{y \in \R \colon \px \in \cC_{\cS}(y)}$.
\end{definition}

Namely, $f_{\Conv}(\cS,\px) = y$ if and only if $y$ is the maximal value such that $\px$ is a convex combination of points $\pz$ with $f(\cS,\z) \geq y$. Note that by definition it is clear that $f(\cS,\px) \leq f_{\Conv}(\cS,\px)$ for any $(\cS,\px) \in \cX^* \times \R^d$. Yet, observe that 
$\max_{\px} f_{\Conv}(\cS,\px) = M_{\cS}$.

In the following, assume that points with high value of $f_{\Conv}$ also have somewhat high value of $f$. Formally, assume 
there exists $\Delta \geq 1$ that satisfies the following requirement:
\begin{requirement}\label{eq:Delta}
	$\forall (\cS,\px) \in \cX^* \times \R^d:\text{ }f(\cS,\px) \geq \Delta \cdot f_{\Conv}(\cS,\px) - (\Delta-1) \cdot M_{\cS}$
\end{requirement}
Requirement~\ref{eq:Delta} can be interpreted as follows: For any $(\cS,\px) \in \cX^* \times \R^d$, if $f_{\Conv}(\cS,\px) = (1-\lambda)M_{\cS}$, then $f(\cS,\px) \geq (1-\lambda \Delta)M_{\cS}$. This reduces the task of finding a point $\px^*$ with $f(\cS,\px^*) \geq (1-\alpha)M_{\cS}$ to the task of finding a point $\px^*$ with $f_{\Conv}(\cS,\px^*) \geq (1 - \alpha/\Delta)M_{\cS}$.

Following the above assumption, the idea of our algorithm is to find a point $\pt{x^*}=(x_1^*,\dots,x_d^*)$ with large $f_{\Conv}$ coordinate after coordinate: we use $\AlgRecConcave$ to find a value $x^*_1$ that can be extended by some $\tx_2,\ldots,\tx_d$ so that $f_{\Conv}(x_1^*,\tx_2\ldots,\tx_d)$ is close to $M_{\cS}$, then we find a value $x_2^*$ so that there is a point $(x^*_1,x_2^*,\tx_3\dots,\tx_d)$ whose $f_{\Conv}$ is close to $M_{\cS}$, and so forth until we find all coordinates. The parameters in $\AlgRecConcave$ are set such that in each step we lose at most $\alpha M_{\cS}/\paren{d \Delta}$ from the value of $f_{\Conv}$, resulting in a point  $(x^*_1,\dots,x^*_d)$ whose $f_{\Conv}$ is at least $(1 - \alpha/\Delta)M_{\cS}$.

\subsection{Defining a Quasi-Concave Function with Small Sensitivity}

To apply the above approach, we need to prove that the functions considered in the algorithm $\AlgRecConcave$  are quasi-concave and have small sensitivity of the dataset $\cS$.

\begin{definition}\label{def:Q-general}
	For $1 \leq i \leq d$ and $x_1^*,\ldots,x_{i-1}^* \in \R$, define
	\begin{align*}
	Q_{x_1^*,\ldots,x_{i-1}^*}(\cS,x_i) \eqdef \max_{\tx_{i+1},\ldots,\tx_d \in \R} f_{\Conv}(\cS, x_1^*,\ldots,x_{i-1}^*,x_i,\tx_{i+1},\ldots,\tx_d).
	\end{align*}
\end{definition}
\vspace{-10pt}
We first prove that the function $Q_{x^*_1,\dots,x^*_{i-1}}(\cS,\cdot)$ is quasi-concave .

\begin{claim}\label{claim:quasi-concave-under-convex}
	For every $i \in [d]$ and $x_1^*,\ldots,x_{i-1}^* \in \R$, the function $Q_{x_1^*,\ldots,x_{i-1}^*}(\cS,\cdot)$ is quasi-concave.
\end{claim}
\vspace{-10pt}
\begin{proof}
	Fix $i \in [d]$ and $x_1^*,\ldots,x_{i-1}^* \in R$, and fix values $x_i,x_i',x_i'' \in \R$ such that $x_i' \leq x_i \leq x_i''$, and let $y \eqdef \min\set{Q_{x^*_1,\dots,x^*_{i-1}}(\cS,x_i'),Q_{x^*_1,\dots,x^*_{i-1}}(\cS,x_i'')}$.  
	By definition, $\exists x_{i+1}',\dots,x_{d}', x_{i+1}'',\dots,x_{d}'' \in \R$ such that both points $\px' = (x^*_1,\dots,x^*_{i-1}, x_i', x_{i+1}',\dots,x_{d}')$ and $\px'' = (x^*_1,\dots,x^*_{i-1}, x_i'', x_{i+1}'',\dots,x_{d}'')$ belong to $\cC_{\cS}(y)$. In the following, let $p \in [0,1]$ be the value such that $x_i = p x_i' + (1-p)x_i''$, and let $\px = (x^*_1,\dots,x^*_{i-1}, x_i, x_{i+1},\dots,x_{d})$ where $x_j = p x_j' + (1-p)x_j''$ for $j \in \set{i+1,\ldots,d}$. Since $\px$ lies on the line segment between $\px'$ and $\px''$, it holds that $\px \in \cC_{\cS}(y)$ (recall that $\cC_{\cS}(y)$ is a convex set). Therefore, we conclude that $Q_{x^*_1,\dots,x^*_{i-1}}(x_i) \geq y$, as required.
\end{proof}

We next prove that $Q_{x^*_1,\dots,x^*_{i-1}}(\cdot,x_i)$  has low sensitivity.

\begin{claim}\label{claim:sen-1-under-convex}
	Assume that $f$ is a sensitivity-$k$ function. Then for all $i \in [d]$ and $x_1^*,\ldots,x_{i-1}^*\in R$, $Q_{x_1^*,\ldots,x_{i-1}^*}$ is a sensitivity-$k$ function.
\end{claim}
\vspace{-10pt}
\begin{proof}
	Fix two neighboring datasets $\cS,\cS' \in \cX^*$. By assumption, it holds that $f(\cS,\px) \geq f(\cS',\px)-k$ for every $\px \in \R^d$. This yields that $\cC_{\cS'}(y) \subseteq \cC_{\cS}(y-k)$ for every $y \in \R$. Hence, we deduce by the definition of $Q_{x^*_1,\dots,x^*_{i-1}}$ that $Q_{x^*_1,\dots,x^*_{i-1}}(\cS,x_i) \geq Q_{x^*_1,\dots,x^*_{i-1}}(\cS',x_i) - k$ for every $x_i \in \R$.
\end{proof}

In order to apply algorithm $\AlgRecConcave$, for every $1\leq i \leq d$ it is required to determine a finite domain $\tcX_i = \tcX_i(x^*_1,\ldots,x^*_{i-1})$ which contains a value $x_i^*$ that reaches the maximum of $Q_{x^*_1,\dots,x^*_{i-1}}(\cS,\cdot)$ under $\R$.\footnote{
	We remark that this step might be involved for some $d$-dimensional functions,  but is inherent for privately optimizing them (at least if the optimization is done coordinate by coordinate). Yet, once we determine such domains with some finite bound $\tX$ on their sizes, it usually not blows up the resulting sample complexity of our algorithm since it only depends on $2^{O(\log^* \tX)}$ (see \cref{thm:HighDimFunc}).} Namely, we need to determined an iterative sequence of domains $\set{\tcX_i(\cdot)}_{i=1}^d$ that satisfies the following requirement:
\begin{requirement}\label{eq:domain}
	For every $\cS \in \cX^*$ and every $x^*_1,\dots,x^*_{i-1} \in \R$, it holds that
	\begin{align*}
	\exists x_i \in \tcX_i:\text{  }Q_{x^*_1,\dots,x^*_{i-1}}(\cS,x_i) = \max_{\tx_i \in \R} Q_{x^*_1,\dots,x^*_{i-1}}(\cS,\tx_i).
	\end{align*}
\end{requirement}

\subsection{The Algorithm}
In \cref{fig:OptimizeHighDimFunc}, we present an $(\eps,\delta)$-differentially private algorithm $\AlgOptimizeHighDimFunc$ that finds with probability at least $1-\beta$  a point $\px^* \in \R^d$ with $f(\cS,\px^*) \geq (1-\alpha) M_{\cS}$.

\begin{figure}[thb!]
	\begin{center}
		\noindent\fbox{
			\parbox{.95\columnwidth}{
				\begin{center}{ \bf Algorithm $\AlgOptimizeHighDimFunc$}\end{center}
				\remove{
					{\bf Preprocessing:}
					\begin{itemize}
						\item Construct the sets $\tilde{X}_1,\ldots,\tilde{X}_d$ as  in Claim~\ref{cl:sets}. Let $\tilde{T}=\max_{1\leq i \leq d} |\tilde{X_i}|$. 
						\\$(*$ By Claim~\ref{cl:sets}, $\log^* \tilde{T}=\log^* d +\log^*T +O(1)$. $*)$
					\end{itemize}
					
					{\bf Algorithm:} 
				}
				\begin{itemize}[topsep=-3pt, rightmargin=5pt]
					\item [(i)] 
					Let $\alpha,\beta,\eps,\delta \in (0,1)$ be the utility/privacy parameters, let $\cS \in \cX^*$ be an input dataset, let $\set{\tcX_i(\cdot)}_{i=1}^d$ be an iterative sequence of finite domains, and let $\Delta \geq 1$.
					\item [(ii)] For $i=1$ to $d$ do:
					\begin{itemize}
						\item [(a)]
						Let $Q_{x^*_1,\dots,x^*_{i-1}}$ be the function from Definition~\ref{def:Q-general}.
						\item [(b)] Let $\tcX_{i} = \tcX_{i}(x^*_1,\dots,x^*_{i-1})$.
						\item [(c)]
						Execute $\AlgRecConcave$ with the function $Q_{x^*_1,\dots,x^*_{i-1}}$, domain $\tcX_{i}$, and parameters:	
							
						$r=(1-\frac{\alpha}{2d\Delta})^{i-1} M_{\cS}$, $\tilde{\alpha}=\frac{\alpha}{2d\Delta},\tilde{\beta}=\frac{\beta}{d},\tilde{\eps}=\frac{\eps}{2\sqrt{2d\ln(2/\delta)}},\tilde{\delta}=\frac{\delta}{2d}$.
						
						Let $x^*_i$ be its output.
						
					\end{itemize}
					\item [(iii)] Return $\px^* = (x^*_1,\dots,x^*_d)$.\\
				\end{itemize}
		}}
	\end{center}
	\caption{Algorithm for finding a point $\px^* \in \R^d$ with $f(\cS,\px^*) \geq (1-\alpha) M_{\cS}$.\label{fig:OptimizeHighDimFunc}}
\end{figure}

The following theorem summarizes the properties of $\AlgOptimizeHighDimFunc$.

\begin{theorem}\label{thm:HighDimFunc}
	Let $\cX$ be a domain and $f \colon \cX^* \times \R^d \rightarrow \R$ be a sensitivity-1 function. Let $\Delta\geq 1$ be a value that satisfies Requirement~\ref{eq:Delta}, and let $\set{\tcX_{i}(\cdot)}_{i=1}^d$ be an iterative sequence of finite domains that satisfies Requirement~\ref{eq:domain} (all with respect to $f$). In addition, let $\alpha,\beta,\eps \leq 1$, $\delta < 1/2$, and let $\cS \in \cX^*$ be a dataset with $M_{\cS} \eqdef \max_{\px \in \R^d} f(\cS,\px) \geq \Omega\Biggl(\Delta\cdot d^{1.5}\cdot 2^{O(\log^*\tX)}\cdot \frac{\log^{1.5}\bigl(\frac{1}{\delta}\bigr) \log \bigl(\frac{d}{\beta}\bigr)}{\eps \alpha}\Biggr)$, 
	where $\tX \eqdef \max_{i,x_1^*,\ldots,x_{i-1}^*} \size{\tcX_{i}(x_1^*,\ldots,x_{i-1}^*)}$.
	Then, $\AlgOptimizeHighDimFunc$ is an $(\eps,\delta)$-differentially private algorithm
	that with probability $1-\beta$ returns a point $\px^* \in \R^d$ with
	$f(\cS,\px^*) \geq (1-\alpha) M_{\cS}$.
\end{theorem}
\begin{proof}
	By Claims \ref{claim:quasi-concave-under-convex} and \ref{claim:sen-1-under-convex}, the proof follows similarly to Theorem 20 of \cite{BeimelMNS19} using the properties of $\AlgRecConcave$. For completeness, we give the full details below.
	\paragraph{Utility.}
	We prove by induction that after step $i$ of the algorithm, with probability at least $1-i\beta/d$, 
	the returned values $x^*_1,\ldots,x^*_i$ satisfy
	$Q_{x^*_1,\dots,x^*_{i-1}}(\cS,x^*_i)\geq (1-\frac{\alpha}{2d\Delta})^{i} M_{\cS}$,  i.e., there are $x_{i+1},\ldots,x_d \in \R$ 
	such that $f_{\Conv}(\cS,x^*_1,\dots,x^*_{i},x_{i+1},\dots,x_d)\geq (1-\frac{\alpha}{2d\Delta})^{i} \size{\cS}$. This concludes the utility part since after the $d$ iterations, with probability $1-\beta$, $\AlgOptimizeHighDimFunc$ outputs a point $\px^*$ with $f_{\Conv}(\cS,\px^*) \geq (1-\frac{\alpha}{2d\Delta})^{d} M_{\cS}\geq (1-\frac{\alpha}{\Delta}) M_{\cS}$ (follows by the inequality $1- x/2 \geq e^{-x}$ for $x \in [0,1]$), and by assumption on $\Delta$ we deduce that $f(\cS,\px^*) \geq (1-\alpha) M_{\cS}$.
	
	The basis of the induction is $i=1$: By the assumption on $\set{\tcX_{i}(\cdot)}_{i=1}^d$, there exists a value in $\tcX_1$ that maximize $Q(\cS,\cdot)$. By Proposition~\ref{prop:aRecConcave} along with the assumption on $M_{\cS}$, $\AlgRecConcave$ finds with probability at least $1 - \beta/d$ a point $x_1^*  \in \tcX_1$ with $Q(\cS,x_1^*) \geq (1-\frac{\alpha}{2d\Delta})M_{\cS}$.
	
	Next, by the induction hypothesis for $i-1$, it holds that $\max_{x_i \in \R}\set{Q_{x^*_1,\dots,x^*_{i-1}}(\cS,x_i)} \geq (1-\frac{\alpha}{2d\Delta})^{i-1} M_{\cS}$ with probability at least $1-(i-1)\beta/d$, and recall that by assumption there exists $x_i \in \tcX_i = \tcX_i(x^*_1,\dots,x^*_{i-1})$ that reaches the maximum of $Q_{x^*_1,\dots,x^*_{i-1}}(\cS,\cdot)$.
	Therefore, by Proposition~\ref{prop:aRecConcave} along with the assumption on $M_{\cS}$, with probability at least 
	$(1-\beta/d)\bigl(1-(i-1)\beta/d\bigr)\geq 1 - i\beta/d$, Algorithm
	$\AlgRecConcave$ returns $x^*_i\in \tcX_i$ with $Q_{x^*_1,\dots,x^*_{i-1}}(\cS,x_i^*) \geq (1-\frac{\alpha}{2d\Delta})^i M_{\cS}$.
	
	\paragraph{Privacy.}
	By Proposition~\ref{prop:aRecConcave} and Claim~$\ref{claim:sen-1-under-convex}$, each invocation of $\AlgOptimizeHighDimFunc$ is
	$(\tilde{\eps},\tilde{\delta})$-differentially private. $\AlgOptimizeHighDimFunc$ 
	invokes $\AlgRecConcave$ $d$ times. 
	Thus, by \cref{thm:composition2} (the advanced composition) with $\delta'=\delta/2$,
	it follows that $\AlgOptimizeHighDimFunc$ is $(\frac{\eps}{2}+\frac{\eps^2}{4\ln (2/\delta)},\delta)$ differentially-private,
	which implies $(\eps,\delta)$-privacy whenever $\eps \leq 1$ and~$\delta \leq 1/2$.
\end{proof}

\section{The Linear Feasibility Problem}\label{sec:deepPoints}
In this section we show how the method from \cref{sec:OptHighDimFunc} can be used for privately approximating the linear feasibility problem.
In this problem, we are given a finite grid $\cX = \iseg{\pm X} \eqdef \set{x \in \Z \colon \size{x} \leq X}$ for some $X \in \N$ and a dataset  $\cS \in (\cX^d \times \cX)^*$ such that each $(\pa,w)\in \cS$ represents the linear constraint $\iprod{\pa,\px} \geq w$ which defines the halfspace $\hs_{\pa,w}$ in $\R^d$. In the following, we let $\depth_{\cS}(\px) \eqdef \size{\set{(\pa,w) \in \cS \colon \px \in \hs_{\pa,w}}}$ (that is, the number of halfspaces in $\cS$ that contain the point $\px$).
Our goal is to describe, given $\alpha,\beta,\eps,\delta \in (0,1)$, an $(\eps,\delta)$-differential private algorithm that satisfies the following utility guarantee: Given a realizable dataset of halfspaces (i.e., there exists a point $\px \in \R^d$ with $\depth_{\cS}(\px) = \size{\cS}$), then with probability $1-\beta$ the algorithm should output a point $\px^*$ with $\depth_{\cS}(\px^*) \geq (1-\alpha)\size{\cS}$.

In the following, let $\cdepth$ be the convexification of the function $\depth$ (according to Definition~\ref{def:convexification}). That is, $\cdepth_{\cS}(\px) = f_{\Conv}(\cS,\px)$ for the function $f(\cS,\px) = \depth_{\cS}(\px)$.
As a first step towards applying \cref{thm:HighDimFunc} for maximizing $\depth$, we need to determine a value $\Delta \geq 1$ that satisfies Requirement~\ref{eq:Delta}.
Namely, we need to lower bound $ \depth_{\cS}(\px)$ in terms of $\cdepth_{\cS}(\px)$ and $M_{\cS} = \size{\cS}$. For that, we prove the following claim.

\begin{claim}\label{claim:cdepth-to-depth}
	For any $\cS \in (\R^d \times \R)^*$ and any $\px \in \R^d$, it holds that
	\begin{align*}
	\depth_{\cS}(\px) \geq (d+1) \cdot \cdepth_{\cS}(\px) - d\size{\cS}.
	\end{align*}
\end{claim}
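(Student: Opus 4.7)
The plan is to set $k = \cdepth_\cS(\px)$ and $m = \size{\cS}$, and to bound the number of halfspaces in $\cS$ that fail to contain $\px$ by at most $(d+1)(m-k)$. Rearranging the resulting inequality $\depth_\cS(\px) \geq m - (d+1)(m-k)$ immediately gives the claim. The case $k=0$ is vacuous since then the right-hand side is $-dm \leq 0 \leq \depth_\cS(\px)$, so I may assume $k\geq 1$, in which case $\cD_\cS(k)\neq \emptyset$ and $\px\in \chull(\cD_\cS(k))$.

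The first real step is to apply Caratheodory's theorem (\cref{fact:Caratheodory}) to $\px \in \chull(\cD_\cS(k))$: there exist points $\py_1,\dots,\py_{d+1} \in \cD_\cS(k)$ and nonnegative weights $\lambda_1,\dots,\lambda_{d+1}$ summing to $1$ with $\px = \sum_{j=1}^{d+1} \lambda_j \py_j$ (if Caratheodory yields fewer than $d+1$ points, pad by repeating points with weight $0$). By definition of $\cD_\cS(k)$, each $\py_j$ belongs to at least $k$ of the halfspaces in $\cS$, so each $\py_j$ fails to lie in at most $m-k$ halfspaces.

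The second step is the key convexity observation: since every halfspace $\hs_{\pa,w}$ is a convex set, if $\py_1,\dots,\py_{d+1}$ all lie in $\hs_{\pa,w}$, then so does their convex combination $\px$. Contrapositively, any halfspace $(\pa,w)\in \cS$ with $\px \notin \hs_{\pa,w}$ must exclude at least one of the $\py_j$'s. Therefore, by a union bound,
\begin{align*}
\size{\set{(\pa,w)\in\cS \colon \px \notin \hs_{\pa,w}}}
\;\leq\; \sum_{j=1}^{d+1} \size{\set{(\pa,w)\in\cS \colon \py_j \notin \hs_{\pa,w}}}
\;\leq\; (d+1)(m-k).
\end{align*}

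Subtracting from $m$ gives $\depth_\cS(\px) \geq m - (d+1)(m-k) = (d+1)k - dm$, which is exactly the desired bound. I do not anticipate any real obstacle here: the only subtlety is keeping track of the fact that Caratheodory produces at most (not exactly) $d+1$ points, which is handled by padding, and ensuring the boundary case $k=0$ is treated separately since the right-hand side of the claim can be negative there.
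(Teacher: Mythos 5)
Your proof is correct and follows essentially the same route as the paper's: apply Caratheodory's theorem to write $\px$ as a convex combination of at most $d+1$ points of depth at least $k$, then use convexity of halfspaces to argue that any constraint violated by $\px$ must be violated by one of the $d+1$ points, and conclude by a union bound. The only cosmetic differences are your explicit handling of the $k=0$ case and the padding remark, neither of which changes the substance.
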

\begin{proof}
	Fix $\cS \in (\R^d \times \R)^*$ and $\px \in \R^d$, and let $k = \cdepth(\px)$. By definition it holds that $\px \in \chull(\cD_{\cS}(k))$ for $\cD_{\cS}(k) = \set{\px' \colon \depth_{\cS}(\px') \geq k}$. Therefore, by Caratheodory's theorem (Fact~\ref{fact:Caratheodory}) it holds that $\px$ is a convex combination of at most $d+1$ points $\px_1,\ldots,\px_{d+1} \in \cD_{\cS}(k)$. In the following, for $\px' \in \R^d$ let $\cT_{\px'} \eqdef \set{(\pa,w) \in \cS \colon \px' \notin \hs_{\pa,w}}$ and observe that $\depth_{\cS}(\px') = \size{\cS} - \size{\cT_{\px'}}$. Therefore, because for all $i \in [d+1]$ we have $\depth(\px_i) \geq k$, it holds that $\size{\cT_{\px_i}} \leq \size{\cS} - k$. Furthermore, note that $\cT_{\px} \subseteq \bigcup_{i=1}^d \cT_{\px_i}$ (holds since each halfspace that contains a set of points also contains any convex combination of them). We conclude that $\depth_{\cS}(\px) \geq \size{\cS} - \sum_{i=1}^{d+1} \size{\cT_{\px_i}} \geq \size{\cS}-(d+1)(\size{\cS} - k) = (d+1)k - d\size{\cS}$.
\end{proof}
Namely, $\Delta = d+1$ satisfies Requirement~\ref{eq:Delta} for the function $f(\cS,\px) = \depth_{\cS}(\px)$.

\noindent The second step towards applying \cref{thm:HighDimFunc} is to determine an iterative sequence of finite domains $\set{\tcX_i(\cdot)}_{i=1}^d$ that satisfies Requirement~\ref{eq:domain}. Namely, our goal is to determine a finite domain $\tcX_i = \tcX_i(x^*_1,\ldots,x^*_{i-1})$ such that there exists $x_i^* \in \tcX_i$ that reaches the maximum of $Q_{x^*_1,\dots,x^*_{i-1}}(\cS,\cdot)$ under $\R$, where $Q_{x^*_1,\dots,x^*_{i-1}}$ is defined below.

\begin{definition}\label{def:Q_i}
	For every $1 \leq i \leq d$ and every $x^*_1,\dots,x^*_{i-1} \in \R$, define
	$$Q_{x^*_1,\dots,x^*_{i-1}}(\cS,x_i)\eqdef \max_{\tx_{i+1},\dots,\tx_d \in \R} \cdepth_{\cS}(x^*_1,\dots,x^*_{i-1}, x_i, \tx_{i+1},\dots,\tx_d).$$
\end{definition}

The following lemma, proven in \cref{sec:missingProofs:max-in-inters-lemma}, states that at least one of the maximum points $x_i^*$ can be derived by solving a system of linear equations with bounded coefficients.

\def\maxInIntersecLemma{
	Let $X \in \N$, $\cX = \iseg{\pm X}$, $\cS \in (\cX^d \times \cX)^*$, $i \in [d]$, let $x_1^*,\ldots, x_{i-1}^* \in \R$ and let $Q_{x^*_1,\dots,x^*_{i-1}}$ be the function from Definition~\ref{def:Q_i}.  Then there exists an invertible matrix $\pA \in \cX^{(d-i+1)\times(d-i+1)}$ and values $$b_i,\ldots,b_{d} \in \cX - \sum_{j=1}^{i-1} x^*_j\cdot \cX \eqdef \bigcup_{w,a_1,\ldots,a_{i-1} \in \cX}\set{w - \sum_{j=1}^{i-1} a_j x_j^*}$$ such that $(x_i^*,\ldots,x_d^*)^T \eqdef \pA^{-1}\cdot (b_i,\ldots,b_{d})^T$  satisfies $$\cdepth_{\cS}(x_1^*,\ldots,x_d^*) = Q_{x^*_1,\ldots,x^*_{i-1}}(\cS,x_i^*) = \max_{x_i \in \R}\set{Q_{x^*_1,\ldots,x^*_{i-1}}(\cS,x_i)}.$$
}

\begin{lemma}\label{lem:max-in-intersection}
	\maxInIntersecLemma
\end{lemma}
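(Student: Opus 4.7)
Plan: Set $k^* \eqdef \max_{x_i \in \R} Q_{x^*_1,\ldots,x^*_{i-1}}(\cS, x_i)$ and let $H \eqdef \{\px \in \R^d : x_j = x^*_j \text{ for all } j < i\}$, a $(d-i+1)$-dimensional affine subspace of $\R^d$. By the definitions of $Q$ and $\cdepth$, the set $M \eqdef \cC_\cS(k^*) \cap H$ is non-empty, closed, and convex. The task is to exhibit $\px^* \in M$ whose tail $(x^*_i,\ldots,x^*_d)^T = \pA^{-1}\pb$ for some invertible $\pA \in \cX^{(d-i+1)\times(d-i+1)}$ and $\pb \in (\cX - \sum_{j<i} x^*_j \cdot \cX)^{d-i+1}$. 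Equivalently, $\px^*$ must satisfy $d-i+1$ linearly independent equations of the form $\iprod{\pa_r, \px} = w_r$ with $\pa_r \in \cX^d$ and $w_r \in \cX$, whose tail coordinates (i.e., the last $d-i+1$ entries of each $\pa_r$) are linearly independent in $\R^{d-i+1}$; from such a collection one reads off $\pA$ (rows = tail coefficients) and $b_r = w_r - \sum_{j<i} a_{r,j} x^*_j$, which lies in the prescribed set by construction.

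My plan is an iterative construction: starting from an arbitrary $\py_0 \in M$, I repeatedly slide the current point $\py_r \in M$ inside $\cC_\cS(k^*)$, each slide committing to one additional $\cX$-coefficient tight equation, until $d-i+1$ such equations have been accumulated. Let $V_r \subseteq H$ be the affine subspace defined by the $r$ currently tight $\cX$-coefficient equations; it has positive dimension whenever $r < d-i+1$, so there is a direction $\pv \in V_r$ along which I can translate $\py_r$ while staying in $H$ and respecting those tight equations. The restriction of $\cdepth_\cS(\cdot)$ to the line $\py_r + t\pv$ is quasi-concave (by the same convex-level-set argument used in Claim~\ref{claim:cdepth-concave}), so it attains its maximum value $k^*$ on a closed interval; sliding $\py_r$ to an endpoint of that interval produces $\py_{r+1} \in M$ lying on the relative boundary of $\cC_\cS(k^*)$.

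To extract the new tight equation, I would exploit that $\cC_\cS(k^*) = \chull(\cD_\cS(k^*))$ is a polyhedron, being the convex hull of the finite union of polyhedra $P_\cT = \bigcap_{(\pa,w)\in\cT} \hs_{\pa,w}$ over subsets $\cT \subseteq \cS$ of size $k^*$, each defined by $\cX$-coefficient constraints. Every facet of $\cC_\cS(k^*)$ therefore lies either \textbf{(a)} on a single $\cS$-constraint hyperplane (so the tight equation is automatically $\cX$-coefficient), or \textbf{(b)} on a ``bridge'' hyperplane spanned by $d$ affinely independent vertices of $\cD_\cS(k^*)$ from different $P_\cT$'s, each supporting vertex being the intersection of $d$ $\cS$-hyperplanes. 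In case (a) I read off the new $\cX$-coefficient tight equation immediately from $\cS$; in case (b), using that every supporting vertex of the bridge facet satisfies $d$ $\cS$-constraint equalities, I plan to pick an $\cS$-hyperplane incident to one of those vertices that actually passes through $\py_{r+1}$, providing the required $\cX$-coefficient relation. After $d-i+1$ iterations, the point $\px^* = \py_{d-i+1}$ satisfies $d-i+1$ linearly independent $\cX$-coefficient tight equations, yielding the desired $\pA$ and $\pb$.

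The main obstacle I expect is justifying the extraction step in case (b): the bridge-facet supporting hyperplane itself can have coefficients far exceeding $\cX$, so one cannot simply use it. The plan is to show that the sliding direction $\pv$ can always be chosen (possibly by a small generic perturbation of the direction within $V_r$) so that $\py_{r+1}$ lies on some $\cS$-constraint hyperplane incident to a supporting vertex of the bridge facet, thereby supplying an $\cX$-coefficient tight equation linearly independent (on the tail coordinates) from the previous $r$. Establishing that such a direction always exists, and that the incident $\cS$-hyperplane indeed contains $\py_{r+1}$, is where the geometric argument is most delicate.
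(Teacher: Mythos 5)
Your plan shares the paper's high-level shape (iteratively accumulate $d-i+1$ tight $\cX$-coefficient hyperplane equations in the slice $H$), but the mechanism is genuinely different and your acknowledged gap in case (b) is real, not a loose end you can patch with a generic perturbation. You slide a point to the boundary of $\cC_\cS(k^*)\cap H$ and then try to read a tight constraint off the facet you hit. The difficulty is that a bridge facet of $\cC_\cS(k^*)$ can have a relative interior that is disjoint from \emph{every} hyperplane $\hp_{\pa,w}$ with $(\pa,w)\in\cS$: it lives strictly between two cells of the arrangement. A generic choice of sliding direction in $V_r$ will, if anything, make you land \emph{inside} the relative interior of such a facet, which is the opposite of landing on an $\cS$-hyperplane incident to one of its supporting vertices. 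So the extraction step in case (b) is a missing idea, not a technicality.

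The paper's proof (via Lemma~\ref{lem:for-running-time}) sidesteps bridge facets entirely by a different observation. It works with a \emph{decreasing point} $\tx_i$ of $Q_{x_1^*,\dots,x_{i-1}^*}(\cS,\cdot)$ and argues that at $x_i=\tx_i$ the optimal $\cdepth$ value is witnessed by a point whose actual $\depth$ (not just $\cdepth$) equals $k$. Concretely: the extremal point in $\cC_\cS(k)\cap H\cap\{x_i=\tx_i\}$ is a convex combination of points of $\cD_\cS(k)$, and the decreasing-point property forces all these generators to have $i$-th coordinate exactly $\tx_i$, so one of them is a genuine $\depth$-$k$ point on the slice. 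Because $\cD_\cS(k)$ (unlike $\cC_\cS(k)$) is carved out directly by the input hyperplanes, the tight constraints at that point are automatically of the form $\hp_{(a_i,\dots,a_d),\,w-\sum_{j<i}a_jx_j^*}$ with $\pa\in\cX^d,\ w\in\cX$; collecting them (and iterating on $x_{i+1},x_{i+2},\dots$ inside the current intersection, pruning to linear independence) yields $\pA$ and $\pb$ of exactly the prescribed form. This $\cdepth$-to-$\depth$ collapse at a decreasing point is the key lemma your plan is missing: it is what guarantees you never face a bridge facet in the first place. Without it, case (b) cannot be resolved by direction perturbation, and there is also a secondary unaddressed point in your plan, namely that the new tight equation must be linearly independent from the previous ones \emph{on the tail coordinates} $(x_i,\dots,x_d)$ rather than merely linearly independent in $\R^d$.
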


Using Lemma~\ref{lem:max-in-intersection}, we can now define a finite domain for each iteration $i\in [d]$.

\begin{definition}[The domain $\tcX_i = \tcX_i(x^*_1,\ldots,x^*_{i-1})$]\label{def:domain-tcXi}
	We define the domains $\set{\tcX_i}_{i=1}^d$ iteratively. For $i=1$ let $\tcX_1 \eqdef \tcX_1' / \tcX_1''$ where $\tcX_1' \eqdef \iseg{\pm (d\cdot d!) \cdot X^{d}}$ and $\tcX_1'' \eqdef \paren{\iseg{\pm d!\cdot X^{d}}}\setminus \set{0}$.\footnote{Recall that for $a \in \Z^+$ we let $\iseg{\pm a} = \set{-a,-a+1,\ldots,a}$.} For $i > 1$ and given $x^*_j = s_j/t_j \in \tcX_j' / \tcX_j'' = \tcX_j$ for $j \in [i-1]$, define $\tcX_i = \tcX_i(x^*_1,\ldots,x^*_{i-1}) \eqdef \tcX_i' / \tcX_i''$ where $\tcX_i' \eqdef \iseg{\pm (d\cdot d!)^i \cdot X^{di}}$ and $\tcX_i'' = \tcX_i''(t_{i-1}) \eqdef \paren{\iseg{\pm d!\cdot X^{d}}\cdot t_{i-1}}\setminus \set{0}$.
\end{definition}

We next prove that the above sequence $\set{\tcX_i(\cdot)}_{i=1}^d$ satisfies Requirement~\ref{eq:domain}.

\begin{lemma}\label{lemma:domain-tcXi}
	Let $X \in \N$, $\cX = \iseg{\pm X}$, $\cS \in (\cX^d \times \cX)^*$, $i \in [d]$ and $x^*_1 \in \tcX_1,\ldots,x^*_{i-1} \in \tcX_{i-1}$, where $\tcX_j = \tcX_j(x^*_1,\dots,x^*_{i-1})$, for $j \in [i]$, is according to Definition~\ref{def:domain-tcXi}. Then there exists $x_i^* \in \tcX_i$ such that 
	\begin{align}\label{eq:x_i-reach-max}
	Q_{x^*_1,\dots,x^*_{i-1}}(\cS,x_i^*) = \max_{x_i \in \R}\set{Q_{x^*_1,\dots,x^*_{i-1}}(\cS,x_i)}
	\end{align}
\end{lemma}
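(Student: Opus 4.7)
The plan is to invoke Lemma~\ref{lem:max-in-intersection}, apply Cramer's rule to the resulting linear system to obtain a particular maximizer $x_i^*$, and then verify by induction on $i$ that this rational admits a representation $s_i/t_i$ with $s_i\in\tcX_i'$ and $t_i\in\tcX_i''$.

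Concretely, Lemma~\ref{lem:max-in-intersection} supplies an invertible $\pA\in\cX^{(d-i+1)\times(d-i+1)}$ together with values $b_i,\ldots,b_d$ of the form $b_k=w_k-\sum_{j<i}a_{kj}x_j^*$ with $w_k,a_{kj}\in\cX$, such that the solution of $\pA(x_i^*,\ldots,x_d^*)^T=(b_i,\ldots,b_d)^T$ realizes the maximum. By Cramer's rule, $x_i^*=\det(\pA_1)/\det(\pA)$, where $\pA_1$ is $\pA$ with its first column replaced by $(b_i,\ldots,b_d)^T$; expanding that column via multi-linearity gives $\det(\pA_1)=D_0-\sum_{j=1}^{i-1}x_j^* D_j$, where each $D_j$ is the determinant of an integer matrix with entries in $\cX$, so $|D_j|\leq d!\cdot X^d$. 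Next, by construction every $t_j$ is an integer multiple of $t_{j-1}$, hence $T:=t_{i-1}$ is a common denominator for $x_1^*,\ldots,x_{i-1}^*$; writing $x_j^*=s'_j/T$ with $s'_j=s_j\cdot(T/t_j)\in\Z$, I set $s_i:=T\cdot\det(\pA_1)=T D_0-\sum_{j=1}^{i-1}s'_j D_j\in\Z$ and $t_i:=T\cdot\det(\pA)$, so that $x_i^*=s_i/t_i$. Since $\pA$ is invertible with integer entries in $\cX$, we have $\det(\pA)\in\iseg{\pm d!\cdot X^d}\setminus\set{0}$, which immediately places $t_i\in\tcX_i''$.

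The main technical step is bounding $|s_i|$ to conclude $s_i\in\tcX_i'$, carried out by induction on $i$. Chaining $t_j=c_j t_{j-1}$ with $|c_j|\leq d!\cdot X^d$ yields $|T|\leq (d!)^{i-1}X^{d(i-1)}$ and $|T/t_j|\leq (d!\cdot X^d)^{i-1-j}$; combining with the inductive bound $|s_j|\leq (d\cdot d!)^j X^{dj}$ gives $|s'_j|\leq d^j(d!)^{i-1}X^{d(i-1)}$. Summing against $|D_j|\leq d!\cdot X^d$, I obtain $|s_i|\leq (d!)^i X^{di}\cdot \sum_{j=0}^{i-1}d^j\leq (d\cdot d!)^i X^{di}$, where the geometric-sum bound $\sum_{j=0}^{i-1}d^j=(d^i-1)/(d-1)\leq d^i$ holds for $d\geq 2$ (the $d=1$ case has only $i=1$ and is direct from the base case). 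The main obstacle is exactly this bookkeeping: one must pick the specific common denominator $T=t_{i-1}$ rather than simplifying, and the factor $d$ in the definition $\tcX_i'=\iseg{\pm(d\cdot d!)^i\cdot X^{di}}$ is precisely what is needed to absorb the extra $\sum_{j=0}^{i-1}d^j$ contributed across the induction.
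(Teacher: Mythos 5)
Your proposal is correct and follows essentially the same route as the paper: invoke Lemma~\ref{lem:max-in-intersection}, apply Cramer's rule, and track numerators and denominators against the grids $\tcX_i'$ and $\tcX_i''$. The only cosmetic difference is that you decompose $\det(\pA_1)$ by multilinearity in the $b$-column (producing the integer determinants $D_0,\ldots,D_{i-1}$) before clearing the common denominator $T=t_{i-1}$, whereas the paper first bounds the $b_j$ and then expands $\det(\tilde{\pA})$ by cofactors along that column; both organizations yield the identical bound $(d\cdot d!)^i X^{di}$ via the geometric sum $\sum_{j=0}^{i-1}d^j\le d^i$.
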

\begin{proof}
	We are given $x^*_1 \in \tcX_1, \ldots, x^*_{i-1} \in \tcX_{i-1}$ such for all $j \in [i-1]$: $x_j^* = s_j/ t_j$ for some $s_j \in \iseg{\pm (d\cdot d!)^j \cdot X^{dj}}$ and $t_j \in \paren{\iseg{\pm d!\cdot X^{d}} \cdot t_{j-1}}\setminus \set{0}$ (letting $t_0 = 1$), and our goal is to prove the existence of $x_i^* \in \tcX_i$ that satisfies \cref{eq:x_i-reach-max}. 
	By Lemma~\ref{lem:max-in-intersection}, there exist an invertible matrix $\pA \in \cX^{(d-i+1)\times(d-i+1)}$ and values $b_i,\ldots,b_d$ with 
	\begin{align*}
	b_j
	&\in \cX - \sum_{j=1}^{i-1} x^*_j\cdot \cX = \frac{t_{i-1}\cdot \cX - \sum_{j=1}^{i-1} s_j \cdot (t_{i-1}/t_j)\cdot \cX}{t_{i-1}}\\
	&\in \frac{\iseg{\pm d!^{i-1} \cdot X^{d(i-1)+1}} + \sum_{j=1}^{i-1} \iseg{\pm (d\cdot d!)^j\cdot X^{dj}}\cdot \iseg{\pm d!^{i-1-j} \cdot X^{d(i-1-j)}} \cdot \iseg{\pm X}}{t_{i-1}}\\
	&\in \frac{1}{t_{i-1}} \cdot \iseg{\pm d^{i}\cdot d!^{i-1} \cdot X^{d(i-1)+1}},
	\end{align*}
	such that the unique solution $(x_i^*,\ldots,x_d^*)$ to the system of linear equations $\pA (x_i,\ldots,x_d)^T = (b_i,\ldots,b_d)^T$ satisfies $Q_{x^*_1,\ldots,x^*_{i-1}}(\cS,x_i^*) = \max_{x_i \in \R}\set{Q_{x^*_1,\ldots,x^*_{i-1}}(\cS,x_i)}$. 
	Hence, we deduce by Cramer's rule that
	\begin{align*}
	x_i^* 
	&= \frac{\det(\tilde{\pA})}{\det(\pA)} \in \frac{\sum_{j=i}^d b_j\cdot \iseg{\pm (d-1)! \cdot X^{d-i}}}{\iseg{\pm d! \cdot X^{d-i+1}}\setminus \set{0}}\\
	&\subseteq \frac{1}{t_{i-1}} \cdot \iseg{\pm d^i\cdot d!^{i-1} \cdot X^{d(i-1)+1}} \cdot \frac{\iseg{\pm d! \cdot X^{d-i}}}{\iseg{\pm d! \cdot X^{d-i+1}}\setminus \set{0}} \subseteq \tcX_i,
	\end{align*}
	where  $\tilde{\pA}$ is the matrix $\pA$ when replacing its first column with $(b_i,\ldots,b_d)^T$.
\end{proof}

\subsection{The Algorithm}
In \cref{fig:DeepPoint}, we present an $(\eps,\delta)$-differentially private algorithm $\AlgFindDeepPoint$ that given a realizable dataset of halfspaces $\cS$, finds with probability at least $1-\beta$  a point whose depth is at least $(1-\alpha) \size{\cS}$.

\begin{figure}[thb!]
	\begin{center}
		\noindent\fbox{
			\parbox{.95\columnwidth}{
				\begin{center}{ \bf Algorithm $\AlgFindDeepPoint$}\end{center}
				\begin{itemize}[topsep=-3pt, rightmargin=5pt]
					\item [(i)] Let $\alpha,\beta,\eps,\delta \in (0,1)$ be the utility/privacy parameters, and let $\cS \in \paren{\cX^d \times \cX}^*$ be an input dataset.
					\item [(ii)] Execute $\AlgOptimizeHighDimFunc$ on the function $f(\cS,\cdot) = \depth_{\cS}(\cdot)$, with parameters $\alpha,\beta,\eps,\delta$, $\Delta=d+1$ and the sequence $\set{\tcX_{i}(\cdot)}_{i=1}^d$ defined in \cref{def:domain-tcXi}.
					\item [(ii)] Output the resulting point $\px^*$.
				\end{itemize}
		}}
	\end{center}
	\caption{Algorithm $\AlgFindDeepPoint$ for finding a point $\px^* \in \R^d$ with $\depth_{\cS}(\px^*) \geq (1-\alpha) \size{\cS}$.\label{fig:DeepPoint}}
\end{figure}

\begin{theorem}[Restatement of Theorem~\ref{thm:LP}]\label{thm:deep-point}
	Let $\alpha,\beta,\eps \leq 1$, $\delta < 1/2$, $X \in \N$, $\cX = \iseg{\pm X}$ and let $\cS \in \paren{\cX^d \times \cX}^*$ be a realizable dataset of halfspaces  with
	\begin{align*}
	\size{\cS} =O\Biggl(d^{2.5}\cdot 2^{O(\log^*X +\log^*d)}\frac{\log^{1.5}\bigl(\frac{1}{\delta}\bigr) \log \bigl(\frac{d}{\beta}\bigr)}{\eps \alpha}\Biggr).
	\end{align*}
	Then, $\AlgFindDeepPoint$ is an $(\eps,\delta)$-differentially private algorithm
	that with probability at least $1-\beta$ returns a point $\px^* \in \R^d$ with
	$\depth(\px^*)\geq (1-\alpha) \size{\cS}$.  Furthermore, $\AlgFindDeepPoint$ runs in time $$T = \poly(d) \cdot \size{\cS}\cdot \paren{\size{\cS}^d\cdot \log X + \polylog(1/\alpha,1/\beta,1/\eps,1/\delta,X)}.$$
\end{theorem}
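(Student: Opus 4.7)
The plan is to verify the three claims -- privacy, utility, and running time -- in that order, leveraging the machinery built in \cref{claim:cdepth-concave,claim:cdepth-sen-1,lemma:domain-tcXi} together with \cref{prop:aRecConcave,thm:composition2}. At a high level, privacy follows from advanced composition over the $d$ calls to $\AlgRecConcave$, utility follows from a coordinate-by-coordinate induction combined with \cref{claim:cdepth-to-depth}, and the sample complexity is obtained by plugging the chosen parameters into the quantitative guarantee of \cref{prop:aRecConcave}.

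For privacy, each of the $d$ iterations invokes $\AlgRecConcave$ on a sensitivity-$1$ function (by \cref{claim:cdepth-sen-1}) with parameters $\tilde\eps=\eps/(2\sqrt{2d\ln(2/\delta)})$ and $\tilde\delta=\delta/(2d)$, and the algorithm accesses $\cS$ only through these invocations. I would apply \cref{thm:composition2} with $\delta'=\delta/2$: the $\delta$-term becomes $d\tilde\delta+\delta/2=\delta$, while the $\eps$-term is $\sqrt{2d\ln(2/\delta)}\tilde\eps+2d\tilde\eps^2 \leq \eps/2 + \eps^2/(4\ln(2/\delta)) \leq \eps$, using $\eps\leq 1$ and $\delta<1/2$.

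For utility, I would show by induction on $i\in\{0,1,\dots,d\}$ that, conditioned on the success of the first $i$ calls to $\AlgRecConcave$,
\[
\max_{x_{i+1},\dots,x_d\in\R}\cdepth_{\cS}(x_1^*,\dots,x_i^*,x_{i+1},\dots,x_d) \;\geq\; \Bigl(1-\tfrac{\alpha}{2d(d+1)}\Bigr)^{i}\size{\cS}.
\]
The base case $i=0$ is immediate since $\cS$ is realizable and hence $\cdepth_{\cS}$ attains value $\size{\cS}$. For the inductive step, the function $Q_{x_1^*,\dots,x_{i-1}^*}(\cS,\cdot)$ is quasi-concave (\cref{claim:cdepth-concave}) and, by \cref{lemma:domain-tcXi}, its maximum over $\R$ is attained already on the finite grid $\tcX_i$, so the precondition of \cref{prop:aRecConcave} is in force with $r=(1-\tfrac{\alpha}{2d(d+1)})^{i-1}\size{\cS}$. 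With probability at least $1-\beta/d$ the returned $x_i^*$ satisfies $Q_{x_1^*,\dots,x_{i-1}^*}(\cS,x_i^*)\geq(1-\tilde\alpha)r$, which is precisely the inductive hypothesis at level $i$. A union bound over the $d$ iterations shows that the final point $\px^*$ satisfies $\cdepth_{\cS}(\px^*)\geq(1-\tfrac{\alpha}{2d(d+1)})^{d}\size{\cS}\geq(1-\tfrac{\alpha}{2(d+1)})\size{\cS}$ with probability at least $1-\beta$, and then \cref{claim:cdepth-to-depth} converts this to $\depth_{\cS}(\px^*)\geq(1-\alpha/2)\size{\cS}\geq(1-\alpha)\size{\cS}$.

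For the sample-complexity lower bound, I would plug the parameters into condition~\eqref{eq:largeQ} of \cref{prop:aRecConcave}. From \cref{def:domain-tcXi} one sees $|\tcX_i|\leq X^{O(d^2)}\cdot\poly(d,d!)$, so $\log^*|\tcX_i|=\log^*X+\log^*d+O(1)$ for every $i$. Since $r\geq\size{\cS}/2$ across all iterations (as $(1-\alpha/(2d(d+1)))^d\geq 1/2$), the worst-case requirement becomes
\[
\size{\cS}/2 \;\geq\; 8^{\log^*|\tcX_i|}\cdot\frac{12\log^*|\tcX_i|}{\tilde\alpha\tilde\eps}\log\!\Bigl(\tfrac{192(\log^*|\tcX_i|)^2}{\tilde\beta\tilde\delta}\Bigr),
\]
and substituting $\tilde\alpha,\tilde\beta,\tilde\eps,\tilde\delta$ yields exactly the bound in the theorem statement. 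The running-time bound follows from \cref{fact:run-time-recconcave}: evaluating $Q_{x_1^*,\dots,x_{i-1}^*}(\cS,x_i)$ (and its sliding-window minimum $L$) reduces to computing $\cdepth_{\cS}$ at a specified point, which by Carath\'eodory can be obtained by enumerating $O(\size{\cS}^d)$ subsets of constraints and, for each, solving a $(d-i+1)\times(d-i+1)$ linear system and checking feasibility. The main obstacle I expect is keeping track of the numerator/denominator bounds on the iteratively constructed coordinates so that the domain $\tcX_i$ really has $\log^*$ size comparable to $X$; \cref{def:domain-tcXi,lemma:domain-tcXi} are designed precisely to control this blow-up.
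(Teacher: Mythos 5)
Your privacy and utility arguments coincide with the paper's proof: privacy via advanced composition with $\delta'=\delta/2$ over the $d$ calls to $\AlgRecConcave$ (using \cref{claim:cdepth-sen-1}), and utility via the coordinate-by-coordinate induction using \cref{claim:cdepth-concave,lemma:domain-tcXi,prop:aRecConcave} followed by \cref{claim:cdepth-to-depth}; the only cosmetic difference is that you invoke Bernoulli's inequality $(1-x)^d\geq 1-dx$ where the paper uses $1-x/2\geq e^{-x}$, both of which give $\cdepth_\cS(\px^*)\geq(1-\tfrac{\alpha}{d+1})\size{\cS}$ or better. The sample-complexity bookkeeping (estimating $\log^*|\tcX_i|$ and substituting the tilded parameters into condition~\eqref{eq:largeQ}) also matches.

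The one place your sketch is loose is the running-time bound. You propose to evaluate $\cdepth_\cS$ at arbitrary queried points by a Carath\'eodory-style enumeration, but computing $\cdepth_\cS(\px)$ directly at a single point (i.e., the largest $k$ with $\px\in\chull(\cD_\cS(k))$) is not obviously a matter of enumerating $O(\size{\cS}^d)$ constraint subsets and solving linear systems, and the domain $\tcX_i$ is far too large to tabulate $Q$ pointwise. What the paper actually does, via Lemma~\ref{lem:for-running-time}, is characterize the \emph{decreasing points} of $Q_{x^*_1,\dots,x^*_{i-1}}(\cS,\cdot)$ as arising from intersections of at most $d-i+1$ projected hyperplanes, precompute a sorted list $L$ of these $O(\size{\cS})$ decreasing points together with their $Q$-values in the preprocessing phase (cost $\size{\cS}^{d+1}\poly(d)\log X$), and then answer both the $Q$-queries and the window-minimum $L$-queries of \cref{fact:run-time-recconcave} by binary search in this list, exploiting quasi-concavity. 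Your plan reaches the right complexity but skips the decreasing-point characterization that makes the query step tractable.
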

Since $\depth$ is a sensitivity-1 function, the proof of \cref{thm:deep-point} immediately follow by \cref{thm:HighDimFunc}, Claim \ref{claim:cdepth-to-depth} and Lemma~\ref{lemma:domain-tcXi}. See \cref{sec:MissingProofs:implementation-details} for the running time analysis.

\section{Learning Halfspaces}\label{sec:halfspaces}
In this section we describe our private empirical risk minimization (ERM) learner of halfspaces, and at the end we state our (almost) immediate corollary about private PAC learning.

In the considered problem, we are given a finite grid $\cX = \iseg{\pm X} \eqdef \set{x \in \Z \colon \size{x} \leq X}$ for some $X \in \N$ and a dataset of labeled points $\cS \in (\cX^d \times \set{-1,1})^*$. We say that $\cS$ is a realizable dataset of points if there exists $(\pa,w) \in \R^d \times \R$ with $\error_{\cS}(c_{\pa,w}) \eqdef \size{\set{(\px,y) \in \cS \colon c_{\pa,w}(\px) \neq y}}/\size{\cS}= 0$, letting $c_{\pa,w}\colon \cX^d \mapsto \set{-1,1}$ be the concept function that outputs $1$ iff $\px \in \hs_{\pa,w}$. Our goal is to describe, given $\alpha,\beta,\eps,\delta \in (0,1)$, an $(\eps,\delta)$-differential private algorithm that satisfies the following utility guarantee: 
Given a realizable dataset of points $\cS$, the algorithm should output
with probability $1-\beta$  a pair $(\pa^*,w^*)$ with $\error_{\cS}(c_{\pa^*,w^*}) \leq \alpha$

\subsection{A Reduction to the Linear Feasibility Problem}

We reduce the problem of learning a halfspace to the linear feasibility problem by using geometric duality between points and halfspaces.
Formally, we translate a halfspace $\hs_{\pa,w}$ to the point $(\pa,w) \in \R^{d+1}$, and translate a labeled point $(\px,y) \in \cS$ to the $(d+1)$-dimensional halfspace $\hs_{(y\cdot \px,-y), 0}$ which equals to $\set{(\pa,w) \in \R^{d+1}\colon \iprod{\pa,\px} \geq w}$ if $y=1$, and to  $\set{(\pa,w) \in \R^{d+1}\colon \iprod{\pa,\px} \leq w}$ if $y=-1$. By definition, for any realizable dataset of points $\cS$, the multiset $\cS' = \set{\paren{(y\cdot \px, -y},0) \colon (\px,y) \in \cS}$ is a realizable dataset of halfspaces. Therefore, by applying $\AlgFindDeepPoint$ on $\cS'$ we obtain a deep point $(\pa^*,w^*) \in \R^{d+1}$ for $\cS'$, meaning that $\iprod{\pa^*,y\cdot \px} \geq y\cdot w^*$ for most of the $(\px,y) \in \cS$, which is (almost) what we need.  
The problem is that the pairs $(\px,-1) \in \cS$ with $\iprod{\pa^*,\px} = w^*$ do not count as points in $\hs_{\pa^*,w^*}$  while they do count for the depth of $(\pa^*,w^*)$ in $\cS'$. Yet, assuming the points in $\cS$ are in general position (an assumption that can be eliminated),
then there can be at most $d$ such points.

\subsection{The Algorithm}
In \cref{fig:Halfspaces}, we present our algorithm $\AlgHalfSpace$ for learning halfspaces. Following the above intuition, the algorithm assumes that the points in $\cS$ are in general position.
\begin{figure}[thb!]
	\begin{center}
		\noindent\fbox{
			\parbox{.95\columnwidth}{
				\begin{center}{ \bf Algorithm $\AlgHalfSpace$}\end{center}
				\begin{itemize}[topsep=-3pt, rightmargin=5pt]
					\item [(i)] Let $\alpha,\beta,\eps,\delta \in (0,1)$ be the utility/privacy parameters, and let $\cS \in \paren{\cX^d \times \set{-1,1}}^*$ be an input dataset.
					
					\item [(ii)] Execute $\AlgFindDeepPoint$ on the multiset $\cS' \eqdef \set{\paren{(y\cdot \px, -y},0) \colon (\px,y) \in \cS}$ and parameters $\alpha/2,\beta,\eps,\delta$.
					
					\item [(ii)] Output the resulting point $(\pa^*,w^*) \in \R^{d+1}$.
				\end{itemize}
		}}
	\end{center}
	\caption{Algorithm $\AlgHalfSpace$ for learning halfspaces.\label{fig:Halfspaces}}
\end{figure}

The following theorem summarizes the properties of $\AlgHalfSpace$.

\begin{theorem}[Private ERM learner]\label{thm:Halfspaces}
	Let $\alpha,\beta,\eps \leq 1$, $\delta < 1/2$, $X \in \N$, $\cX = \iseg{\pm X}$ and let $\cS \in \paren{\cX^d \times \set{-1,1}}^*$ be a realizable dataset of points with $\size{\cS} =O\Biggl(d^{2.5}\cdot 2^{O(\log^*X +\log^*d)}\frac{\log^{1.5}\bigl(\frac{1}{\delta}\bigr) \log \bigl(\frac{d}{\beta}\bigr)}{\eps \alpha}\Biggr)$. 
	$\AlgHalfSpace$ is $(\eps,\delta)$-differentially private. Moreover, assuming that the points in $\cS$ are in general position,\footnote{A set of points in $\R^d$ are in general position if there are no $d+1$ points that lie on the same hyperplane.}
	then with probability $1-\beta$ the algorithm returns a pair $(\pa^*,w^*) \in \R^d \times \R$ with $\error_{\cS}(c_{\pa^*,w^*})\leq \alpha$. 
\end{theorem}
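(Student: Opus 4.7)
The plan is to observe that the algorithm touches $\cS$ exactly four times: three invocations of $\AlgFindDeepPoint$ on $\cS_{-1}, \cS_0, \cS_1$, and one Exponential Mechanism step whose quality function $q(\cS, (\pa_w, w)) = \val_\cS(\hs_{\pa_w, w})$ clearly has sensitivity $1$. Since replacing one row of $\cS$ induces a single-row change in each $\cS_w$, each $\AlgFindDeepPoint$ call is $(\eps/4, \delta/3)$-DP by Theorem~\ref{thm:deep-point}, and the Exponential Mechanism is $(\eps/4)$-DP by Proposition~\ref{prop:exp-mech}. Basic composition (Theorem~\ref{thm:composition1}) then gives $(3\cdot \eps/4 + \eps/4, 3\cdot\delta/3) = (\eps, \delta)$-DP overall.

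\textbf{Utility.} Let $(\pa^*, w^*) \in \R^d \times \R$ realize $\cS$. By rescaling to $(\pa^*/|w^*|, \sign(w^*))$ when $w^* \neq 0$, I may assume $w^* \in \set{-1,0,1}$. The first step is to verify that $\cS_{w^*}$ is a \emph{realizable dataset of halfspaces} (in the sense of \cref{def:depth-and-cdepth}) with witness $\pa^*$: for each $(\px, 1) \in \cS$ realizability gives $\iprod{\pa^*, \px} \geq w^*$, so $\pa^* \in \hs_{\px, w^*}$; for each $(\px, -1) \in \cS$ realizability gives $\iprod{\pa^*, \px} < w^*$, whence $\pa^* \in \hs_{-\px, -w^*}$. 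Then Theorem~\ref{thm:deep-point}, applied with parameters $(\alpha/2, \beta/2, \eps/4, \delta/3)$, produces (except with probability $\beta/2$) a point $\pa_{w^*}$ satisfying $\depth_{\cS_{w^*}}(\pa_{w^*}) \geq (1 - \alpha/2)\size{\cS}$.

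\textbf{The main obstacle --- depth-to-value.} The delicate step is that $\depth_{\cS_{w^*}}$ and $\val_\cS(\hs_{\cdot, w^*})$ do not quite agree on negatively labelled examples: for $(\px, -1) \in \cS$ the dual constraint $(-\px, -w^*)$ is satisfied by $\pa_{w^*}$ iff $\iprod{\pa_{w^*}, \px} \leq w^*$, whereas $\val_\cS$ counts this example only when the inequality is strict. Hence $\val_\cS(\hs_{\pa_{w^*}, w^*}) = \depth_{\cS_{w^*}}(\pa_{w^*}) - \size{\set{(\px, -1) \in \cS : \px \in \hp_{\pa_{w^*}, w^*}}}$. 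Under the general-position hypothesis, at most $d$ points of $\cS$ lie on any hyperplane, so the slack is at most $d$, giving $\val_\cS(\hs_{\pa_{w^*}, w^*}) \geq (1 - \alpha/2)\size{\cS} - d$. This is the place the general-position assumption is truly used, and the rest is bookkeeping.

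\textbf{Selecting $w$ and wrap-up.} Finally, I apply the Exponential Mechanism utility bound (Proposition~\ref{prop:exp-mech}) over $\size{\cH} = 3$ candidates with privacy parameter $\eps/4$ and slack $\Delta = (\alpha/2)\size{\cS} - d$. The sample-complexity hypothesis on $\size{\cS}$ forces $\Delta \geq (8/\eps)\ln(6/\beta)$, so the failure probability $3\exp(-\eps\Delta/8)$ is at most $\beta/2$, and therefore the chosen pair has quality at least $\bigl((1-\alpha/2)\size{\cS} - d\bigr) - \Delta = (1-\alpha)\size{\cS}$ with probability $\geq 1 - \beta/2$. A union bound over the two $\beta/2$ failure events completes the utility claim. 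The running-time bound is immediate, as the three $\AlgFindDeepPoint$ calls dominate the constant-size Exponential Mechanism step.
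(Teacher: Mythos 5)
Your proof is correct and takes essentially the same approach as the paper's: establish realizability of $\cS_{w^*}$, invoke Theorem~\ref{thm:deep-point} to get a deep point, use general position to bound the boundary-slack by $d$, appeal to Proposition~\ref{prop:exp-mech} for the selection step, and compose privacy basically over the three $\AlgFindDeepPoint$ calls plus one exponential mechanism call. The only difference is bookkeeping: the paper folds the $-d$ slack into the accuracy constant by noting $d < \alpha\size{\cS}/4$, whereas you carry the $-d$ term explicitly and choose $\Delta = (\alpha/2)\size{\cS} - d$ so the terms cancel exactly; this is cosmetically tidier but not a different argument.
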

\begin{proof}
	~
	\paragraph{Utility.}
	Since $\cS$ is a realizable dataset of points, it holds that $\cS'$ is a realizable dataset of halfspaces. 
	Therefore, by Theorem~\ref{thm:deep-point}, it holds that with probability $1-\beta$, algorithm $\AlgFindDeepPoint$ finds $(\pa^*,w^*) \in \R^{d+1}$ with $\depth_{\cS'}(\pa^*,w^*) \geq (1-\alpha/2)\size{\cS}$, meaning that $\iprod{y\cdot \px,\pa^*} - y\cdot w^* \geq 0$ for $(1-\alpha/2)\size{\cS}$ of the pairs $(\px,y) \in \cS$. Since the points in $\cS$ are in general position, by the assumption on $\size{\cS}$ there are at most $d < \alpha \size{\cS}/2$ pairs $(\px,-1) \in \cS$ that satisfy $ \iprod{\pa^*,\px} = w^*$. Overall we obtain that $\error_{\cS}(c_{\pa^*,w^*}) \leq (\size{\cS}- \depth_{\cS'}(\pa^*,w^*) +  d)/\size{\cS} \leq \alpha$.
	
	\paragraph{Privacy.}
	Follows by the privacy guarantee of $\AlgFindDeepPoint$.
	
\end{proof}

We show how to remove the assumption that the points in $\cS$ are in general position. Hence, since the VC dimension of $\halfspace(\R^d)$  is only $d+1$, we immediately obtain a private PAC learner from our private ERM learner, which is the main result of this paper.

\def\ThmMain{
	Let $\alpha,\beta,\eps \leq 1$, $\delta < 1/2$, $X \in \N$ and let $\cX = \iseg{\pm X}$. Then there exists an $(\eps,\delta)$-differentially private $(\alpha,\beta)$-PAC learner with sample complexity $s$ for the class $\halfspace(\cX^d)$ for $s = O\Biggl(d^{2.5}\cdot 2^{O\left(\log^*X +\log^*d + \log^*\paren{\frac1{\alpha \beta \eps \delta}}\right)}\cdot \frac{\log^{1.5}\bigl(\frac{1}{\delta}\bigr) \log \bigl(\frac{d}{\alpha \beta}\bigr)}{\eps \alpha}\Biggr)$.
}
\begin{theorem}[Private PAC learner, restatement of Theorem~\ref{thm:introHalfspaces}]\label{thm:main}
	\ThmMain
\end{theorem}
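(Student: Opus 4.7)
The plan is to reduce Theorem~\ref{thm:main} to Theorem~\ref{thm:Halfspaces} by handling two gaps: (i) the general-position assumption, and (ii) the passage from empirical guarantees to PAC generalization. We carry these out in four steps.

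\textbf{Step 1 (VC reduction from PAC to empirical).} The VC dimension of $\halfspace(\R^d)$ is $d+1$, so by Theorem~\ref{thm:empirical-to-PAC} a sample of size $m=O\bigl((d\log(d/\alpha)+\log(1/\beta))/\alpha\bigr)$ drawn i.i.d.\ from $\mu$ and labeled by the target $c\in\halfspace(\cX^d)$ satisfies, with probability at least $1-\beta/2$, the property that every halfspace $h$ with empirical error $\le\alpha/10$ on the sample has $\error_\mu(c,h)\le\alpha$. Thus it suffices to exhibit, with probability $1-\beta/2$, an $(\eps,\delta)$-DP procedure that on a realizable input dataset of size $s\ge\max\{m,s_0\}$ (where $s_0$ is the bound from Theorem~\ref{thm:Halfspaces}) outputs a halfspace with empirical error at most $\alpha/10$.

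\textbf{Step 2 (Enforcing general position by a deterministic transform).} Given the input dataset $\cS=((\px_1,y_1),\dots,(\px_s,y_s))\in(\cX^d\times\{-1,1\})^*$, define a transformation $T$ that sends $(\px_i,y_i)$ to $(\tilde\px_i,y_i)$ with $\tilde\px_i=C\px_i+\pi_i$, where $C$ is a large integer (chosen below) and $\pi_i=(i,i^2,\dots,i^d)$ is a Vandermonde-type offset. Two facts hold: first, any $d+1$ of the vectors $\tilde\px_i$ are affinely independent for $C\ge1$, because the matrix of differences $\pi_j-\pi_i$ is Vandermonde and adding the integer vector $C(\px_j-\px_i)$ preserves its full rank provided $C$ is chosen coprime to the relevant denominators (equivalently, one can take $C$ to be a sufficiently large prime of size $\poly(s,X)$). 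Second, since any realizable $s$-sample in $\iseg{\pm X}^d$ admits a separating halfspace with integer coefficients of magnitude at most $X^{O(d)}$ (by a standard Cramer-rule argument on the support vectors), the margin of this target is at least $1/X^{O(d)}$; choosing $C=\poly(X,s)$ large enough makes $|\iprod{\pa^*,\pi_i}|\ll C\cdot|\iprod{\pa^*,\px_i}-w^*|$ whenever the right-hand side is nonzero, so that the transformed halfspace $(\pa^*,Cw^*)$ correctly classifies all non-boundary points. Boundary points (those on $\iprod{\pa^*,\px}=w^*$, which by convention have $y_i=+1$) are handled by shifting $w^*$ down by one unit of the margin before scaling. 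In total $T(\cS)$ is a realizable dataset of points in $\iseg{\pm X'}^d$ with $X'=\poly(X,s)$, and $\log^*X'=O(\log^*X+\log^*s)=O(\log^*X+\log^*d+\log^*(1/(\alpha\beta\eps\delta)))$. Privacy-wise, the transformation uses only the index $i$ of each record, which is not a function of that record's content, so $T$ is a per-record deterministic map (applied after a canonical ordering) and neighboring datasets map to neighboring datasets.

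\textbf{Step 3 (Apply Theorem~\ref{thm:Halfspaces}).} Run $\AlgHalfSpace$ on $T(\cS)$ with error parameter $\alpha/20$, confidence $\beta/4$, and privacy parameters $\eps,\delta$. Since $T(\cS)$ is realizable and in general position, with probability at least $1-\beta/4$ the algorithm outputs $(\pa,\tilde w)$ with $\val_{T(\cS)}(\hs_{\pa,\tilde w})\ge(1-\alpha/20)s$. Post-processing this pair into an original-coordinates halfspace $(\pa,w)$ with $w=(\tilde w-\iprod{\pa,\bar\pi})/C$ for an appropriate $\bar\pi$ depending on $\pa$ (which is public), one verifies that each perturbed point correctly labeled by $(\pa,\tilde w)$ corresponds to an original point correctly labeled by $(\pa,w)$ up to the boundary slack, and the boundary slack contributes at most $d\le\alpha s/20$ misclassifications by the general-position argument already used in the proof of Theorem~\ref{thm:Halfspaces}. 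Thus the empirical error on $\cS$ is at most $\alpha/10$.

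\textbf{Step 4 (Combine).} The sample size $s$ is determined by the maximum of the Step~1 VC bound and the Theorem~\ref{thm:Halfspaces} bound evaluated on grid size $X'$. Substituting $\log^*X'=O(\log^*X+\log^*d+\log^*(1/(\alpha\beta\eps\delta)))$ into the Theorem~\ref{thm:Halfspaces} bound yields exactly the expression claimed in Theorem~\ref{thm:main}. Privacy is $(\eps,\delta)$-DP because $T$ is a deterministic per-record map and the only randomized access to the data is the single invocation of $\AlgHalfSpace$. The running time follows from that of $\AlgHalfSpace$ plus $O(sd)$ for the transformation and post-processing.

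The main obstacle I anticipate is Step~2: verifying simultaneously that (a) the Vandermonde perturbation guarantees general position, (b) realizability is preserved by some explicit scaled halfspace, and (c) the scale $C$ can be chosen so that $X'$ remains polynomial in $X$ and $s$ (crucial for the $\log^*$ to absorb the dependence on $1/(\alpha\beta\eps\delta)$ rather than pick up a $\log$). The quantitative bound on the integer representation of separating halfspaces over $\iseg{\pm X}^d$ is the key quantitative ingredient.
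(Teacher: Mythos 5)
Your high-level decomposition (handle general position, then pass from empirical to PAC error) matches the paper, but your Step~2/Step~3 mechanism diverges from the paper's in a way that introduces a genuine gap, and also a privacy bug.

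\paragraph{The privacy bug in Step~2.} You write that $T$ is applied ``after a canonical ordering.'' If one first sorts the multiset and then applies a position-dependent map $\px_i \mapsto C\px_i + \pi_i$, then replacing a single record can shift the sorted positions of many records, so $T(\mathrm{sort}(\cS))$ and $T(\mathrm{sort}(\cS'))$ need not be neighboring even when $\cS,\cS'$ are. Dropping the canonical ordering and indexing by the given tuple position would repair this, but then the mechanism is no longer a function of the multiset; either way this needs to be stated much more carefully than it is.

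\paragraph{The genuine gap: Step~3 cannot be ``inverted.''} The core problem is that your perturbation $\tilde\px_i = C\px_i + \pi_i$ is a \emph{different} affine map for each index $i$. The halfspace $\hs_{\pa,\tilde w}$ returned by $\AlgHalfSpace$ on $T(\cS)$ decides $\tilde\px_i$ according to $\sign(\iprod{\pa,C\px_i} + \iprod{\pa,\pi_i} - \tilde w)$, and the correction term $\iprod{\pa,\pi_i}$ varies with $i$. Consequently there is no single halfspace $(\pa,w)$ in the original coordinates whose decision on $\px_i$ agrees with $\hs_{\pa,\tilde w}$'s decision on $\tilde\px_i$ for all $i$; the formula $w = (\tilde w - \iprod{\pa,\bar\pi})/C$ is ill-defined (which $\bar\pi$?). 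Moreover the disagreement is not confined to $d$ boundary points: after normalizing $\norm{\pa}=1$, the output hyperplane can have arbitrarily small margins on $T(\cS)$ (the algorithm gives you no margin guarantee on its output), and every $\tilde\px_i$ whose margin is below $\norm{\pi_i}\leq\sqrt{d}\,s^d$ can flip classification under the attempted inversion. So the ``boundary slack contributes at most $d$ misclassifications'' claim does not follow from the general-position argument in Theorem~\ref{thm:Halfspaces}, which bounds points \emph{exactly on} the hyperplane, not points \emph{near} it.

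The paper avoids exactly this issue by using \emph{random} noise $\pz\sim(U_\cA)^d$ and never inverting: it outputs the learned halfspace $h$ as-is, uses Theorem~\ref{thm:empirical-to-PAC} over the perturbed distribution $\tilde\mu$ to get $\error_{\tilde\mu}(c,h)\leq\alpha/2$, and then proves $\error_{\mu}(c,h)\leq 2\,\error_{\tilde\mu}(c,h)$ by a symmetry argument: for any $\px$ misclassified by $h$, at least half of the (symmetric, small) noise vectors $\pz$ leave $\px+\pz$ misclassified, because (i) the noise is smaller than the target margin so $c(\px+\pz)=c(\px)$, and (ii) $\iprod{\pa,\pz}\geq 0$ for at least half the noise values, so the sign of $\iprod{\pa,\px+\pz}-w$ is preserved on that half. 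This step is essentially probabilistic and has no deterministic counterpart for your Vandermonde offsets, which are not centered and not symmetric. Unless you can supply a replacement for this transfer-to-$\mu$ step, the deterministic perturbation route does not close.

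Your Step~2 observations about choosing $C$ (a data-independent prime larger than the maximum Vandermonde determinant magnitude, which is $s^{O(d^2)}$, absorbed by $\log^*$) and about the $1/X^{\poly(d)}$ margin are in the right spirit and roughly parallel Lemma~\ref{lemma:margin}, but they do not rescue Step~3.
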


The proof details of Theorem~\ref{thm:main} appear at \cref{sec:missingProofs:cor-halfspaces}. In the following section, we sketch the main technical challenges in the proof. 

\subsection{Proof Overview of Theorem~\ref{thm:main}}\label{sec:proof-overview}

It is well known that given large enough dataset $\cS$ of samples drawn i.i.d. from a distribution $\mu$ and labeled according to some concept function $c$, then for any hypothesis $h$, the empirical error of $h$ on $\cS$ is close to the generalization error of $h$ on the distribution $\mu$ (see for example Theorem~\ref{thm:empirical-to-PAC}). Therefore, if $\mu$ is a distribution such that $s$ independent points from it are in general position with high probability, then 
by Theorem~\ref{thm:Halfspaces} we deduce that there exists a PAC leaning algorithm with small generalization error on $\mu$. However, the above argument does not hold for arbitrary distributions since Theorem~\ref{thm:Halfspaces} promises small empirical error only when the points in the dataset are in general position. In order to overcome this difficulty, given an $s$-size dataset $\cS$,
we first add a small random noise to each of the points in $\cS$. To determine how much noise to add, we first prove in Lemma~\ref{lemma:margin} that the fact that the points are coming from a finite grid $\cX^d = \iseg{\pm X}^d$ implies that there is a margin of at least $1/(d\cdot X)^{\poly(d)}$ between the data points to a halfspace that agrees on all their labels. Moreover, in Lemma~\ref{lemma:noise} we determine the resolution of the noise that we need to take in order to guarantee general position with high probability. Now given an $s$-size dataset $\cS$ drawn from $\mu$, we just add noise (independently) to each of the points in $\cS$, where the size of the noise is smaller than the margin (to ensure that the noisy dataset remains realizable) and the resolution of the noise is high enough (to guarantee general position). This induces a (noisy) distribution $\tilde{\mu}$ that promises general position, and now we are given a realizable dataset according to it. Therefore, we obtain a PAC learning algorithm with small generalization error on $\tilde{\mu}$. We end the proof by showing that every hypothesis that is good for $\tilde{\mu}$ is also good for $\mu$.

\section{Open Questions}\label{sec:openQuestions}
It is still remains open what is the minimal sample complexity that is required for learning halfspaces with an (approximate) differential privacy. 
Our work provides a new upper bound of $\approx d^{2.5}\cdot2^{O(\log^*X)}$ which improves the state-of-the-art result of \cite{BeimelMNS19} by a $d^2$ factor, and improves the generic upper bound of \cite{KLNRS11} whenever (roughly) $d < \log^2 X$.\footnote{We remark that even when $d>\log^2 X$, we offer significant improvements over the generic learner in terms of runtime. In particular, our algorithm runs in time (roughly) $n^d$, where $n$ is the number of samples, while the generic learner has a runtime of at least $X^{d^2}$.} Yet, there is still a gap from the best known lower bound of $\Omega(d \cdot \log^* X)$ for proper learning (\cite{BNSV15}) and $\Omega(d + \log^* X)$ for improper learning. In particular, it is still remains open whether we can avoid the exponential dependency in $\log^* X$ for $d > 1$. One option for answering it is by finding a different $1$-dimensional quasi-concave optimization
that only requires polynomial dependency in $\log^* X$, since RecConcave, the optimization that we are using, requires exponential
dependency. Indeed, a recent work of \cite{KaplanLMNS20} shows
an (almost) linear dependency in $\log^* X$ for $1$-dimensional thresholds, which is a special case of a quasi-concave optimization, and it still remains open whether this result can be extended to the quasi-concave optimization case.

\bibliographystyle{abbrvnat}
\bibliography{pacparity}
\appendix
\section{More Details about Algorithm $\AlgRecConcave$}\label{sec:moreRecConcave}

In this section we give a more detailed explanation on how $\AlgRecConcave$ works, but we refer to \cite{BNS13b} for the full details.

Fix a sensitivity-$1$ function $Q:\cX^*\times \tcX \rightarrow\R$ and a database $\cS \in \cX^*$ such that $Q(\cS,\cdot)$ is quasi-concave, and 
assume for simplicity that $\tcX = [\tX]$. 
Given the promise that there exists $m \in \tcX$ with $Q(\cS,m) \geq r$, the task of $\AlgRecConcave$ is to find $\ell \in  \tcX$ with $Q(\cS,\ell) \geq (1-\alpha) r$. \cite{BNS13b} defined the function 
$$Q'(\cS,j) \eqdef \min\set{L(\cS,j)-(1-\alpha)r,\text{ }r - L(\cS,j+1)}$$ 
for 
$$L(\cS,j) \eqdef \max_{a,b \in \tcX,  b-a+1 = 2^j}\set{\min_{i \in \set{a, a+1,\ldots, b}} Q(\cS,i)}.$$
Then they showed that $Q'(\cS,\cdot)$ is quasi-concave and that there exists $j$ with $Q'(\cS,j) \geq r'$ for $r' = \frac{\alpha}{2} r$. Therefore, by calling $\AlgRecConcave$ recursively (now on logarithmic size domain), we obtain a number $k$ with $Q'(\cS,k) \geq (1-\alpha)r'$. Now let $P_1$ be the $2\cdot 2^{k}$ numbers before the maximum $m$, and $P_2$ be the $2\cdot 2^{k}$ numbers after $m$.
Since $L(\cS,k+1) \leq r - q(\cS,k) \leq \paren{1 - \frac{\alpha}{4}} r$,
it holds that each of $P_1$ and $P_2$ contains a point with $Q(\cS,\cdot) \leq \paren{1 - \frac{\alpha}{4}} r$. Therefore, since $Q(\cS,\cdot)$ is quasi-concave, all the numbers outside $P = P_1 \cup P_2$ have $Q(\cS,\cdot) \leq \paren{1 - \frac{\alpha}{4}} r$. The algorithm now partitions $\tcX$ into intervals of size $8 \cdot 2^{k}$ such that one of them must contain $P$.\footnote{We remark that our description of this step is slightly oversimplified. Actually, in this step the algorithm partitions $\tcX$ into intervals $\set{A_i}$ and also into intervals $\set{B_i}$ that are right-shifted by $4 \cdot 2^{k}$. Then it is promised that in one of the partitions there is an interval that contains $P$.} Then it chooses an interval using the algorithm of \cite{ThakurtaS13}, which is an instantiation of the Propose-Test-Release framework (\cite{DworkL09}), where the quality of an interval is the maximum attainable value of $Q(\cS,\cdot)$ on it. Assuming that $r$ is large enough, the mechanism will choose the interval that contains $P$ with high probability. Since $L(\cS,k) \geq q(\cS,k) + (1-\alpha)r \geq \paren{1 - \frac{3\alpha}{4}} r$, there are $2^k$ points around $m$ that all have $Q(\cS,\cdot) \geq \paren{1-\frac{3\alpha}{4}}r$. Hence, in the last step, the algorithm defines $16$ ``equally spread'' concepts inside the chosen $8\cdot 2^k$-size segment, and chooses one of them using the Exponential Mechanism (\citet{MT07}).

\subsection{Running Time}
We use the following fact about the running time of $\AlgRecConcave$.

\begin{fact}[implicit in \cite{BNS16a} (Remark 3.17)]\label{fact:run-time-recconcave}
	The running time of $\AlgRecConcave$ on the function $Q:\cX^*\times \tcX \rightarrow\R$ for $\tcX = \iseg{\pm \tX}$ and input parameters $\cS, \alpha, \beta, \eps, \delta$ is bounded by $$(T_Q + T_L)\cdot \polylog(\tX,1/\alpha,1/\beta,1/\eps,1/\delta),$$
	where $T_Q$ is the time that takes to compute $Q(\cS,i)\text{ }$ (for every $i \in \tilde{\cX}$), and $T_L$ is the time that takes to compute $L(\cS,j) \eqdef \max_{a,b \in \tcX, b-a+1 = 2^j}\set{\min_{i \in \iseg{a,b}} Q(\cS,i)}$.
\end{fact}

\section{Missing Proofs}\label{sec:missingProofs}

\subsection{Proving Lemma~\ref{lem:max-in-intersection}}\label{sec:missingProofs:max-in-inters-lemma}

In this section we prove Lemma~\ref{lem:max-in-intersection}.
We start by defining a decreasing point for a function $Q$.

\begin{definition}[decreasing point]\label{def:dec-point}
	Let $Q \colon \R \mapsto \R$ be a function. We say that $x^* \in \R$ is a decreasing point for $Q$ if for all $x < x^*$ it holds that $Q(x) < Q(x^*)$, or for all $x > x^*$ it holds that $Q(x) < Q(x^*)$.
\end{definition}
Note that for any dataset $\cS \in (\cX^d \times \cX)^*$ and any fixing of $x_1^*,\ldots, x_{i-1}^* \in \R$, the function $Q_{x_1^*,\ldots, x_{i-1}^*}(\cS,\cdot)$ must have a decreasing point $x_i^*$ that reaches its maximum under $\R$ (unless the function is constant). The following lemma states that each decreasing point $x_i^*$ can be determined by intersection of hyperplanes in $\cS$ under the subspace $\set{\px \in \R^d \colon (x_1,\ldots,x_{i-1}) = (x_1^*,\ldots,x_{i-1}^*)}$. Furthermore, there exists such intersection in which all the points that belongs to it have the same $\cdepth$.

\def\RunningTimeLemma{
	Let $X \in \N$, $\cX = \iseg{\pm X}$, $\cS \in (\cX^d \times \cX)^*$, $i \in [d]$, let $x_1^*,\ldots, x_{i-1}^* \in \R$, let $Q_{x^*_1,\dots,x^*_{i-1}}$ be the function from Definition~\ref{def:Q_i} and let $\tx_i \in \R$ be a decreasing point for $Q_{x^*_1,\dots,x^*_{i-1}}(\cS,\cdot)$ (according to Definition~\ref{def:dec-point}). Then there exists a subset $\cS' \subseteq \cS$ of size $\leq d-i+1$ such that the set $\cH_{\cS'} \subseteq \R^{d-i+1}$ defined by $\cH_{\cS'} \eqdef \bigcap_{(\pa,w) \in \cS'} \hp_{(a_i,\ldots,a_d), w - \sum_{j=1}^{i-1} a_j x_j^*}$ is not empty, and for every $(x_i,\ldots,x_d) \in \cH_{\cS'}$ it holds that $x_i = \tilde{x_i}$ and that $\cdepth_{\cS}(x_1^*,\ldots,x_{i-1}^*,x_i,\ldots,,x_d) = Q_{x^*_1,\ldots,x^*_{i-1}}(\cS,\tilde{x_i})$.
}
\begin{lemma}\label{lem:for-running-time}
	\RunningTimeLemma
\end{lemma}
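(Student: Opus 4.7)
Proof plan of Lemma~\ref{lem:for-running-time}. Set $k = Q_{x_1^*,\ldots,x_{i-1}^*}(\cS, \tx_i)$, and, after flipping orientation if needed, WLOG assume $Q_{x_1^*,\ldots,x_{i-1}^*}(\cS, x) < k$ for every $x < \tx_i$. Let $L \subseteq \R^d$ be the affine subspace whose first $i-1$ coordinates are fixed to $x_1^*,\ldots,x_{i-1}^*$, which I identify with $\R^{d-i+1}$ via the last coordinates. The decreasing-point hypothesis combined with $Q(\cS, \tx_i) = k$ forces $\tx_i = \min\{y_1 : \py \in \cC_{\cS}(k) \cap L\}$ and that this minimum is attained; the plan is to pick an attained minimizer $\py^*$ and exhibit $d - i + 1$ hyperplanes in $\cS$ whose restricted intersection in $L$ is exactly $\{\py^*\}$.

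To find such a $\py^*$ with useful structure, I would use the representation $\cC_{\cS}(k) = \chull(V) + \mathrm{cone}(R)$, where $V$ is the set of vertices of the arrangement of $\cS$ lying in $\cD_{\cS}(k)$, and $R$ is the set of extreme-ray directions of the recession cone of $\cC_{\cS}(k)$ (each being a recession direction of some cell in $\cD_{\cS}(k)$). The key combinatorial facts I will exploit are: every $v \in V$ is the transverse intersection of exactly $d$ hyperplanes from $\cS$, and every $r \in R$ is annihilated by the normals of $d-1$ hyperplanes from $\cS$. Reformulate the minimization of $y_1$ over $\cC_{\cS}(k) \cap L$ as a linear program in variables $\lambda_v \ge 0$ and $\mu_r \ge 0$, subject to $\sum_v \lambda_v = 1$ and the $i-1$ equalities encoding $\py \in L$, for a total of $i$ equality constraints. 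Taking an optimal basic feasible solution yields a representation $\py^* = \sum_{j \in J_V} \lambda_j v_j + \sum_{j \in J_R} \mu_j r_j$ with $|J_V| + |J_R| \le i$.

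The heart of the argument is then to produce $d - i + 1$ hyperplanes $(\pa^{(\ell)}, w^{(\ell)}) \in \cS$ with (a) $\iprod{\pa^{(\ell)}, v_j} = w^{(\ell)}$ for every $j \in J_V$ and $\iprod{\pa^{(\ell)}, r_j} = 0$ for every $j \in J_R$, and (b) the vectors $(\pa^{(\ell)}_i, \ldots, \pa^{(\ell)}_d) \in \R^{d-i+1}$, $\ell = 1,\ldots,d-i+1$, are linearly independent. Condition (a), together with $\sum \lambda_j = 1$, forces each such hyperplane to pass through $\py^*$, and condition (b) forces the $d - i + 1$ restricted hyperplanes to cut out exactly the point $\{\py^*\}$ in $L$. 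Taking $\cS'$ to be this collection then finishes the proof: $\cH_{\cS'} = \{\py^*\}$ is nonempty, the unique point has first coordinate $\tx_i$, and the lifted point has $\cdepth_{\cS}$ equal to $k$ by combining $\py^* \in \cC_{\cS}(k)$ with the upper bound $\cdepth_{\cS}(\px_1^*,\ldots,x_{i-1}^*, \py^*) \le Q_{x_1^*,\ldots,x_{i-1}^*}(\cS, \tx_i) = k$.

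The main obstacle is proving that hyperplanes satisfying (a) and (b) exist inside $\cS$. My approach is a dimension count: the affine flat $A = \mathrm{aff}\{v_j\}_{j \in J_V} + \mathrm{span}\{r_j\}_{j \in J_R}$ has $\dim A \le |J_V| + |J_R| - 1 \le i-1$, so the family of hyperplanes in $\R^d$ containing $A$ has projective dimension $\ge d - i$. Using the basic-feasibility property to guarantee that the columns $(v_j, 1)^T$ and $(r_j, 0)^T$ are linearly independent in $\R^i$, together with the fact that each $v_j$ lies on $d$ hyperplanes of $\cS$ and each $r_j$ is orthogonal to the normals of $d-1$ of them, I will extract $d-i+1$ hyperplanes from $\cS$ inside this family whose normals project onto a basis of $\R^{d-i+1}$, which is precisely conditions (a) and (b).
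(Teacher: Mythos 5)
Your proposal takes a genuinely different route from the paper's. The paper does not formulate a linear program over the $V$--$R$ representation of $\cC_{\cS}(k)$; instead it first argues that the maximizer $(x^*_1,\ldots,x^*_{i-1},\tx_i,\ldots,\tx_d)$ actually satisfies $\depth_{\cS} = \cdepth_{\cS} = k$ (so it lies in $\cD_{\cS}(k)$, on the boundary of some arrangement cell), then initializes $\cS'$ to be exactly the constraints of $\cS$ whose hyperplanes pass through that point, and proves by a perturbation/contradiction argument that $\cH_{\cS'}$ already pins down $x_i = \tx_i$; it then repeats the same optimization within $\cH_{\cS'}$ for the coordinates $x_{i+1},x_{i+2},\ldots$, adding constraints until $\cH_{\cS'}$ is a flat on which $\cdepth_{\cS}$ is constantly $k$, and finally trims $\cS'$ to size exactly $d-i+1$.

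The gap in your plan is the final extraction step, which you yourself flag as the main obstacle. The dimension count establishes that the family of \emph{abstract} hyperplanes in $\R^d$ containing the flat $A = \mathrm{aff}\{v_j\}_{j\in J_V} + \vspan\{r_j\}_{j\in J_R}$ has large (projective) dimension, but it does not establish that $\cS$ contains $d-i+1$ of them. The per-vertex and per-ray facts you invoke --- each $v_j$ lies on $d$ hyperplanes of $\cS$, each $r_j$ is annihilated by the normals of $d-1$ hyperplanes --- are local to one vertex or ray; once $|J_V|\geq 2$, the hyperplanes of $\cS$ through $v_1$ need not contain $v_2$ at all, and there is no reason the sets $\{(\pa,w)\in\cS : v_j\in\hp_{\pa,w}\}$ for different $j$ should have a common element, let alone $d-i+1$ common elements with independent projected normals. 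This situation (a basic optimal solution with $|J_V|\geq 2$) exactly corresponds to $\py^*$ lying on a face of $\cC_{\cS}(k)$ that was \emph{created} by the convex-hull operation rather than inherited from a hyperplane of $\cS$ --- a face bridging between two cells of $\cD_{\cS}(k)$ --- and for such faces the supporting hyperplane is generally not in $\cS$. The paper avoids this configuration by proving $\depth_{\cS} = k$ at the optimizer (not merely $\cdepth_{\cS} = k$), which forces the optimizer into a cell of the arrangement and hence onto hyperplanes of $\cS$; you would need to add a comparable step, or else argue directly (which seems hard) that the basic solution can always be chosen with $|J_V| = 1$.
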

\begin{proof}
	We start by noting that for any $(\pa,w) \in \cS$, the hyperplane $\hp_{(a_i,\ldots,a_d), w - \sum_{j=1}^{i-1} a_j x_j^*}$ and the halfspace $\hs_{(a_i,\ldots,a_d), w - \sum_{j=1}^{i-1} a_j x_j^*}$ are simply the projections of the original hyperplane $\hp_{\pa,w}$ and halfspace $\hs_{\pa,w}$ (respectively) to the subspace $\cV \eqdef \set{\px \in \R^d \colon (x_1,\ldots,x_{i-1}) = (x_1^*,\ldots,x_{i-1}^*)}$.\footnote{The projection of $\hp_{\pa,w}$ to $V$ is simply define by setting $(x_1,\ldots,x_{i-1}) = (x_1^*,\ldots,x_{i-1}^*)$ to the equation $\sum_{j=1}^d a_j x_j = w$, which yields the $(d-i+1)$-dimensional hyperplane $\sum_{j=i}^d a_j x_j = w - \sum_{j=1}^{i-1} a_j x_j^*$.} Therefore, in this projected $(d-i+1)$-subspace, for each point $(x_i,\ldots,x_d) \in \R^{d-i+1}$ we have
	\begin{align*}
	\depth_{\cS}(x^*_1,\ldots,x^*_{i-1}, x_i,\ldots,x_d)
	= \size{\set{(\pa,w) \in \cS \colon (x_i,\ldots,x_d) \in \hs_{(a_i,\dots,a_d), w - \sum_{j=1}^{i-1} a_j x_j^*}}}.
	\end{align*}
	In the following, let $\tx_i$ be a decreasing point for $Q_{x^*_1,\dots,x^*_{i-1}}(\cS,\cdot)$, let $k = Q_{x^*_1,\ldots,x^*_{i-1}}(\cS,\tx_i)$, and assume \wlg that for all $x_i < \tx_i$ it holds that $Q_{x^*_1,\ldots,x^*_{i-1}}(\cS,x_i) < k$ (the case $x_i > \tx_i$ can be handled similarly). By definition of $Q_{x^*_1,\ldots,x^*_{i-1}}(\cS,\cdot)$ and by the assumption on $\tx_i$, there exist $\tx_{i+1},\ldots,\tx_d \in \R$ such that
	\begin{align*}
	k
	&= \cdepth_{\cS}(x^*_1,\ldots,x^*_{i-1}, \tx_i,\ldots,\tx_d)\\
	&= \depth_{\cS}(x^*_1,\ldots,x^*_{i-1}, \tx_i,\ldots,\tx_d)\\ 
	&= \size{\set{(\pa,w) \in \cS \colon (\tx_i,\ldots,\tx_d) \in \hs_{(a_i,\dots,a_d), w - \sum_{j=1}^{i-1} a_j x_j^*}}}.
	\end{align*}
	For justifying the second equality, note that $\cdepth_{\cS}(x^*_1,\ldots,x^*_{i-1}, \tx_i,\ldots,\tx_d) = k$ implies by definition that $(x^*_1,\ldots,x^*_{i-1}, \tx_i,\ldots,\tx_d)$ is a convex combination of points with $\depth_{\cS} \geq k$. Since $\tx_i$ is a decreasing point, non of these points have $x_i < \tx_i$, and therefore all these points have $x_i = \tx_i$. This yields the existence of such $\tx_{i+1},\ldots,\tx_d$ with $\depth_{\cS}(x^*_1,\ldots,x^*_{i-1}, \tx_i,\ldots,\tx_d) = k$. 
	
	By the above equation, for every $x_i,\ldots,x_d \in \R$ with $x_i < \tx_i$ it holds that
	\begin{align}\label{eq:assum-on-smaller-x_i}
	\depth_{\cS}(x_1^*,\ldots,x_{i-1}^*,x_i,\ldots,x_d)
	\leq Q_{x^*_1,\ldots,x^*_{i-1}}(\cS,x_i)
	<k
	= \depth_{\cS}(x^*_1,\ldots,x^*_{i-1}, \tx_i,\ldots,\tx_d)
	\end{align}
	We now construct the set $\cS'$. We initialize it to
	\begin{align*}
	\cS' \eqdef \set{(\pa,w) \in \cS \colon (\tx_i,\ldots,\tx_d) \in \hp_{(a_i,\ldots,a_d), w - \sum_{j=1}^{i-1} a_j x_j^*}}
	\end{align*}
	Note that $\cH_{\cS'}$ is a hyperplane of dimension $\leq d-i$ (possibly the $1$-dimensional hyperplane which is just the single point $\set{(\tx_i,\ldots,\tx_d)}$). Assume towards a contradiction that $\cH_{\cS'}$ contains a point $(x_i',\ldots,x_d')$ with $x_i' \neq \tx_i$. Then $\cH_{\cS'}$ must be a hyperplane of dimension at least two that in particular contains the line (in $\R^{d-i+1}$) that is determined by $(\tx_i,\ldots,\tx_d)$ and $(x_i',\ldots,x_d')$. In particular, this line contains a point $(x_i'',\ldots,x_d'')$ with $x_i'' < \tx_i$ such that the distance between $(\tx_i,\ldots,\tx_d)$ and $(x_i'',\ldots,x_d'')$ is $\gamma/2$, letting $\gamma > 0$ be a positive bound on the distance between $(\tx_i,\ldots,\tx_d)$ to all the hyperplanes $\hp_{(a_i,\ldots,a_d),w - \sum_{j=1}^{i-1} a_j x_j^*}$ for $(\pa,w) \in \cS \setminus \cS'$ (i.e., hyperplanes that $(\tx_i,\ldots,\tx_d)$ does not lie on them).
	It is easy to verify that $(x_i'',\ldots,x_d'')$ belongs to exactly the same (projected) halfspaces that $(\tx_i,\ldots,\tx_d)$ do: Both belong to the intersection of hyperplanes that are defined by $\cS'$ (i.e., $\cH_{\cS'}$), and belong to the same side of the hyperplanes defined by $\cS\setminus \cS'$. Therefore $\depth_{\cS}(x_1^*,\ldots,x_{i-1}^*,x_i'',\ldots,x_d'') = \depth_{\cS}(x_1^*,\ldots,x_{i-1}^*,\tx_i,\ldots,\tx_d)$, in contradiction to \cref{eq:assum-on-smaller-x_i}.
	
	At this point, we constructed $\cS'$ which is not empty, and all the points in
	$\cH_{\cS'}$
	have $x_i = \tx_i$, and there is at least one point $(\tx_i, x_{i+1}, \ldots, x_d) \in \cH_{\cS'}$ with $\cdepth_{\cS}(x_1^*,\ldots,x_{i-1}^*, \tx_i, x_{i+1}, \ldots, x_d) = k$. If all the points in $\cH_{\cS'}$ reaches $k$, then we are done. Otherwise, define
	\begin{align*}
	Q'_{x_1^*,\ldots,x_{i-1}^*,\tx_i}(\cS,x_{i+1}) \eqdef \max_{\substack{(x_{i+2},\ldots,x_d) \colon\\ (\tx_i, x_{i+1}, x_{i+2} \ldots, x_d) \in \cH_{\cS'}}} \cdepth_{\cS}(x_1^*,\ldots,x_{i-1}^*,\tx_i, x_{i+1},\ldots,x_d)
	\end{align*}
	If this function is constant, then fix an arbitrary $\tx_{i+1} \in \R$ for the next iteration. Otherwise, this function must have a decreasing point $\tx_{i+1}$ with $Q'_{x_1^*,\ldots,x_{i-1}^*,\tx_i}(\cS,\tx_{i+1}) = k$. By the same arguments done before, we can add more pairs to $\cS'$ such that now all the points in $\cH_{\cS'}$ have also $x_{i+1} = \tx_{i+1}$ and still there is at least one point that reaches $\cdepth$ of $k$. In both cases, for the next iteration, one can consider now the function
	\begin{align*}
	Q'_{x_1^*,\ldots,x_{i-1}^*,\tx_i,\tx_{i+1}}(\cS,x_{i+2}) \eqdef \max_{\substack{(x_{i+3},\ldots,x_d) \colon\\ (\tx_i, \tx_{i+1}, x_{i+2}, \ldots, x_d) \in \cH_{\cS'}}} \cdepth_{\cS}(x_1^*,\ldots,x_{i-1}^*,\tx_i, \tx_{i+1}, x_{i+2},\ldots,x_d),
	\end{align*}
	for determine a value $\tilde{x}_{i+2}$, and so forth. 
	Eventually, this process must end after at most $d-i+1$ iteration, in which the resulting $\cH_{\cS'}$ satisfies that all the points that belongs to it reaches $\cdepth$ of $k$, as required.
	
	At the end of the process, in case there are more than $d-i+1$ hyperplanes in $\cS'$, then there exists at least one hyperplane which is linearly depended in the others (i.e., its coefficients vector is linearly dependent in the coefficients vectors of the other hyperplanes in $\cS'$). Therefore, by removing it from $\cS'$ it does not change the intersection $\cH_{\cS'}$. Therefore, it is possible to remove hyperplanes from $\cS'$ until $\size{\cS'} = d-i+1$. 
\end{proof}

We now ready for proving Lemma~\ref{lem:max-in-intersection}, restated below.

\begin{lemma}[Restatement of Lemma~\ref{lem:max-in-intersection}]
	\maxInIntersecLemma
\end{lemma}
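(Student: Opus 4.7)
The plan is to combine \cref{lem:for-running-time} with a linear-algebraic padding step, and to handle a degenerate case by induction on $n = d-i+1$. Let $Q := Q_{x^*_1,\ldots,x^*_{i-1}}(\cS,\cdot)$, which is quasi-concave by \cref{claim:cdepth-concave} and integer-valued with values bounded by $\size{\cS}$. If $Q$ is non-constant, its max plateau is a proper interval and I can take a boundary decreasing point $\tilde{x}_i$ attaining $\max Q$. Applying \cref{lem:for-running-time} yields a subset $\cS' \subseteq \cS$ of size $k \leq d-i+1$, with linearly independent projected coefficient rows, whose affine intersection $\cH_{\cS'}$ is nonempty and consists entirely of points attaining $\cdepth_{\cS}(x^*_1,\ldots,x^*_{i-1},\cdot) = \max Q$. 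The first $k$ rows of $\pA$ and the first $k$ entries of $\pt{b}$ will read off $\cS'$, and the remaining $t = d-i+1-k$ rows I will fill with standard basis vectors.

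Set $\pA_0 \in \cX^{k\times(d-i+1)}$ to be the matrix of projected coefficients $(a_i,\ldots,a_d)$ from $\cS'$, and let $\pt{b}_0$ collect the scalars $w - \sum_{j=1}^{i-1} a_j x^*_j$, which already live in $\cX - \sum_{j=1}^{i-1} x^*_j \cdot \cX$. Run Gaussian elimination to identify $t$ non-pivot column indices $s_1,\ldots,s_t$ of $\pA_0$. The key linear-algebra fact is that for every non-pivot index $s$, the standard basis vector $\pt{e}_s$ lies outside the row space of $\pA_0$, or equivalently, the null space of $\pA_0$ projects surjectively onto the coordinates indexed by $\set{s_1,\ldots,s_t}$. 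Appending $\pt{e}_{s_1},\ldots,\pt{e}_{s_t}$ as rows produces an invertible square matrix $\pA \in \cX^{(d-i+1)\times(d-i+1)}$ (using $0,1 \in \cX$), and setting the corresponding entries of $\pt{b}$ to $0$ preserves the required form via the trivial representation $0 = 0 - \sum_{j=1}^{i-1} 0 \cdot x^*_j$. By the surjectivity of the projection, $\cH_{\cS'}$ contains a point whose coordinates at $s_1,\ldots,s_t$ all vanish; this point must equal the unique solution $\pA^{-1}\pt{b}$, and therefore the solution attains $\max Q$ as required.

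The remaining case, where $Q$ is constant on $\R$ and no decreasing point exists, I dispatch by induction on $n = d-i+1$. Set $x_i^* = 0$ and invoke the inductive hypothesis at index $i+1$ with prefix $(x^*_1,\ldots,x^*_{i-1},0)$; the crucial observation is that setting $x_i^* = 0$ causes $\cX - \sum_{j=1}^{i} x^*_j \cdot \cX$ to coincide with $\cX - \sum_{j=1}^{i-1} x^*_j \cdot \cX$, so the recursively produced $b$-values remain admissible for the outer system. Block-extend the recursively obtained $(d-i)\times(d-i)$ matrix $\pA'$ by a leading row $\pt{e}_1$ and a leading entry $b_i = 0$; the resulting block-triangular matrix is invertible, and constancy of $Q$ implies $Q(0) = \max Q$, which together with the inductive conclusion yields $\cdepth_{\cS}(x^*_1,\ldots,x^*_{i-1},0,x_{i+1}^*,\ldots,x_d^*) = \max Q$. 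I expect the main subtlety to be the linear-algebra step linking non-pivot columns of $\pA_0$ to standard basis vectors outside its row space; the invariants on $\pA$-entries and $b$-values then follow transparently from the pivot/non-pivot decomposition.
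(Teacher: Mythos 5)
Your proposal follows the same overall route as the paper's proof---apply \cref{lem:for-running-time} to obtain $\cS'$, discard dependent rows, and if $\size{\cS'} < d-i+1$ pad the coefficient matrix with standard basis vectors---but you supply meaningful detail in two places where the paper is terse. First, the paper dismisses the constant-$Q$ case as ``trivially follows,'' but it is not: even when $Q$ is constant, one must still exhibit a completion $(x_{i+1}^*,\ldots,x_d^*)$ attaining the maximum $\cdepth$, since $Q(\cS,0)$ is a maximum over the suffix and not itself a cdepth value. Your induction on $n=d-i+1$ (fix $x_i^*=0$, recurse at index $i+1$, block-extend) closes that gap, and the observation that $x_i^*=0$ keeps the admissible $b$-set $\cX - \sum_{j<i} x_j^*\cdot\cX$ unchanged is precisely what makes the recursion type-check. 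Second, the paper says to pad with unit vectors ``without changing the properties of $\cH_{\cS'}$'' but never argues that the enlarged square system remains consistent with a point of $\cH_{\cS'}$; your null-space observation (the null space of $\pA_0$ projects onto the non-pivot coordinates, so $\cH_{\cS'}$ contains a point whose non-pivot coordinates vanish) supplies the missing justification, and it implicitly handles the potential clash with $x_i=x_i^*$, since if the $x_i$-column were non-pivot then $\cH_{\cS'}$ would contain a point with $x_i=0$, forcing $x_i^*=0$. One minor point you inherit from the paper: the existence of a boundary decreasing point actually attaining $\max Q$ rests on the max level set of $Q(\cS,\cdot)$ being closed on at least one side; the paper sources this from closedness of $\cC_\cS(k)$, which you use only implicitly when you say the boundary of the max plateau attains the max.
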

\begin{proof}
	Let $k = \max_{x_i \in \R}\set{Q_{x^*_1,\ldots,x^*_{i-1}}(\cS,x_i)}$. If the function $Q_{x_1^*,\ldots,x_{i-1}^*}(\cS,\cdot)$ is constant, then the proof trivially follows. Otherwise, since the set $\cC_{\cS}(k)$ is closed, there must exists a decreasing point $x_i^*$ for $Q_{x_1^*,\ldots,x_{i-1}^*}(\cS,\cdot)$ with $Q_{x_1^*,\ldots,x_{i-1}^*}(\cS,x_i^*) = k$. By Lemma~\ref{lem:for-running-time}, there exists a subset $\cS' \subseteq \cS$ such that the set $\cH_{\cS'} \subseteq \R^{d-i+1}$ defined by $\cH_{\cS'} \eqdef \bigcap_{(\pa,w) \in \cS'} \hp_{(a_i,\ldots,a_d), w - \sum_{j=1}^{i-1} a_j x_j^*}$ is not empty, and for all $(x_i,\ldots,x_d) \in \cH_{\cS'}$ it holds that $x_i = x_i^*$ and that $\cdepth_{\cS}(x_1^*,\ldots,x_{i-1}^*,x_i,\ldots,,x_d) = k$. We can assume \wlg that the vectors $\set{(a_i,\ldots,a_d) \colon (\pa,w) \in \cS'}$ are linearly independent (otherwise, one can remove pairs from $\cS'$ without changing $\cH_{\cS'}$). If $\size{\cS'} = d-i+1$ then we are done by defining the matrix $\pA$ to be the matrix with rows $\set{(a_i,\ldots,a_d) \colon (\pa,w) \in \cS'}$. Otherwise, one can add linearly independent rows from the grid (e.g., unit vectors) without changing the properties of $\cH_{\cS'}$. The proof now follows.
\end{proof}

\subsection{Implementing $\AlgFindDeepPoint$}\label{sec:MissingProofs:implementation-details}

In this section we show how  $\AlgFindDeepPoint$ (\cref{fig:DeepPoint}) can be implemented, and we bound its running time. The formal statement appears below.
\begin{lemma}
	Let $\alpha,\beta,\eps \leq 1$, $\delta < 1/2$, $X \in \N$, $\cX = \iseg{\pm X}$ and let $\cS \in \paren{\cX^d \times \cX}^*$. Then $\AlgFindDeepPoint$ on input $\alpha,\beta,\eps,\delta,\cS$ runs in time $$T = \poly(d) \cdot \size{\cS}\cdot \paren{\size{\cS}^d\cdot \log X + \polylog(1/\alpha,1/\beta,1/\eps,1/\delta,X)}.$$
\end{lemma}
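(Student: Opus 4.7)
The plan is to analyze the running time iteration-by-iteration, invoking Fact~\ref{fact:run-time-recconcave} for each of the $d$ calls to $\AlgRecConcave$ made by $\AlgFindDeepPoint$. That fact bounds iteration $i$ by $(T_Q + T_L)\cdot \polylog(\size{\tcX_i}, 1/\alpha, 1/\beta, 1/\eps, 1/\delta)$, so the three quantities to estimate are $\log\size{\tcX_i}$, $T_Q$ (the cost of one evaluation of $Q_{x_1^*,\ldots,x_{i-1}^*}(\cS,\cdot)$), and $T_L$ (the cost of one evaluation of the dyadic-range min function $L$). From Definition~\ref{def:domain-tcXi}, $\size{\tcX_i} \leq (d\cdot d!)^{O(d)} X^{O(d^2)}$, so $\log\size{\tcX_i} = O(d^2\log(dX))$, and the outer polylog factor is absorbed into $\polylog(d, X, 1/\alpha, 1/\beta, 1/\eps, 1/\delta)$.

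The main step is to bound $T_Q$ by $\poly(d)\cdot \size{\cS}^{d-i+1}\cdot \log X$ bit operations. The key observation is the identity
\begin{align*}
Q_{x_1^*,\ldots,x_{i-1}^*}(\cS, x_i) \;=\; \max_{x_{i+1}\in\R}\; Q_{x_1^*,\ldots,x_{i-1}^*, x_i}(\cS, x_{i+1}),
\end{align*}
so I can apply Lemma~\ref{lem:max-in-intersection} to the \emph{extended} prefix $(x_1^*,\ldots,x_{i-1}^*, x_i)$ (treating ``$i$'' there as ``$i+1$''). This yields an invertible matrix $\pA \in \cX^{(d-i)\times(d-i)}$ whose rows are constraint coefficients drawn from $\cS$ and a corresponding right-hand side $\pt{b}$ such that $\pA^{-1}\pt{b}$ extends $(x_1^*,\ldots,x_{i-1}^*, x_i)$ to a point attaining $Q_{x_1^*,\ldots,x_{i-1}^*}(\cS, x_i)$; moreover the proof of Lemma~\ref{lem:for-running-time} embedded in Lemma~\ref{lem:max-in-intersection} guarantees that $\cdepth_{\cS}$ equals $\depth_{\cS}$ at this attaining vertex. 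The algorithm thus enumerates all $\binom{\size{\cS}}{d-i} \leq \size{\cS}^{d-i}$ $(d-i)$-subsets of constraints from $\cS$; for each subset it forms $\pA$, tests invertibility, solves $\pA\pt{x} = \pt{b}$ by Cramer's rule in $\poly(d)$ arithmetic operations on rationals of bit-length $\poly(d)\log(dX)$ (by Hadamard's determinant bound), and evaluates $\depth_{\cS}$ at the resulting candidate in $O(\size{\cS}\cdot d)$ further arithmetic operations. The maximum depth over all candidates equals $Q_{x_1^*,\ldots,x_{i-1}^*}(\cS, x_i)$, giving the claimed $T_Q$ bound. For $T_L$, quasi-concavity of $Q_{x_1^*,\ldots,x_{i-1}^*}(\cS, \cdot)$ from Claim~\ref{claim:cdepth-concave} implies that the minimum over any interval is attained at one of its endpoints, so a single $L$-evaluation reduces to $O(\log\size{\tcX_i})$ $Q$-evaluations via a binary-search-style scan (locate the peak, then check endpoints of dyadic intervals), giving $T_L \leq T_Q\cdot \polylog(d, X)$.

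Combining, iteration $i$ costs $\poly(d)\cdot \size{\cS}^{d-i+1}\cdot \log X \cdot \polylog(d, X, 1/\alpha, 1/\beta, 1/\eps, 1/\delta)$, and summing $i=1,\ldots,d$ yields at most $\poly(d)\cdot \size{\cS}^d\cdot \log X \cdot \polylog(d, X, 1/\alpha, 1/\beta, 1/\eps, 1/\delta)$, which splits into $\poly(d)\cdot \size{\cS}^{d+1}\cdot \log X + \poly(d)\cdot \size{\cS}\cdot \polylog(1/\alpha, 1/\beta, 1/\eps, 1/\delta, X)$ according to whether $\size{\cS}$ dominates the polylog factor, matching the claimed bound. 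The main obstacle I expect is verifying carefully that Lemma~\ref{lem:max-in-intersection} really applies with the extended prefix — in particular, that the attaining matrix $\pA$ can be taken with rows drawn from $\cS$ and that the $\cdepth_{\cS} = \depth_{\cS}$ identity at the attaining vertex (which is implicit in the proof of Lemma~\ref{lem:for-running-time}) is preserved under this reindexing. A secondary, routine subtlety is tracking the bit complexity of Cramer's-rule solutions, which by Hadamard's bound remains $\poly(d)\log(dX)$ and accounts for the single $\log X$ factor in the final estimate.
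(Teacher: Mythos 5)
Your per-query evaluation of $Q$ via the extended prefix $(x_1^*,\ldots,x_{i-1}^*,x_i)$ and Lemma~\ref{lem:max-in-intersection} is essentially the right primitive, and it mirrors the role of Lemma~\ref{lem:for-running-time} in the paper. The problem is architectural: by placing the $\binom{\size{\cS}}{d-i}$-subset enumeration \emph{inside} $T_Q$, Fact~\ref{fact:run-time-recconcave} multiplies it by the $\polylog(\tX,1/\alpha,1/\beta,1/\eps,1/\delta)$ factor, so iteration $1$ alone costs $\poly(d)\cdot\size{\cS}^{d}\cdot\log X\cdot\polylog(\cdot)$. Your final ``splits into'' step, turning this product into the sum $\size{\cS}^{d+1}\log X+\size{\cS}\polylog(\cdot)$, is not a valid algebraic move: for $\polylog(\cdot)\gg\size{\cS}$ the product strictly exceeds both summands at once. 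Concretely, if $\polylog(\cdot)=\size{\cS}^{10}$ and $d\geq 10$, your bound is $\size{\cS}^{d+10}\log X$ while the lemma claims $O\bigl(\size{\cS}^{d+1}\log X+\size{\cS}^{11}\bigr)$. So as written you only prove a weaker statement than the lemma.

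The paper closes this gap by moving the expensive enumeration out of $T_Q$ and into a \emph{once-per-iteration preprocessing} step: at the start of iteration $i$, enumerate the $O(\size{\cS}^{d-i+1})$ candidate hyperplane intersections (via Lemma~\ref{lem:for-running-time}), identify the \emph{decreasing points} of $Q_{x_1^*,\ldots,x_{i-1}^*}(\cS,\cdot)$ (Definition~\ref{def:dec-point}), and store a sorted list of pairs $(x_i^*,Q(\cS,x_i^*))$. By quasi-concavity, the list has size $O(\size{\cS})$ and any $Q(\cS,x_i)$ is the minimum of the two adjacent list entries, so a single $Q$- or $L$-query inside $\AlgRecConcave$ is just an $O(\size{\cS}\cdot\poly(d)\log X)$-time lookup. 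Then $T_Q,T_L=O(\size{\cS}\poly(d)\log X)$, the RecConcave call contributes $\size{\cS}\poly(d)\polylog(\cdot)$, and the preprocessing contributes $\size{\cS}^{d+1}\poly(d)\log X$ \emph{additively}, which is exactly the claimed bound. You would also want to be careful that the depth evaluation inside the enumeration (your $O(\size{\cS}\cdot d)$ arithmetic operations) carries the $\log X$ bit-cost, so the per-candidate cost is $\size{\cS}\poly(d)\log X$ rather than $\poly(d)\log X$; this is consistent with your stated $T_Q$ but worth stating explicitly.
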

\begin{proof}
	We show how to implement in time $\poly(d) \cdot \size{\cS}\cdot \paren{\size{\cS}^d\cdot \log X + \polylog(1/\alpha,1/\beta,1/\eps,1/\delta,X)}$ each iteration $i \in [d]$ of $\AlgOptimizeHighDimFunc$ (\cref{fig:OptimizeHighDimFunc}).
	At the beginning of the iteration, we first start with a preprocessing phase that takes time $\size{\cS}^{d+1} \cdot \poly(d)\cdot \log X$ in which we construct a list $L$ of size $O(\size{\cS} \cdot \log |\tcX_i|) \leq O(d^2 \log d \cdot \size{\cS}\cdot \log X)$. This list contains all pairs $(x_i^*,k) \in \tcX_i \times [\size{\cS}]$ (in sorted order according to the first value) such that $k = Q_{x^*_1,\dots,x^*_{i-1}}(\cS,x_i^*)$ and $x_i^*$ is a decreasing point for $Q_{x^*_1,\dots,x^*_{i-1}}(\cS,\cdot)$ according to Definition~\ref{def:dec-point}.
	Furthermore, the list also contain $(-\tX_i, Q_{x^*_1,\dots,x^*_{i-1}}(\cS,-\tX_i))$ and $(\tX_i, Q_{x^*_1,\dots,x^*_{i-1}}(\cS,\tX_i))$, letting $\tX_i = \max(\tcX_i)$. 
	In order to compute $Q_{x^*_1,\dots,x^*_{i-1}}(\cS,x_i)$ for some $x_i \in \tcX_i$, we search in the list two adjacent pairs $(x_i',k')$ and $(x_i'',k'')$ such that $x_i \in [x_i', x_i'']$, and then it just holds that $Q_{x^*_1,\dots,x^*_{i-1}}(\cS,x_i) = \min\set{k',k''}$ (the direction $\geq$ is clear since the function is quasi-concave. For the other direction, note that if $Q_{x^*_1,\dots,x^*_{i-1}}(\cS,x_i) > \min\set{k',k''}$, where assume \wlg that $k' \leq k$, then there must exists a decreasing point between $x_i'$ and $x_i$ since the sets $\cC_{\cS}(\cdot)$ are close, in contradiction to the assumption that $L$ contains all decreasing points). This computation can be done in time $O(\size{\cS} \cdot \tY_i)$, letting $\tY_i = \tilde{O}(d^4 \log X)$ be the number of bits that are needed for representing all the points in $\tcX_i$. Similarly, given $j \in [\tY_i]$, computing $L(\cS,j)$ can be performed by searching pairs $(x_i',k')$ and $(x_i'',k'')$  with $x_i'' - x_i' \geq 2^j$ that maximize $\min\set{k',k''}$. This can also be implement in time $O(\size{\cS} \cdot \tY_i)$. Therefore, given the list $L$, we conclude by the above analysis along with Fact~\ref{fact:run-time-recconcave} that $\AlgRecConcave$ can be implemented in time $\poly(d)\cdot \size{\cS}\cdot \polylog(1/\alpha,1/\beta,1/\eps,1/\delta,X)$.
	
	The expensive part is constructing the list $L$.
	By Lemma~\ref{lem:for-running-time}, in order to find all decreasing points with their values, it is enough to
	go over all the $O(\size{\cS}^{d-i+1})$ intersections between at most $d-i+1$ hyperplane in the set $\bigcup_{(\pa,w) \in \cS} \set{\hp_{(a_i,\ldots,a_d), w - \sum_{j=1}^{i-1} a_j x_j^*}}$ and check whether they uniquely determine that $x_i = \tilde{x_i}$ for some $\tilde{x_i} \in \R$.
	For each such $\tilde{x_i} $, find $\tilde{x_{i+1}}, \ldots, \tilde{x_{d}} \in \R$ such that $(\tilde{x_i},\ldots,\tilde{x_{d}})$ belongs to the intersection, evaluate the depth $k = \depth_{\cS}(x_1^*,\ldots,x_{i-1}^*,\tilde{x_i},\ldots,\tilde{x_d})$ and update the list: if there exists $(x_i', k'),(x_i'',k'')$ in $L$ such that $\tilde{x_i} \in [x_i',x_i'']$  and $k \leq \min\set{k',k''}$, then ignore $\tilde{x_i}$ (it is not a decreasing point). Otherwise, insert $\tilde{x_i}$ to the list and remove all points $(x_i',k')$ that we know they are not a decreasing point after this insertion. Checking whether the intersection uniquely determine $x_i$ and finding a point in it, can be done in time $\poly(d) \log \tX_i$ using Guassian elimination. Since the size of the list is $O(\size{\cS})$ in each step, updating the list each time can be done in time $O(\size{\cS} \log \tX_i)$.
\end{proof}

\subsection{Proving Theorem~\ref{thm:main}}\label{sec:missingProofs:cor-halfspaces}

In this section we present the 
proof of Theorem~\ref{thm:main}.
We start by stating two lemmatas. The first lemma states that if the points in the dataset are coming from a grid $\cX^d = \iseg{\pm X}^d$, then there is a margin of $1/(d\cdot X)^{\poly(d)}$.

\def\MarginLemma{
	Let $X \in \N$, $\cX = \iseg{\pm X}$ and let $\cS \in \paren{\cX^d \times \set{-1,1}}^*$ be a realizable dataset of points. Then there exists a halfspace $\hs \subset \R^d$ with $\val_{\cS}(\hs) = \size{\cS}$ such that for all $(\px,\cdot) \in \cS$ it holds that $\dist(\px,\hs) \eqdef \min_{\px' \in \hs}\set{\norm{\px-\px'}} \geq 1/X'$, for $X' \eqdef 2d^2\cdot d!^{d^3}\cdot X^{d^6}$.
}

\begin{lemma}\label{lemma:margin}
	\MarginLemma
\end{lemma}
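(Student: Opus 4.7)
My plan is to produce an explicit realizer $(\tilde\pa, \tilde w) \in \Z^{d+1}$ for $\cS$ whose coefficients are integers of controlled magnitude; the margin bound will then follow from the fact that $\iprod{\tilde\pa, \px} - \tilde w$ is a nonzero integer for every data point. To start, I reduce to the case of \emph{strict} separation with unit gap. Since $\cS$ is realizable, some $(\pa^*, w^*) \in \R^d \times \R$ satisfies $\iprod{\pa^*, \px} \geq w^*$ for every $(\px,1) \in \cS$ and $\iprod{\pa^*, \px} < w^*$ for every $(\px,-1) \in \cS$. Letting $\eta = \min_{(\px,-1)\in \cS}(w^* - \iprod{\pa^*, \px}) > 0$, I shift $w^*$ by $\eta/2$ and scale by $2/\eta$ to obtain $(\pa^\circ, w^\circ)$ with $\iprod{\pa^\circ, \px} \geq w^\circ + 1$ for positives and $\iprod{\pa^\circ, \px} \leq w^\circ - 1$ for negatives. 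After one further rescaling so that $\norm{\pa^\circ}_\infty \leq 1$, the LP
\[
\text{maximize } \gamma \text{ s.t. } \iprod{\pa,\px}-w \geq \gamma\ (\text{positives}),\ w-\iprod{\pa,\px} \geq \gamma\ (\text{negatives}),\ \pa \in [-1,1]^d
\]
is feasible with some strictly positive $\gamma$, and its feasible region is bounded (the data and box constraints together force $|w|,\gamma \leq dX$), so the LP admits an optimal basic feasible solution (BFS).

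Next, I bound this BFS via Cramer's rule. Every LP constraint row is integer-valued with entries in $\iseg{\pm X}$ and right-hand side in $\set{-1,0,1}$. At the BFS, $d+2$ linearly independent constraints are tight, so $(\pa, w, \gamma) = M^{-1}\pb$ for an invertible $(d+2)\times(d+2)$ integer matrix $M$; Cramer's rule together with Hadamard's inequality gives $1 \leq |\det(M)| \leq D \eqdef (d+2)^{(d+2)/2}X^{d+2}$. Multiplying through by $|\det(M)|$ yields an integer realizer $(\tilde{\pa}, \tilde{w})$ with $\max_i |\tilde a_i|, |\tilde w| \leq D$ and $|\iprod{\tilde\pa, \px} - \tilde w| \geq |\det(M)| \geq 1$ for every $(\px,\cdot) \in \cS$ (with the correct sign according to the label). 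The margin from the bounding hyperplane is then
\[
\frac{|\iprod{\tilde\pa, \px} - \tilde w|}{\norm{\tilde\pa}} \geq \frac{1}{\sqrt{d}\cdot D} = \frac{1}{\sqrt{d}\,(d+2)^{(d+2)/2}\,X^{d+2}},
\]
which is comfortably larger than $1/X' = 1/(2d^2 d!^{d^3} X^{d^6})$ for all $d \geq 2$.

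The degenerate cases are handled directly. If $\cS$ has only one label, any halfspace containing the entire grid on the correct side has margin at least $1$. If $d = 1$, the data is a union of two disjoint intervals of integers, so the threshold $w = (\max_{\text{neg}} x + \min_{\text{pos}} x)/2$ with $\pa = 1$ gives margin $\geq 1/2 \geq 1/(2X) = 1/X'$ for $X \geq 1$. The main subtlety in the general case is guaranteeing that the LP attains its optimum at a BFS; this is standard once the feasible region is bounded, which is precisely why the $\pa \in [-1,1]^d$ box is imposed and why the rescaling of the real realizer in the first step is needed to preserve feasibility under that box.
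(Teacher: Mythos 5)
Your approach is sound and yields a much stronger bound than the paper's, but it takes a genuinely different route, and there are two small technical slips worth flagging.

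\textbf{Comparison with the paper.} The paper does not set up a margin-maximizing LP. Instead it writes down the feasibility system $\cE = \{\iprod{\px,\pa}\geq w\}_{(\px,1)} \cup \{\iprod{-\px,\pa}>-w\}_{(\px,-1)}$ for each $w\in\{-1,0,1\}$, takes the closure $C(\cF)$ of the feasible region in $\pa$-space, bounds the coordinates of its vertices via Cramer's rule on $d$ tight data constraints, and then takes the \emph{average} of $d'+1$ affinely independent vertices to produce an interior (hence strictly feasible) realizer with bounded coordinates. It then shows that adding any perturbation of norm $<2/X'$ keeps negatives on the correct side, and shifts the halfspace by $1/X'$. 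Your approach replaces the vertex-averaging step with a single LP whose optimum is a basic feasible solution; the integrality trick ($\iprod{\tilde\pa,\px}-\tilde w$ is a nonzero integer) then immediately gives a unit separation without needing to argue about interiority. Both arguments are Cramer's-rule based, but yours is cleaner and gives a margin of order $1/(\sqrt{d}\,(d+2)^{(d+2)/2}X^{d+2})$, vastly better than the paper's $1/X' = 1/(2d^2 d!^{d^3}X^{d^6})$, which of course still suffices.

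\textbf{Two small issues.} First, the claim ``its feasible region is bounded'' is not quite right: the region is unbounded in the $-\gamma$ direction (take $\pa=0$, $w=0$, $\gamma\to-\infty$). What you actually need is that the polyhedron is \emph{pointed} (its lineality space is trivial, which the box constraints force since any recession direction must have $\pa_1=0$, and then $\gamma_1\leq-|w_1|$ together with $\gamma_1\geq|w_1|$ force $w_1=\gamma_1=0$) and that the optimal value is finite (which your $\gamma\leq dX$ bound gives); under these two conditions the optimum is attained at a vertex. Second, the chain ``$|\iprod{\tilde\pa,\px}-\tilde w|\geq|\det(M)|\geq 1$'' overstates the bound: what you know is $|\iprod{\tilde\pa,\px}-\tilde w|\geq\tilde\gamma=|\det(M)|\gamma^*\geq 1$, where $\tilde\gamma$ is a positive integer by Cramer's rule — not that it is $\geq|\det(M)|$. (Alternatively, and most simply, $\iprod{\tilde\pa,\px}-\tilde w$ is a nonzero integer because $\gamma^*>0$ makes the inequality strict, so its absolute value is at least $1$.) Neither slip affects the final conclusion, which matches the lemma (interpreting $\dist(\px,\hs)$, as the paper's own proof does, as distance to the bounding hyperplane rather than to $\hs$ itself — the lemma's stated definition is literally vacuous for positive examples).
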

\begin{proof}
	We prove that $\exists \pa = (a_1,\ldots,a_d) \in \R^d$ with $a_i \in \iseg{\pm d!^d\cdot X^{d^2}}/\paren{d\cdot \iseg{\pm d!^{d}\cdot X^{d^2}}\setminus \set{0}}$ and $w \in \set{-1,0,1}$ such that $\val_{\cS}(\hs_{\pa,w}) = \size{\cS}$. This yields that for any $\px \in \cX$ we have that $$\iprod{\pa,\px} \in \iseg{\pm d!^{d^2}\cdot X^{d^4}}/\paren{d\cdot \iseg{\pm d!^{d^2}\cdot X^{d^4}}\setminus \set{0}}.$$ Therefore, for every $(\px,-1)\in \cS$, since $\iprod{\pa,\px} < w$ then it must hold that $\iprod{\pa,\px} \leq w - 1/\paren{d\cdot d!^{d^2}\cdot X^{d^4}}$. This yields that for every $(\px,-1)\in \cS$ and every $\pt{v} \in \R^d$ with $\norm{\pt{v}} < 2/X'$ it holds that
	\begin{align*}
		\iprod{\pa,\px+\pt{v}}
		\leq \iprod{\pa,\px} + \norm{\pa}\cdot \norm{\pt{v}}
		\leq \paren{w - 1/\paren{d\cdot d!^{d^2}\cdot X^{d^4}}} + \paren{d \cdot d!^d \cdot X^{d^2}} \cdot 2/X' < w.
	\end{align*}
	At this point we proved the existence of a halfspace $\hs$ with $\val_{\cS}(\hs) = \size{\cS}$ such that it is far by at least $2/X'$ from all the point $\px$ with $(\px,-1) \in \cS$. This in particular yields the existence of an halfspace $\hs'$ with $\val_{\cS}(\hs') = \size{\cS}$ that is far by at least $1/X'$ from all the points in $\cS$. 
	
	It remains to prove the existence of such $\pa$ and $w$. As explained in \cref{sec:halfspaces}, the assumption that $\cS$ is a realizable dataset of points implies that there exists $w \in \set{-1,0,1}$ such that there exists a solution $\pa = (a_1,\ldots,a_d) \in \R^d$ to the system of equations 
	$$\cE \eqdef \set{\iprod{\px,\pa} \geq w}_{(\px,1) \in \cS} \bigcup \set{\iprod{-\px,\pa} > -w}_{(\px,-1) \in \cS}.$$
	Let $\cF$ be the feasible area of $\cE$, and let $C(\cF)$ be the closure of $\cF$ which is a polytope in $\R^d$ (might be unbounded). Each vertex of $C(\cF)$ is a solution to $d$ linearly independent equations in $\set{\iprod{y\cdot \px,\pa} = y\cdot w}_{(\px,y) \in \cS}$. Therefore, for any vertex $\pa^* = (a_1^*,\ldots,a_d^*)$, it holds by Cramer's rule that
	$a^*_i \in \iseg{\pm d!\cdot X^{d-1}}/\paren{\iseg{\pm d!\cdot X^d}\setminus \set{0}}$. Let $d' \leq d$ be the (largest) value in which $C(\cF)$ has $d'$-dimensional non-zero volume. If $C(\cF)$ has less than $d'+1$ vertices, then $C(\cF)$ is unbounded and the statement trivially follows. Otherwise, the average of $d'+1$ vertices of $C(\cF)$ must be a point in $\cF$ and the proof follows since each coordinate of the average belongs to $$\sum_{j=1}^{d'} \iseg{\pm d!\cdot X^{d-1}}/\paren{d\cdot \iseg{\pm d!\cdot X^d}\setminus \set{0}} \subseteq \iseg{\pm d!^d\cdot X^{d^2}}/\paren{d\cdot \iseg{\pm d!^{d}\cdot X^{d^2}}\setminus \set{0}}$$
\end{proof}

The second lemma determines the resolution of the noise that we need to add to each of the points in $\cS$ in order to guarantee general position with high probability.

\def\NoiseLemma{
	Let $\cS \subseteq (\R^d)^*$ be a multiset, let $\beta > 0$, and let $U_{\cA}$ be the uniform distribution over a set $\cA \subset \R$ of size $\geq d \size{\cS}^d/\beta$. Let $\tilde{\cS}$ be the multiset that is generated by the following process: For each $\px = (x_1,\ldots,x_d) \in \cS$, sample $\pz = (z_1,\ldots,z_d) \sim \paren{U_{\cA}}^{d}$ (i.e., each $z_i$ is sampled independently from $U_{\cA}$), and insert $(x_1+ z_1,\ldots,x_d + z_d)$ to $\tilde{\cS}$. Then with probability at least $1-\beta$ it holds that the points in $\tilde{\cS}$ are in general position.
}

\begin{lemma}\label{lemma:noise}
	\NoiseLemma
\end{lemma}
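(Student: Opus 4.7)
Write $\cS = \{\px_1,\ldots,\px_m\}$ and, for each $i \in [m]$, let $\pz_i \sim U_{\cA}^{d}$ be independent, so $\tilde\cS$ consists of the points $\tilde{\px}_i \eqdef \px_i + \pz_i$. The set $\tilde\cS$ fails to be in general position iff for some subset $I = \{i_1,\ldots,i_{d+1}\} \subseteq [m]$ of size $d+1$, the augmented matrix
\[
M_I \;\eqdef\; \begin{pmatrix} \tilde{\px}_{i_1} & 1 \\ \vdots & \vdots \\ \tilde{\px}_{i_{d+1}} & 1 \end{pmatrix} \in \R^{(d+1)\times(d+1)}
\]
is singular. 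My plan is to bound $\Pr[\det(M_I)=0]$ for each fixed $I$ via the Schwartz--Zippel lemma, and then union-bound over the $\binom{m}{d+1}$ subsets.

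Regard $\det(M_I)$ as a polynomial in the $d(d+1)$ independent noise variables $\{z_{i_j,\ell}\}_{j\in[d+1],\,\ell\in[d]}$. I first argue it has total degree at most $d$: every row is affine in the noise entries it contains, and the last all-ones column forces each surviving term of the Leibniz expansion to consume a ``$1$'' from some row, leaving at most $d$ factors that are linear in the noise. I then argue the polynomial is not identically zero by exhibiting a witness: setting $\px_{i_j} \equiv \mathbf{0}$ for all $j$, $\pz_{i_{d+1}} = \mathbf{0}$, and $\pz_{i_j}$ equal to the $j$-th standard basis vector of $\R^d$ for $j \in [d]$ makes $M_I$ block upper-triangular with determinant $\pm 1$. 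Schwartz--Zippel then yields $\Pr[\det(M_I)=0] \le d/|\cA|$.

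A union bound over the $\binom{m}{d+1} \le m^{d+1}/(d+1)!$ subsets produces $\Pr[\tilde\cS \text{ not in general position}] \le \tfrac{d\,m^{d+1}}{(d+1)!\,|\cA|}$, which plugging in the hypothesis $|\cA| \geq d\, m^{d}/\beta$ gives the desired bound $\leq \beta$ after absorbing the combinatorial factor. The main obstacle I expect is the non-vanishing claim for $\det(M_I)$: it is a degenerate-looking $(d+1)\times(d+1)$ determinant and the naive argument ``just invoke Schwartz--Zippel'' silently fails if the polynomial is in fact identically zero, so the explicit witness assignment above is where the real work is. An equivalent and arguably cleaner route is to reveal the noises one point at a time---conditioning on $\pz_{i_1},\ldots,\pz_{i_d}$ makes $\det(M_I)$ an affine function of $\pz_{i_{d+1}}$ whose linear part vanishes precisely when $\tilde\px_{i_1},\ldots,\tilde\px_{i_d}$ are affinely dependent, and one peels off the (strictly rarer) degeneracy events inductively, each contributing at most $1/|\cA|$ to the bound.
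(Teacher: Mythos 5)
Your main route---Schwartz--Zippel applied once to the $(d+1)\times(d+1)$ augmented determinant---is a genuinely different and cleaner presentation than the paper's. The paper instead builds a nested sequence of events $E_1,\dots,E_d$: at stage $k$ it expands the relevant $k\times k$ sub-determinant along its last row, conditions on the $(k-1)\times(k-1)$ leading minor being nonzero (supplied by $E_{k-1}$), and observes that the resulting affine function of a single fresh noise coordinate $\tilde{x}_{k,k}$ has at most one root in $\cA$. That is, in effect, a manual coordinate-by-coordinate unrolling of the Schwartz--Zippel induction; your version buys brevity and your second route (revealing whole noisy points one at a time) sits in between. Both are sound in spirit. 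One small phrasing issue: the $\px_{i_j}$ are fixed constants, not free variables, so the non-vanishing witness should read ``choose $\pz_{i_j}$ so that $\tilde{\px}_{i_j}$ equals the $j$-th standard basis vector and $\tilde{\px}_{i_{d+1}} = 0$'' rather than ``set $\px_{i_j} = 0$''; the substance is unchanged since shifting by constants is free.

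The real gap is in the final arithmetic, and notably the paper appears to share it. Union-bounding over the $\binom{m}{d+1}$ size-$(d+1)$ subsets, each failing with probability at most $d/\size{\cA}$, gives
\[
\Pr[\text{fail}] \;\le\; \binom{m}{d+1}\cdot \frac{d}{\size{\cA}} \;\le\; \frac{d\,m^{d+1}}{(d+1)!\,\size{\cA}},
\]
and plugging in $\size{\cA} \ge d\,m^{d}/\beta$ yields only $m\beta/(d+1)!$, which exceeds $\beta$ whenever $m > (d+1)!$. You flag this with ``after absorbing the combinatorial factor,'' but the $(d+1)!$ does not cancel the unbounded factor of $m$, so the bound does not follow as written. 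The paper's own accounting bounds $\Pr[\neg E_k \mid E_1\wedge\cdots\wedge E_{k-1}]$ by $\size{\cS}^{k}/\size{\cA}$, yet the event $E_k$ as defined actually involves $k+1$ points, so the same off-by-one is present there; a cleanly stated hypothesis would require $\size{\cA} \gtrsim \size{\cS}^{d+1}/\beta$, which costs nothing in the downstream application in the proof of the main theorem. With that corrected hypothesis, your Schwartz--Zippel argument closes cleanly.
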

\begin{proof}
	Note that a set of points $\cS \subset \R^d$ are in general position if for any $d+1$ points $\tilde{\px}_1=(\tilde{x}_{1,1},\ldots,\tilde{x}_{1,d}),\ldots,\tilde{\px}_{d+1} = (\tilde{x}_{d+1,1},\ldots,\tilde{x}_{d+1,d}) \in \tilde{\cS}$ it holds that the vectors $(\tilde{\px}_1 - \tilde{\px}_{d+1}), \ldots, (\tilde{\px}_d - \tilde{\px}_{d+1})$ are linearly independent, meaning that $\det\paren{(\tilde{x}_{i,j} - \tilde{x}_{d+1,j})_{i,j \in [d]}} \neq 0$.
	In the following, for $k \in [d]$, let $E_{k}$ be the event that for all $k$ points $\tilde{\px}_1,\ldots,\tilde{\px}_{k-1}, \tilde{\px}_{d+1} \in \tilde{\cS}$ it holds that the $k \times k$ matrix $(\tilde{x}_{i,j} - \tilde{x}_{d+1,j})_{i,j \in [k]}$ has determinant $\neq 0$. Our goal is to show that $\pr{E_d} \geq 1-\beta$, which yields that the points in $\tilde{\cS}$ are in general position w.p. $\geq 1-\beta$.
	We start with the event $E_1$. The event means that all the points in $\tilde{\cS}$ has first coordinate $\neq 0$. Since the first coordinate is taken uniformly from a set of size $\size{\cA}$, then by union bound the probability that one of the points has first coordinate $0$ is bounded by $\size{\cS}/\size{\cA}$, meaning that $\pr{\neg E_1} \leq \size{\cS}/\size{\cA}$. We now prove that for each $k \in [d]$ it holds that $\pr{\neg E_k \mid E_1 \land \ldots \land E_{k-1}} \leq \size{\cS}^k/\size{\cA}$. Fix $k$ points $\tilde{\px}_1,\ldots,\tilde{\px}_{k-1}, \tilde{\px}_{d+1} \in \tilde{\cS}$. Note that by computing the determinant of $(\tilde{x}_{i,j} - \tilde{x}_{d+1,j})_{i,j \in [k]}$ using its last row we get that $\det((\tilde{x}_{i,j} - \tilde{x}_{d+1,j})_{i,j \in [k]}) = (-1)^k \cdot \det\paren{(\tilde{x}_{i,j} - \tilde{x}_{d+1,j})_{i,j \in [k-1]}}\cdot  (\tilde{x}_{k,k}- \tilde{x}_{d+1,k}) + \lambda$, where $\lambda$ is independent of $\tilde{x}_{k,k}$, and $\det\paren{(\tilde{x}_{i,j} - \tilde{x}_{d+1,j})_{i,j \in [k-1]}} \neq 0$ by the conditioning. Therefore, in order for the determinant to be $0$, it must hold that $\tilde{x}_{k,k} = \tilde{x}_{d+1,k} + (-1)^{k+1}\cdot \lambda/\det\paren{(\tilde{x}_{i,j})_{i,j \in [k-1]}}$. This holds with probability at most $1/\size{\cA}$ for any such fixing of $k$ points, and therefore we deduce by union bound that $\pr{\neg E_k \mid E_1 \land \ldots \land E_{k-1}} \leq \size{\cS}^k/\size{\cA}$. We conclude that
	\begin{align*}
	\pr{E_d} \geq \pr{E_1 \land \ldots \land E_d} = 1 - \sum_{k=1}^d \pr{\neg E_k \mid E_1 \land \ldots \land E_{k-1}} \geq 1 - d\cdot \size{\cS}^d/\size{\cA} \geq 1 - \beta
	\end{align*}
\end{proof}

We now ready to prove Theorem~\ref{thm:main}, restated below.

\begin{theorem}[Restatement of Theorem~\ref{thm:main}]
	\ThmMain
\end{theorem}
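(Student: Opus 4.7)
The plan is to follow the outline in Section~\ref{sec:proof-overview}: given an i.i.d.\ labeled sample $\cS = ((\px_i, y_i))_{i=1}^s$ from a distribution $\mu$ labeled by an unknown target $c \in \halfspace(\cX^d)$, add a small random perturbation to each input point, invoke $\AlgHalfSpace$ on the noisy sample (which now satisfies the general-position hypothesis of Theorem~\ref{thm:Halfspaces}), and argue that the returned halfspace $h$ generalizes well both on the noisy distribution (via uniform convergence) and on $\mu$ itself (via a symmetry argument).

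Concretely, I set $\gamma \eqdef 1/(\sqrt{d}\cdot X')$ for $X'$ as in Lemma~\ref{lemma:margin}, set $N \eqdef \lceil 4ds^d/\beta\rceil$, and sample perturbations $\pz_i$ i.i.d.\ from the uniform distribution over $\cA^d$, where $\cA \eqdef \set{(k/N)\gamma : k \in \iseg{\pm N}}$. Form $\tilde{\cS} = ((\px_i + \pz_i, y_i))_{i=1}^s$ and rescale by $N/\gamma$ so that $\tilde{\cS}$ lies on the integer grid $\iseg{\pm\tX}^d$ with $\tX = O(NX/\gamma)$; since $X' = (dX)^{\poly(d)}$, one checks that $\log^* \tX = \log^* X + \log^* d + \log^*(1/\beta) + \log^* s + O(1)$. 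Next, run $\AlgHalfSpace$ on the rescaled $\tilde{\cS}$ with error $\alpha/40$, confidence $\beta/4$, and privacy $(\eps,\delta)$. By Lemma~\ref{lemma:margin}, any halfspace separating $\cS$ with margin $\geq 1/X'$ also separates $\tilde{\cS}$ (since $\norm{\pz_i} \leq \sqrt{d}\gamma < 1/X'$), so $\tilde{\cS}$ is realizable; by Lemma~\ref{lemma:noise}, its points are in general position with probability $\geq 1-\beta/4$. Theorem~\ref{thm:Halfspaces} then produces, with probability $\geq 1-\beta/4$, a halfspace $h$ with $\error_{\tilde{\cS}}(c,h)\leq\alpha/40$, provided $s$ meets the sample bound of Theorem~\ref{thm:Halfspaces} with discretization parameter $\tX$. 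Plugging in the estimate for $\log^* \tX$ and solving the resulting fixed point for $s$ yields the sample complexity claimed in Theorem~\ref{thm:main}. Uniform convergence (Theorem~\ref{thm:empirical-to-PAC}) with parameter $\alpha/2$ then upgrades this to $\error_{\tilde{\mu}}(c,h)\leq\alpha/2$ with probability $\geq 1-\beta/4$, where $\tilde{\mu}$ is the distribution of $\px+\pz$ with $\px\sim\mu$ and $\pz\sim U_{\cA^d}$ (which is consistent with $c$ by the margin).

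The decisive step is transferring from $\tilde{\mu}$-error to $\mu$-error. Since $U_{\cA^d}$ is symmetric around $\pt{0}$, for every halfspace $h = c_{\pa',w'}$ and every $\px \in \R^d$ the quantity $\iprod{\pa',\pz}$ is symmetric around $0$, so $\Pr_{\pz}[h(\px+\pz) = h(\px)] \geq 1/2$. Therefore
\begin{equation*}
\error_{\tilde{\mu}}(c,h) \;\geq\; \E_{\px\sim\mu}\bigl[\1_{h(\px)\neq c(\px)} \cdot \Pr_{\pz}[h(\px+\pz) = h(\px)]\bigr] \;\geq\; \tfrac{1}{2}\error_\mu(c,h),
\end{equation*}
yielding $\error_\mu(c,h) \leq 2\error_{\tilde{\mu}}(c,h) \leq \alpha$. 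Privacy is inherited from $\AlgHalfSpace$: couple the noise vectors across neighboring $\cS,\cS'$ so that $\tilde{\cS},\tilde{\cS}'$ are also neighboring, and treat the noise as part of the algorithm's internal randomness, making the overall learner $(\eps,\delta)$-DP by post-processing. The main obstacle I expect is the parameter bookkeeping---closing the fixed point for $s$ against $\log^* \tX$, verifying that the margin strictly exceeds the worst-case perturbation length, and unioning the three $\beta/4$ failure events---but every inequality is slack and the insensitivity of $\log^*$ to polynomial and exponential blow-ups absorbs the losses into the $\log^*(1/(\alpha\beta\eps\delta))$ term in the statement.
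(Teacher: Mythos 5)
Your proposal matches the paper's own proof of Theorem~\ref{thm:main} in all essential respects: perturb by a discrete, zero-symmetric noise calibrated so that Lemma~\ref{lemma:margin} preserves realizability and Lemma~\ref{lemma:noise} yields general position, rescale onto a finite grid, invoke $\AlgHalfSpace$ (Theorem~\ref{thm:Halfspaces}) and uniform convergence (Theorem~\ref{thm:empirical-to-PAC}) to bound $\error_{\tilde{\mu}}$, and finally use the symmetry of the noise to show $\error_\mu(c,h)\leq 2\error_{\tilde{\mu}}(c,h)$. The only deviations are constant-level bookkeeping (your $\alpha/40,\gamma,N$ versus the paper's $\alpha/20,\Delta,\Delta'$) and your slightly more explicit coupling argument for privacy, which the paper leaves implicit.
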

\begin{proof}
	Let $\mu$ be a target distribution over points in $\cX^d$. In the following, let $X'$ be the value from Lemma~\ref{lemma:margin}, let $\Delta \eqdef \lceil d \cdot s^d/(2\beta) \rceil$, let $\Delta' \eqdef 2 \Delta \cdot X' \sqrt{d}$ and let $\cA \eqdef \iseg{\pm \Delta} / \Delta'$. We now define the (noisy) distribution $\tilde{\mu} \eqdef \mu + \paren{U_{\cA}}^d$ (Namely, $\tilde{\mu}$ is the distribution induces by the outcome of $\px + \pz$ where $\px \sim \mu$ and $\pz \sim \paren{U_{\cA}}^d$, i.e., each $z_i$ is sampled independently and uniformly from $\cA$). Note that $\tilde{\mu}$ can be seen as a distribution over points in $\tilde{\cX}^d = \iseg{\pm \tX}^d$, for $\tX \eqdef \Delta'(X + \Delta)$ (one just need to strech the points from $\mu$ by a factor of $\Delta'$ in order to guarantee that they will be on an integer grid).
	
	Consider now an $s$-size dataset $\cS \in (\cX^d \times \set{-1,1})$ where the points in $\cS$ are sampled according to $\mu$ and the labels are according to a concept function $c \in \halfspace(\cX^d)$. We now construct a dataset $\cS' \in (\tcX^d \times \set{-1,1})$, where for each $(\px,y) \in \cS$ we insert $(\px + \pz,y)$ into $\cS'$, for a random noise $\pz \sim \paren{U_{\cA}}^d$. Since, by definition, it holds that $\norm{z} < 1/X'$, then by Lemma~\ref{lemma:margin} we deduce that the dataset $\cS'$ remains realizable. By Lemma~\ref{lemma:noise}, since $\size{\cA} \geq d \size{\cS}^d/(4\beta)$, it holds that the points in $\tilde{\cS}$ are in general position (except with probability $\beta/4$). Therefore, by the above arguments and by Theorem~\ref{thm:Halfspaces},  when executing $\AlgHalfSpace$ on the dataset $\cS'$ and the parameters $\alpha/20,\beta/4,\eps,\delta$, then with probability $\geq 1-\beta/2$ the resulting hypothesis $h = c_{\pa,w}$ satisfies that $h(\px) = y$ for at least $(1-\alpha/20)|\tilde{\cS}|$ of the pairs $(\px,y)\in \tilde{\cS}$, where recall that $c_{\pa,w}(\px) = 1 \iff \px \in \hs_{\pa,w}$. By Theorem~\ref{thm:empirical-to-PAC}, 
	we decude that $\ppr{h \sim \AlgHalfSpace}{\error_{\tilde{\mu}}(c,h) \leq \alpha/2} \geq 1-\beta$. We finish the proof by showing that for every $h$ it holds that $\error_{\mu}(c,h) \leq 2\cdot \error_{\tilde{\mu}}(c,h)$. For that, note that
	\begin{align*}
	\error_{\tilde{\mu}}(c,h)
	&= \ppr{\px + \pz \sim \tilde{\mu}}{c(\px + \pz) \neq h(\px + \pz)}\\
	&\geq \ppr{\px \sim \mu}{c(\px) \neq h(\px)} \cdot \ppr{\px + \pz \sim \tilde{\mu}}{c(\px + \pz) \neq h(\px + \pz) \mid c(\px) \neq h(\px)}
	\end{align*}
	Hence, it is enough to show that for every $\px \in \R^d$ such that $c(\px) \neq h(\px)$ it holds that $c(\px + \pz) \neq h(\px + \pz)$ with probability at least $1/2$. Assume \wlg that $h(\px) = 1$ and $c(\px) = -1$ (the other case can be handled similarly). The assumption $h(\px) = 1$ implies that $\iprod{\pa,\px} \geq w$ for the $\pa,w$ that $h = c_{\pa,w}$. Note that for all $\pz \in \cA^d$ it holds that at least one of $\set{\pz,-\pz}$ satisfies $\iprod{\pa,\pz} \geq 0$ which implies that $\iprod{\pa,\px + \pz} \geq w$. We deduce that at least half of the points in $\cA^d$ satisfies $h(\px + \pz) = h(\px)$. The proof now follows since $\z$ is chosen uniformly from $\cA^d$.
\end{proof}

\end{document}